\definecolor{mine}{RGB}{205, 232, 248} 
\DeclareMathAlphabet{\mathpzc}{OT1}{pzc}{m}{it}
\DeclareMathOperator{\Var}{Var}
\definecolor{codegreen}{rgb}{0,0.6,0}
\definecolor{codegray}{rgb}{0.5,0.5,0.5}
\definecolor{codepurple}{rgb}{0.58,0,0.82}
\definecolor{backcolour}{rgb}{0.95,0.95,0.92}
\definecolor{mygreen}{rgb}{0,0.6,0}
\definecolor{mygray}{rgb}{0.5,0.5,0.5}
\definecolor{mymauve}{rgb}{0.58,0,0.82}
\lstdefinestyle{mystyle}{
    backgroundcolor=\color{backcolour},   
    commentstyle=\color{codegreen},
    keywordstyle=\color{magenta},
    numberstyle=\tiny\color{codegray},
    stringstyle=\color{codepurple},
    basicstyle=\ttfamily\footnotesize,
    breakatwhitespace=false,         
    breaklines=true,                 
    captionpos=b,                    
    keepspaces=true,                 
    numbers=left,                    
    numbersep=5pt,                  
    showspaces=false,                
    showstringspaces=false,
    showtabs=false,                  
    tabsize=2
}
\theoremstyle{plain}
\newtheorem{theorem}{Theorem}[section]
\newtheorem{lemma}[theorem]{Lemma}
\theoremstyle{definition}
\newtheorem{assumption}[theorem]{Assumption}
\theoremstyle{remark}
\title{Frustratingly Easy Regularization on Representation Can \\Boost Deep Reinforcement Learning}
\author{
Qiang He$^{1}$ \quad  Huangyuan Su$^2$\quad Jieyu Zhang$^3$  \quad  Xinwen Hou$^1$\\
$^1$Institute of Automation, Chinese Academy of Sciences, Beijing, China \\
$^2$Carnegie Mellon University, Pittsburgh, United States \\
$^3$University of Washington, Seattle, United States \\
{\tt\small qianghe97@gmail.com,  huangyus@andrew.cmu.edu, jieyuz2@cs.washington.edu, xinwen.hou@ia.ac.cn}
}
\begin{document}

\maketitle

\begin{abstract}
Deep reinforcement learning (DRL) gives the promise that an agent learns good policy from high-dimensional information, whereas representation learning removes irrelevant and redundant information and retains pertinent information. In this work, we demonstrate that the learned representation of the $Q$-network and its target $Q$-network should, in theory, satisfy a favorable \textit{distinguishable representation property}. Specifically, there exists an upper bound on the representation similarity of the value functions of two adjacent time steps in a typical DRL setting. However, through illustrative experiments, we show that the learned DRL agent may violate this property and lead to a sub-optimal policy. Therefore, we propose a simple yet effective regularizer called \underline{P}olicy \underline{E}valuation with \underline{E}asy Regularization on \underline{R}epresentation (PEER), which aims to maintain the distinguishable representation property via explicit regularization on internal representations. And we provide the convergence rate guarantee of PEER. Implementing PEER requires only one line of code. Our experiments demonstrate that incorporating PEER into DRL can significantly improve performance and sample efficiency. Comprehensive experiments show that PEER achieves state-of-the-art performance on all \textbf{4} environments on PyBullet, \textbf{9} out of \textbf{12} tasks on DMControl, and \textbf{19} out of \textbf{26} games on Atari. To the best of our knowledge, PEER is the first work to study the inherent representation property of $Q$-network and its target. Our code is available at \href{https://sites.google.com/view/peer-cvpr2023/}{https://sites.google.com/view/peer-cvpr2023/}.
\end{abstract}

\section{Introduction}
Deep reinforcement learning (DRL) leverages the function approximation abilities of deep neural networks (DNN) and the credit assignment capabilities of RL to enable agents to perform complex control tasks using high-dimensional observations such as image pixels and sensor information \cite{alpha,nature_dqn,starcraft,sac}. DNNs are used to parameterize the policy and value functions, but this requires the removal of irrelevant and redundant information while retaining pertinent information, which is the task of representation learning. As a result, representation learning has been the focus of attention for researchers in the field \cite{planet,laskin2020curl,dreamer,sacae,drq,decisionformer,trajectoryformer,schmidhuber1990making,DBLP:conf/iclr/JaderbergMCSLSK17}. In this paper, we investigate the inherent representation properties of DRL.

The action-value function is a measure of the quality of taking an action in a given state. In DRL, this function is approximated by the action-value network or $Q$-network. To enhance the training stability of the DRL agent, \citet{nature_dqn} introduced a target network, which computes the target value with the frozen network parameters. The weights of a target network are either periodically replicated from learning $Q$-network, or exponentially averaged over time steps. Despite the crucial role played by the target network in DRL, previous studies\citep{unreal,pbl,cpc,laskin2020curl,lyle2021effect, drq, drqv2,ghosh2020representations,rad,bellemare2019geometric, dadashi2019value, eysenbach2021information,lyle2021understanding,planet,dreamer,sacae,decisionformer,trajectoryformer,schmidhuber1990making,DBLP:conf/iclr/JaderbergMCSLSK17} have not considered the representation property of the target network. In this work, we investigate the inherent representation property of the $Q$-network. Following the commonly used definition of representation of $Q$-network \cite{Levine17,dabney21,lyle2021effect}, the $Q$-network can be separated into a nonlinear encoder and a linear layer, with the representation being the output of the nonlinear encoder. By employing this decomposition, we reformulate the Bellman equation \cite{rl} from the perspective of representation of $Q$-network and its target.  We then analyze this formulation and demonstrate theoretically that a favorable \textit{distinguishable representation property} exists between the representation of $Q$-network and that of its target. Specifically, there exists an upper bound on the representation similarity of the value functions of two adjacent time steps in a typical DRL setting, which differs from previous work.

We subsequently conduct experimental verification to investigate whether agents can maintain the favorable distinguishable representation property. To this end, we choose two prominent DRL algorithms, TD3~\cite{td3} and CURL~\cite{laskin2020curl} (without/with explicit representation learning techniques). The experimental results indicate that the TD3 agent indeed maintains the distinguishable representation property, which is a positive sign for its performance. However, the CURL agent fails to preserve this property, which can potentially have negative effects on the model's overall performance.

These theoretical and experimental findings motivate us to propose a simple yet effective regularizer, named \underline{P}olicy \underline{E}valuation with \underline{E}asy Regularization on \underline{R}epresentation (PEER).  PEER aims to ensure that the agent maintains the distinguishable representation property via explicit regularization on the $Q$-network's internal representations. Specifically, {PEER} regularizes the policy evaluation phase by pushing the representation of the $Q$-network away from its target. Implementing PEER requires only one line of code.  Additionally, we provide a theoretical guarantee for the convergence of PEER.

We evaluate the effectiveness of PEER by combining it with three representative DRL methods TD3~\cite{td3}, CURL~\cite{laskin2020curl}, and DrQ~\cite{drq}. The experiments show that PEER effectively maintains the distinguishable representation property in both state-based PyBullet \cite{bullet3} and pixel-based DMControl \cite{dm-control} suites. Additionally, comprehensive experiments demonstrate that PEER outperforms compared algorithms on four tested suites PyBullet, MuJoCo~\cite{mujoco}, DMControl, and Atari~\cite{atari}. Specifically, PEER achieves state-of-the-art performance on \textbf{4} out of \textbf{4} environments on PyBullet, \textbf{9} out of \textbf{12} tasks on DMControl, and \textbf{19} out of \textbf{26} games on Atari. Moreover, our results also reveal that combining algorithms (e.g., TD3, CURL, DrQ) with the PEER loss outperforms their respective backbone methods. This observation suggests that the performance of DRL algorithms may be negatively impacted if the favorable distinguishable representation property is not maintained. The results also demonstrate that the PEER loss is orthogonal to existing representation learning methods in DRL.

Our contributions are summarized as follows. (i) We theoretically demonstrate the existence of a favorable property, \textit{distinguishable representation property}, between the representation of $Q$-network and its target. (ii) The experiments show that learned DRL agents may violate such a property, possibly leading to sub-optimal policy. To address this issue, we propose an easy-to-implement and effective regularizer {PEER} that ensures that the property is maintained. To the best of our knowledge, the PEER loss is the first work to study the inherent representation property of $Q$-network and its target and be leveraged to boost DRL. (iii) In addition, we provide the convergence rate guarantee of PEER. (iv) To demonstrate the effectiveness of {PEER}, we perform comprehensive experiments on four commonly used RL suits PyBullet, MuJoCo, DMControl, and Atari suites. The empirical results show that PEER can dramatically boost state-of-the-art representation learning DRL methods.
\section{Preliminaries}\label{sec: preliminaries}

DRL aims to optimize the policy through return, which is defined as $R_t=\sum_{i=t}^{T}\gamma^{i-t} r(s_i, a_i)$. 

\textbf{$Q$-network and its target.} Action value function $Q^\pi(s,a)$ represents the quality of a specific action $a$ in a state $s$. Formally, the action value (Q) function  is defined as
\begin{equation}
	Q^\pi(s,a) = \mathbb{E}_{\tau \sim \pi, p} [R_{\tau} | s_0 = s, a_0 = a],
\end{equation}
where trajectory $\tau$ is a state-action sequence $(s_0, a_0, s_1, a_1, s_2, a_2 \cdots)$ induced by policy $\pi$ and transition probability function $p$. A four-tuple $(s_t, a_t, r_t, s_{t+1})$ is called a transition.
 The $Q$ value can be recursively computed by Bellman equation \cite{rl}
\begin{equation}
Q^\pi(s,a) = r(s,a) + \gamma \mathbb{E}_{s',a'} [Q^\pi(s',a')],
\label{eq: Q-learning}
\end{equation}
where $s' \sim p(\cdot|s,a)$ and $a' \sim \pi(s')$. The process of evaluating value function is known as the policy evaluation phase. To stabilize the training of DRL, \citet{nature_dqn} introduced a target network to update the learning network with $\theta' \leftarrow \eta \theta + (1-\eta) \theta'$, where $\eta$ is a small constant controlling the update scale. $\theta$ is the parameters of the $Q$-network. And $\theta'$ denotes the parameters of the target network.

\textbf{Representation of $Q$-network.}
We consider a multi-layer neural network representing the Q function parameterized by $\Theta$. Let $\Theta_i$ represent the parameters of $i-$th layer, $\Theta_{-1}$ represent the parameters of the last layer, and $\Theta_{+}$ represent the parameters of the neural networks except for those of the last layer. The representation $\Phi$ of the $Q$-network is defined as
	\begin{equation}
		Q(s,a; \Theta) = \left \langle \Phi(s,a; \Theta_+), \Theta_{-1}\right \rangle.
		\label{eq: def representation}
	\end{equation}
One intuitive way to comprehend the representation of the $Q$-network is to view the network as composed of a nonlinear encoder and a linear layer. The representation of the $Q$-network is the output of the encoder~\cite{boyan99,Levine17,dabney21,a_geomentrix_perspective, lyle2021effect}. 
 \section{Method}

In this section, we start with a theoretical analysis of the $Q$-network and demonstrate the existence of the distinguishable representation property. The preliminary experiments  (\cref{fig: alpha visuallization}) reveal the desirable property may be violated by agents learned with existing methods. To ensure agents maintain such property and therefore alleviate the potential negative impact on model performance, we propose \underline{P}olicy \underline{E}valuation with \underline{E}asy Regularization on \underline{R}epresentation (PEER), a simple yet effective regularization loss on the representation of the Q-/target network.
Finally, we employ a toy example to demonstrate the efficacy of the proposed PEER. We defer all the proofs to Appendix \cref{appendix: proof}.
\begin{figure*}[!htbp]
	\centering
  \begin{subfigure}{0.25\linewidth}
     \includegraphics[width=1\textwidth]{./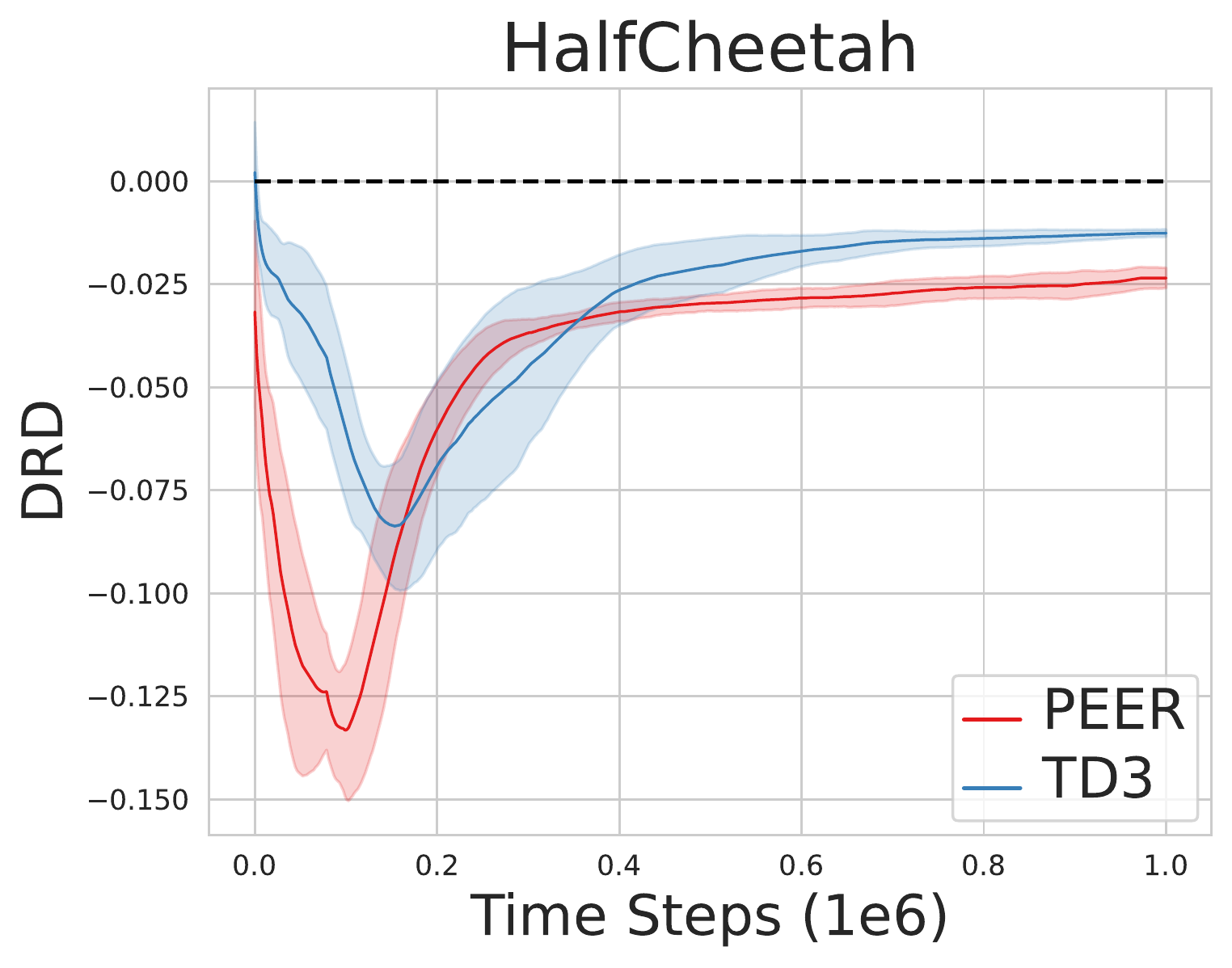}
 \end{subfigure}
\hspace{-0.1in}
  \begin{subfigure}{0.25\linewidth}
     \includegraphics[width=1\textwidth]{./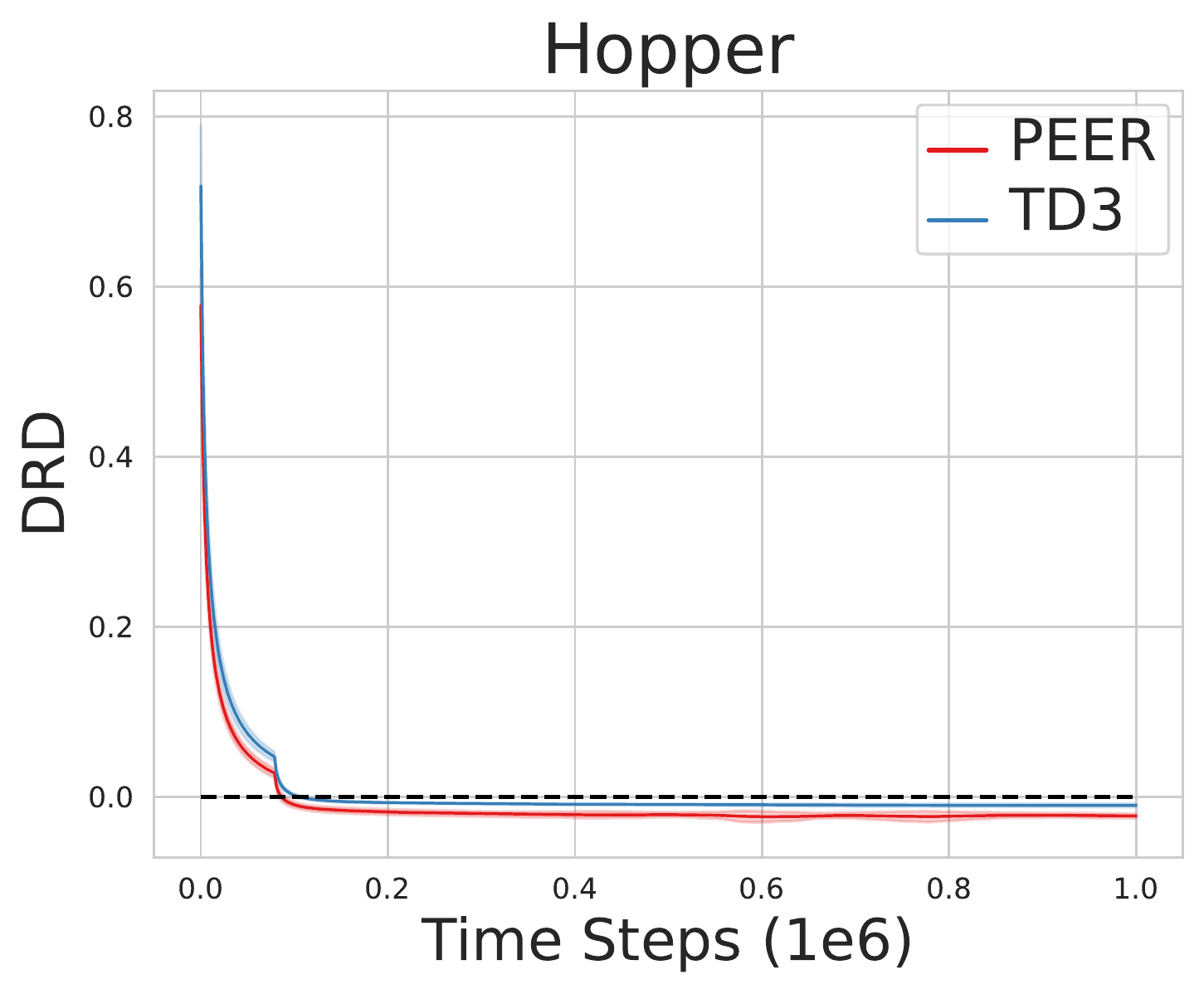}
 \end{subfigure}
\hspace{-0.1in}
   \begin{subfigure}{0.25\linewidth}
     \includegraphics[width=1\textwidth]{./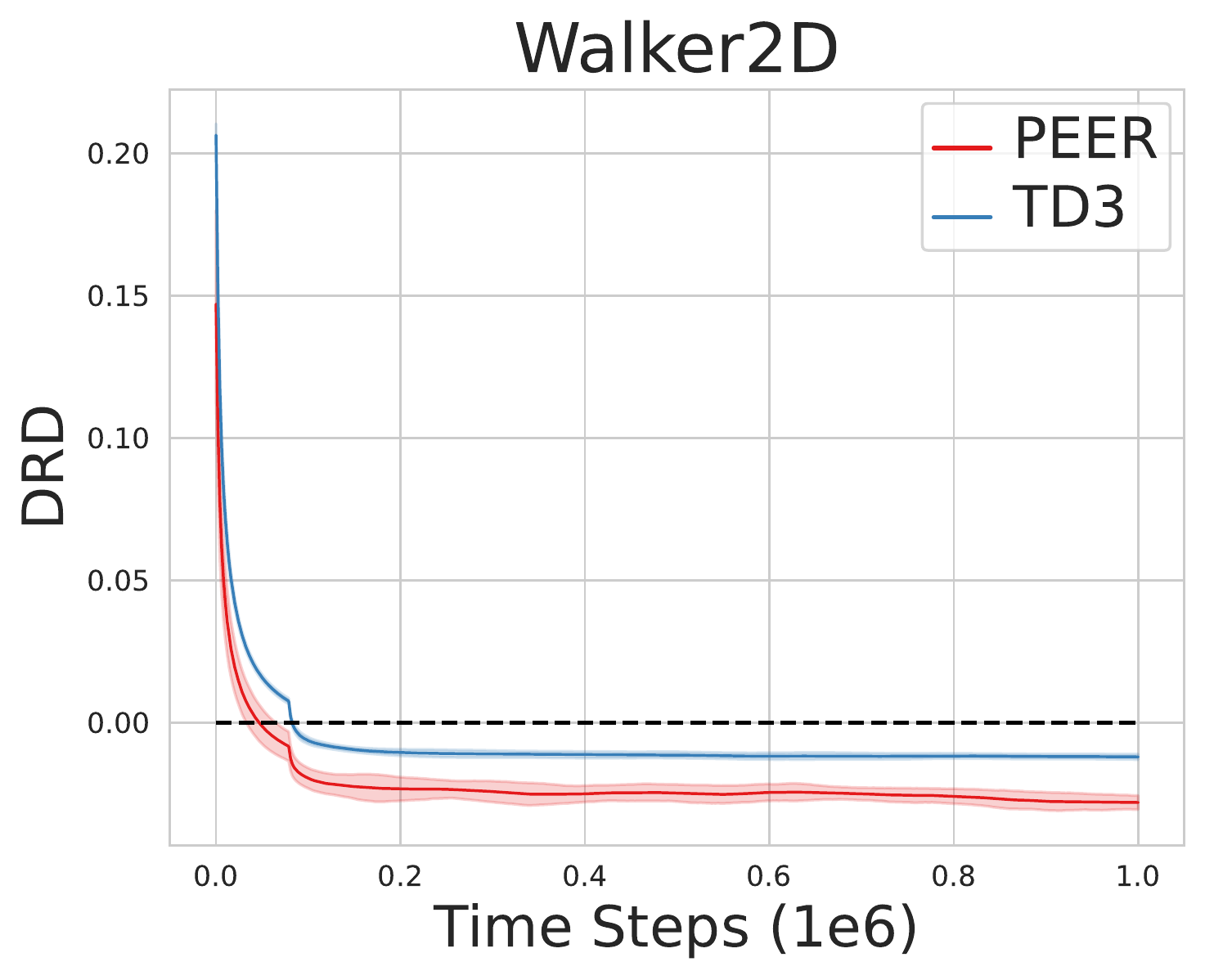}
 \end{subfigure}
\hspace{-0.1in}
   \begin{subfigure}{0.25\linewidth}
     \includegraphics[width=1\textwidth]{./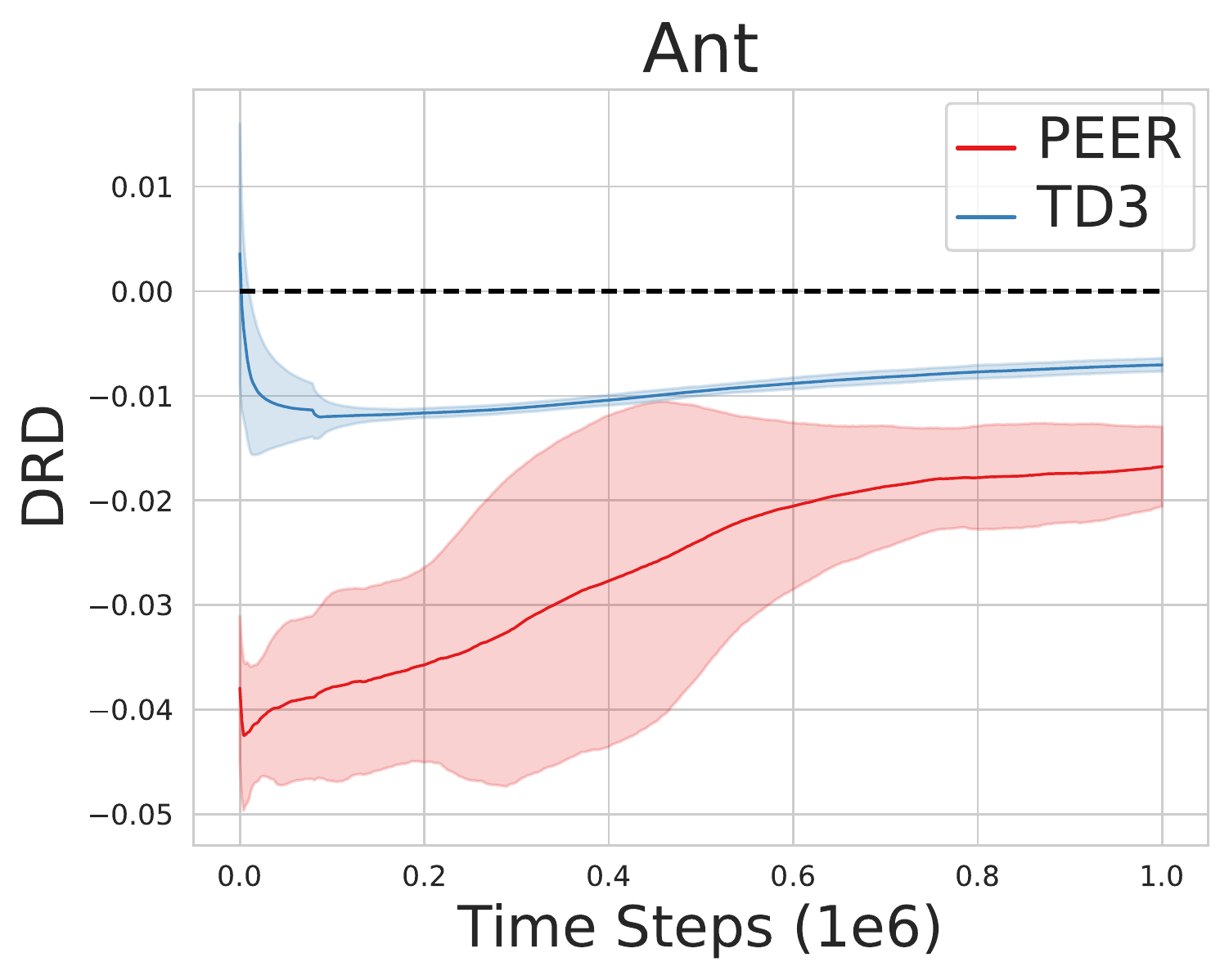}
 \end{subfigure}
   \begin{subfigure}{0.25\linewidth}
     \includegraphics[width=1\textwidth]{./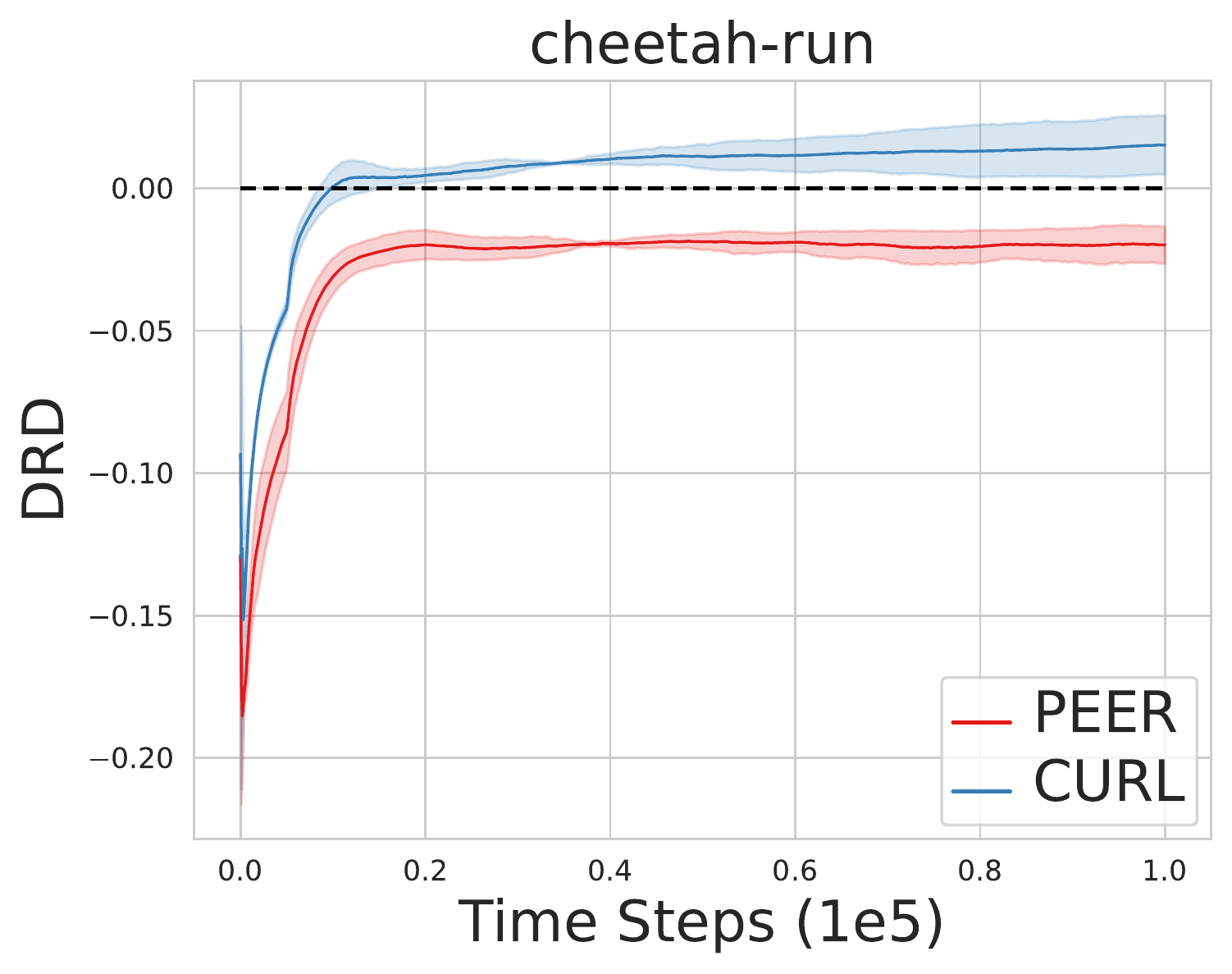}
 \end{subfigure}
\hspace{-0.1in}
   \begin{subfigure}{0.25\linewidth}
     \includegraphics[width=1\textwidth]{./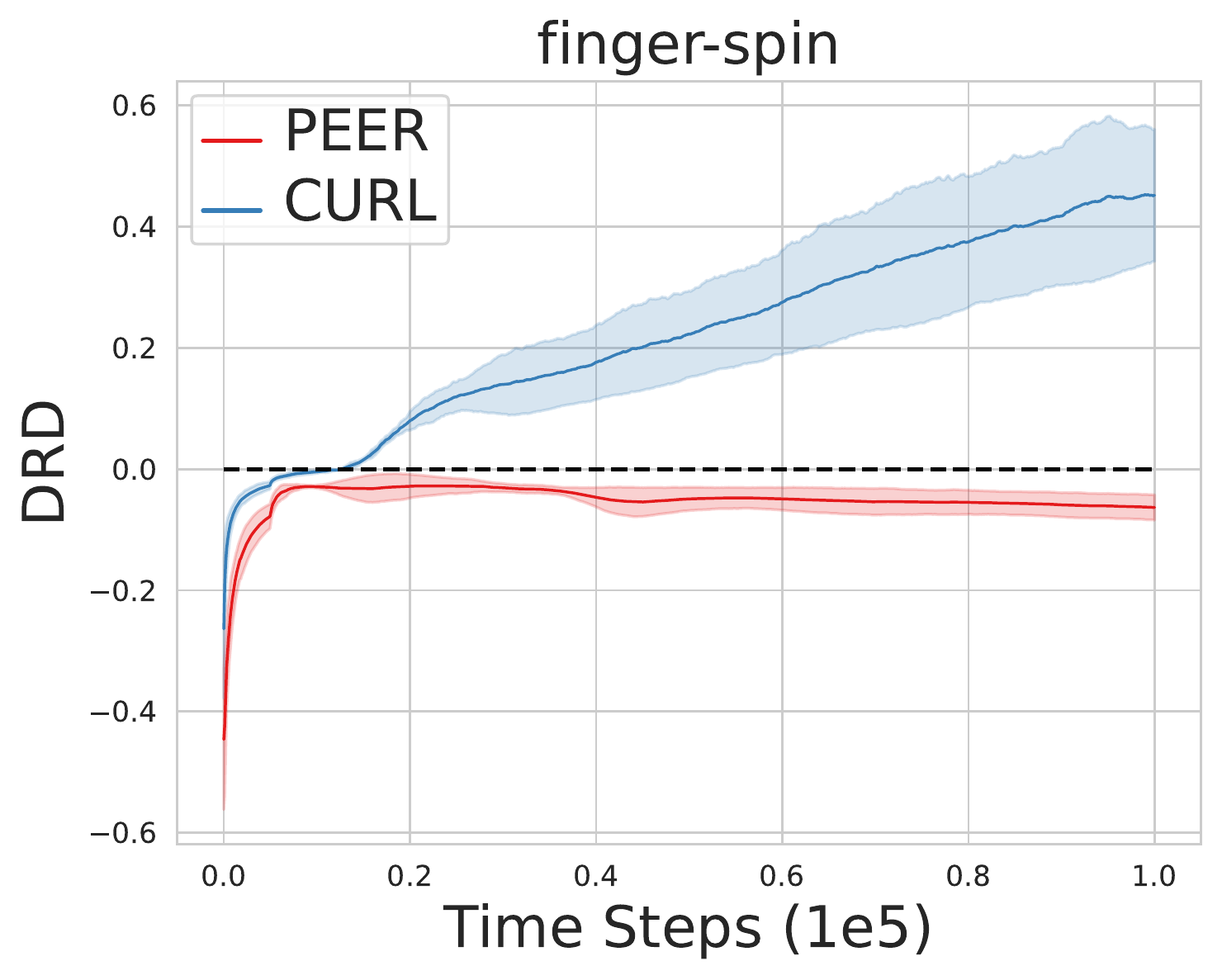}
 \end{subfigure}
\hspace{-0.1in}
   \begin{subfigure}{0.25\linewidth}
     \includegraphics[width=1\textwidth]{./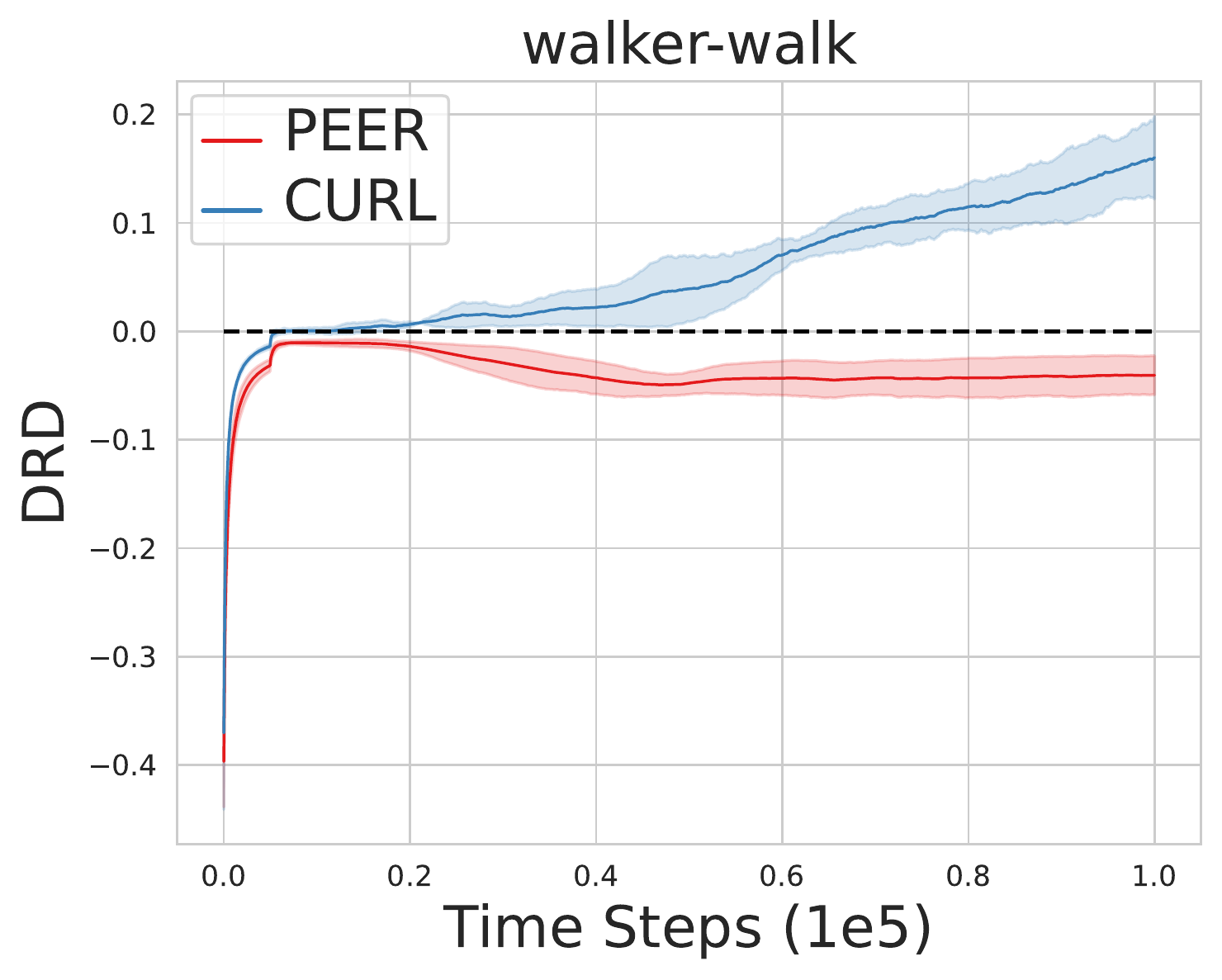}
 \end{subfigure}
\hspace{-0.1in}
    \begin{subfigure}{0.25\linewidth}
     \includegraphics[width=1\textwidth]{./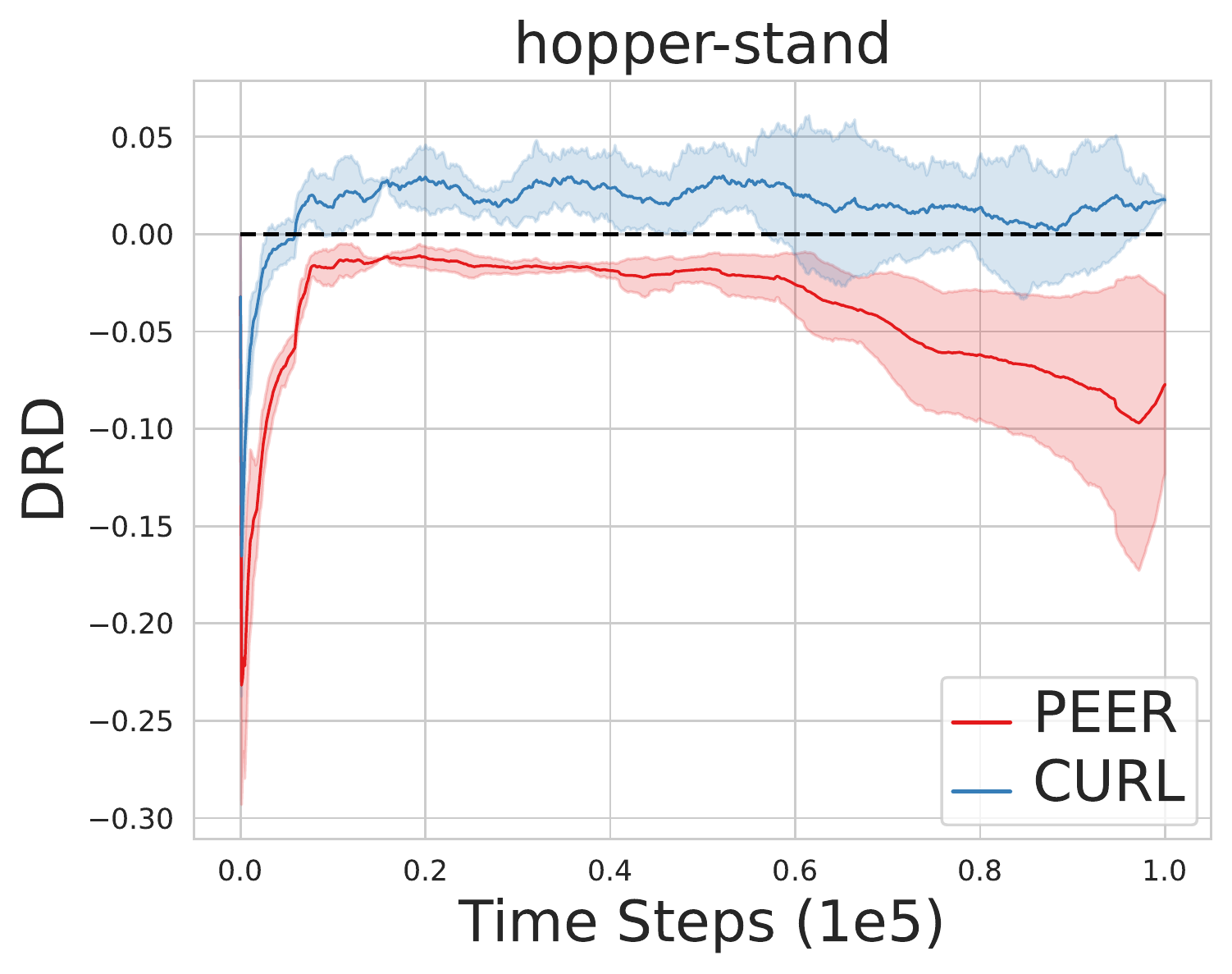}
 \end{subfigure}
	\caption{Distinguishable representation discrepancy (DRD) of TD3, CURL, and PEER agents on PyBullet and DMControl suites. TD3 agent does enjoy the distinguishable representation property. But the CURL agent does not satisfy the distinguishable representation property, which might negatively affect the model performance (see \cref{table: exp dm control}). Not only does PEER enjoy the distinguishable representation property on state-based inputs environment bullets, but also in the pixel-based environment DMControl. The shaded area stands for a standard deviation.}
	\label{fig: alpha visuallization}
	\vspace{-0.1in}
\end{figure*}

\subsection{Theoretical analysis} \label{sec: theoretical analysis}

First of all, we theoretically examine the property that the internal representations of a Q-network and its target should satisfy under \cref{ass: theta-prod}. Specifically, we show that the similarity of internal representations among two adjacent action-state pairs must be below a constant value, determined by both the neural network and reward function.

\begin{assumption}
The $l_2$-norm of the representation is uniformly bounded by the square of some positive constant $G$, i.e. $\lVert \Phi(X; \Theta_{+}) \rVert^2 \leq G^2$ for any $X\in \mathcal{S}\times \mathcal{A}$ and network weights $\Theta$.
\label{ass: theta-prod}
\end{assumption}

\begin{theorem}[Distinguishable Representation Property]\label{theorem: representation gap}
	The similarity (defined as inner product $\langle \cdot, \cdot \rangle$) between normalized representations $\Phi (s,a; \Theta_{+}) $ of the $Q$-network and $ \mathbb{E}_{s',a'}\Phi(s',a'; \Theta'_{+})$ satisfies 

\begin{equation}\label{thm: representation gap}
   \langle \Phi(s,a; \Theta_+), \mathbb{E}_{s',a'} \Phi(s',a'; \Theta'_{+}) \rangle  \leq \frac{1}{\gamma} - \frac{r(s,a)^2}{2\lVert \Theta_{-1} \rVert^2},
\end{equation}
where $s, a$ and $\Theta_+$ are state, action, and parameters of the $Q$-network except for those of the last layer. While $s', a', \Theta'_{+}$ are the state, action at the next time step, and parameters of the target $Q$-network except for those of the last layer. And $\Theta_{-1}$ is the parameters of the last layer of $Q$-network. 
\end{theorem}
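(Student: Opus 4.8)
The plan is to convert the Bellman equation, once it is rewritten through the representation decomposition of Eq.~\eqref{eq: def representation}, into a scalar identity that ties the reward $r(s,a)$ to the vector $\Phi(s,a;\Theta_+) - \gamma\,\mathbb{E}_{s',a'}\Phi(s',a';\Theta'_+)$, and then to extract the inner product from that identity by Cauchy--Schwarz together with a norm expansion.

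First I would substitute $Q(s,a;\Theta)=\langle\Phi(s,a;\Theta_+),\Theta_{-1}\rangle$ into the Bellman equation~\eqref{eq: Q-learning}, using the target network on the bootstrap term and identifying its last-layer weights with $\Theta_{-1}$ (the target tracks the $Q$-network, so this is the regime of interest; equivalently one works with a shared linear head when writing the Bellman equation in representation form). The equation then collapses to $\langle \Phi(s,a;\Theta_+) - \gamma\,\mathbb{E}_{s',a'}\Phi(s',a';\Theta'_+),\ \Theta_{-1}\rangle = r(s,a)$. Applying Cauchy--Schwarz and dividing by $\lVert\Theta_{-1}\rVert^2$ yields $r(s,a)^2/\lVert\Theta_{-1}\rVert^2 \le \lVert \Phi(s,a;\Theta_+) - \gamma\,\mathbb{E}_{s',a'}\Phi(s',a';\Theta'_+)\rVert^2$.

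Next I would expand the right-hand side as $\lVert\Phi(s,a;\Theta_+)\rVert^2 - 2\gamma\langle\Phi(s,a;\Theta_+),\mathbb{E}_{s',a'}\Phi(s',a';\Theta'_+)\rangle + \gamma^2\lVert\mathbb{E}_{s',a'}\Phi(s',a';\Theta'_+)\rVert^2$. Since the representations are normalized the first term is $1$; by Jensen's inequality $\lVert\mathbb{E}_{s',a'}\Phi(s',a';\Theta'_+)\rVert \le \mathbb{E}_{s',a'}\lVert\Phi(s',a';\Theta'_+)\rVert = 1$, so the last term is at most $\gamma^2$ (Assumption~\ref{ass: theta-prod} is what guarantees these norms are finite, so the normalization and Jensen steps are legitimate). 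Plugging these in and rearranging isolates the inner product as $\langle\Phi(s,a;\Theta_+),\mathbb{E}_{s',a'}\Phi(s',a';\Theta'_+)\rangle \le \frac{1+\gamma^2}{2\gamma} - \frac{r(s,a)^2}{2\gamma\lVert\Theta_{-1}\rVert^2}$.

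Finally, the two elementary facts $\frac{1+\gamma^2}{2\gamma}\le\frac{1}{\gamma}$ and $\frac{1}{2\gamma}\ge\frac12$, both valid for $\gamma\in(0,1]$, upgrade this to the stated bound $\frac{1}{\gamma} - \frac{r(s,a)^2}{2\lVert\Theta_{-1}\rVert^2}$. I expect the only genuine subtlety to be the step that identifies the target's last-layer weights with $\Theta_{-1}$ (i.e.\ writing the bootstrap value with a shared linear head); once past that, the argument is just Cauchy--Schwarz, a norm expansion, Jensen, and $\gamma\le 1$.
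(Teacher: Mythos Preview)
Your proposal is correct and is essentially the paper's own argument: rewrite the Bellman equation in representation form with a shared last layer, use Cauchy--Schwarz to lower-bound $\lVert\Phi-\gamma\mathbb{E}\Phi'\rVert$ by $|r|/\lVert\Theta_{-1}\rVert$, then expand that squared norm (the paper phrases this as the law of cosines with a figure, you phrase it as a direct algebraic expansion plus Jensen), and finish with $\gamma\le 1$. The only cosmetic difference is that the paper writes the Cauchy--Schwarz step via an angle $\cos\varphi\le 1$ and states the norm expansion geometrically, whereas you do both steps algebraically; the substance and the sequence of bounds are identical.
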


Intuitively, \cref{theorem: representation gap} indicates that the internal representations of different action-state pairs should be \emph{distinguishable} so that the model is able to better pick the right action from action space. We define distinguishable representation discrepancy (DRD) as 
\begin{equation}
    \text{DRD} = \langle \Phi(s,a; \Theta_+), \mathbb{E}_{s',a'} \Phi(s',a'; \Theta'_{+}) \rangle  - \big( \frac{1}{\gamma} - \frac{r(s,a)^2}{2\lVert \Theta_{-1} \rVert^2} \big).
\end{equation}
We determine whether the internal representations of the Q-network and its target satisfy the distinguishable representation property by examining the value of DRD. If DRD $\leq 0$, then the property is satisfied; otherwise, it is not.
Then, we evaluate whether this property is maintained by agents learned with two representative DRL methods, namely TD3~\cite{td3} and CURL~\cite{laskin2020curl} (without/with explicit representation learning techniques). The results are shown in \cref{fig: alpha visuallization}. 
From the results, we see that the TD3 agent does preserve the distinguishable representation property. While the CURL agent using explicit representation learning techniques does not satisfy the distinguishable representation property. This violation of the property could have a detrimental impact on agent performance (see \cref{sec: experimental results}).
These theoretical and experimental results motivate us to propose the following PEER regularization loss. This loss ensures that the learned model satisfies the distinguishable representation property through explicit regularization on representations.
\begin{figure}[!htbp]
	\begin{center}
		\includegraphics[width=\linewidth]{./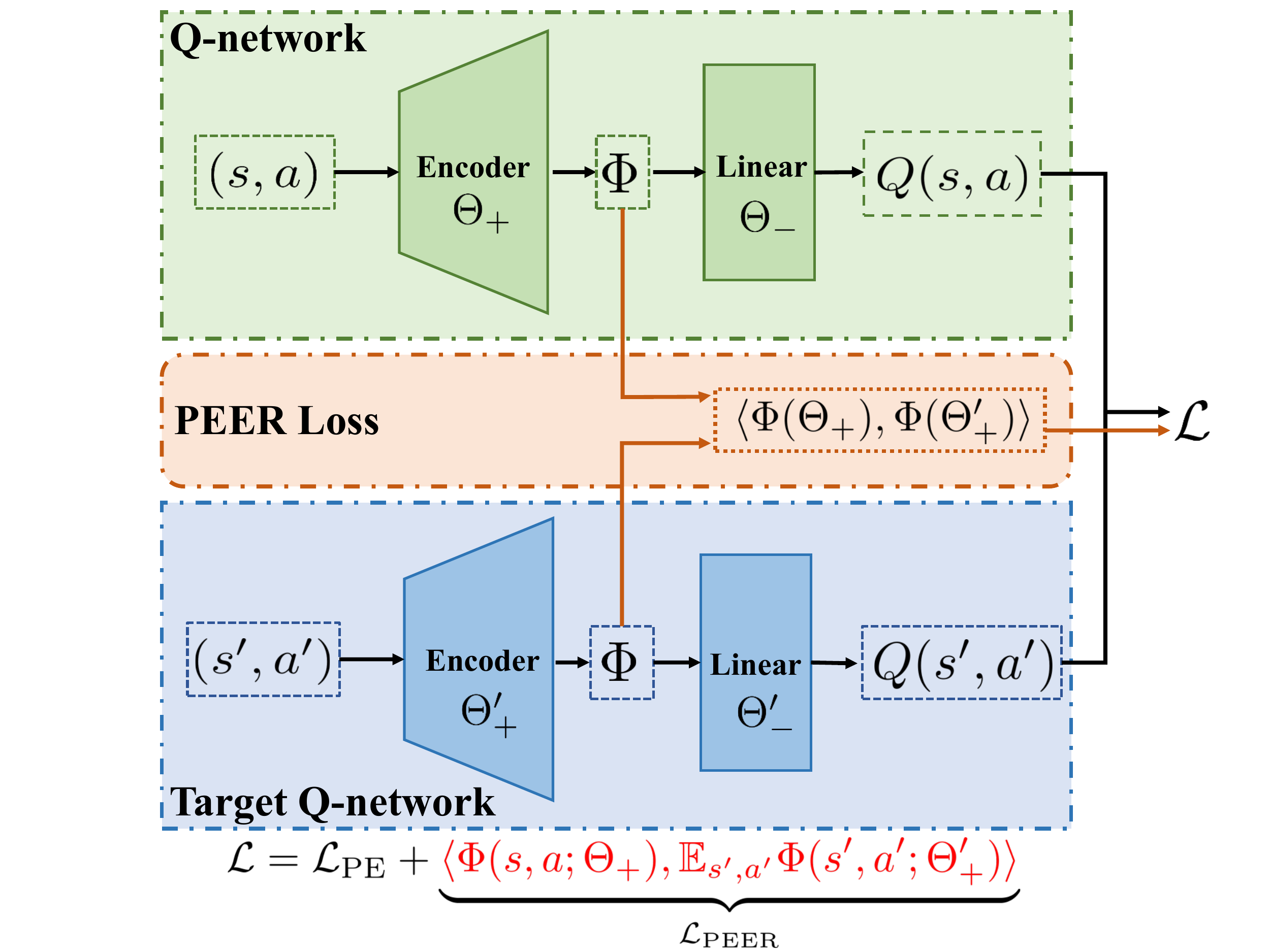} 
	\end{center}
 \vspace{-0.1in}
	\caption{\label{fig: PEER structure}How the PEER loss is computed. The encoder is a nonlinear operator, and the state action pairs generate the representation $\Phi$ through the encoder and then the action value through a linear layer. PEER regularizes the policy evaluation phase by differing the representation $\Phi$ of the $Q$-network from that of its target. $\mathcal{L}_{\text{PE}}$ is a form of policy evaluation loss. And $\beta$ is a small positive constant, controlling the magnitude of the regularization effectiveness.}
 \vspace{-0.2in}
\end{figure}

\subsection{PEER}

Specifically, the PEER loss is defined as
\begin{equation}
	\mathcal{L}_{\text{ PEER}} (\Theta) = \langle\Phi(s,a; \Theta_+), \mathbb{E}_{s',a'} \big[\Phi(s',a'; \Theta'_+)\big] \rangle,
\end{equation}
where the $Q$-network takes the current state and action as inputs, and the inputs to the target network are the state and action at the next time step.
This simple loss can be readily incorporated with the optimization objective of standard policy evaluation methods~\cite{ddpg,sac,nature_dqn}, leading to
\begin{equation}
	\mathcal{L}(\Theta) = \mathcal{L}_{\text{PE}}(\Theta) + \beta \mathcal{L}_{\text{ PEER}} (\Theta),
	\label{eq: PEER loss}
\end{equation}
where hyper-parameter $\beta$ controls the magnitude of the regularization effect of PEER. And $\mathcal{L}_{\text{PE}}(\Theta)$ is a policy evaluation phase loss e.g. 
\begin{equation*}
	\mathcal{L}_{\text{PE}} (\Theta) = \Big[Q(s,a; \Theta) - \Big(r(s,a) + \gamma \mathbb{E}_{s',a'} \big[Q(s',a'; \Theta')\big] \Big) \Big]^2.
\end{equation*}

PEER can be combined with any DRL method that includes a policy evaluation phase, such as DQN \cite{nature_dqn}, TD3,  SAC \cite{sac}. Experiments presented in 
\cref{fig: alpha visuallization} demonstrate that PEER maintains the distinguishable representation property. Furthermore, extensive experiments demonstrate PEER significantly improves existing algorithms by keeping the such property. The Pytorch-like pseudocode for the PEER loss can be found in Appendix \cref{app sec: Pseudocode}. And \cref{fig: PEER structure} illustrates how the PEER loss is computed.

\paragraph{Convergence Guarantee.}
We additionally provide a convergence guarantee of our algorithm.
Following the definition in \cite{rudolf2018upper}, let $\mathcal{F}$ be a class of measurable functions, a $\delta$-cover for $\mathcal{F}$ with $\delta > 0$ is a finite set $\Gamma_{\delta} \subset \mathcal{F}$ such that $\forall f \in \mathcal{F}$, there exists $g \in \Gamma_{\delta}$ such that $ \lVert f - g \rVert_{\infty} \leq \delta$, where $\lVert \cdot \rVert_{\infty}$ is the $l_{\infty}$-norm. A minimal $\delta$-cover is a $\delta$-cover and if taking out any of its elements, it is no longer a $\delta$-cover for $\mathcal{F}$.
Let $T$ be the Bellman Operator. We have the following convergence result for the core update step in PEER.

\begin{theorem}[One-step Approximation Error of the PEER Update]
\label{thm: convergence}
Suppose \cref{ass: theta-prod} hold, let $\mathcal{F}\subset \mathcal{B}(\mathcal{S}\times\mathcal{A})$ be a class of measurable function on $\mathcal{S} \times \mathcal{A}$ that are bounded by $V_{\max} = R_{\max}/(1 - \gamma)$, and let $\sigma$ be a probability distribution on $\mathcal{S} \times \mathcal{A}$. Also,  let $\{(S_i, A_i)\}_{i\in[n]}$ be $n$ i.i.d. random variables in $\mathcal{S} \times \mathcal{A}$ following $\sigma$. For each
$i \in [n]$, let $R_i$ and $S_i$ be the reward and the next state corresponding to $(s_i, a_i)$. In addition, for $Q \in \mathcal{F}$, we define $Y_i = R_i + \gamma \cdot \max_{a\in \mathcal{A}}Q(S'_i, a)$. Based on $\{(X_i, A_i, Y_i)\}_{i\in[n]}$, we define $\hat{O}$ as the solution to the lease-square with regularization problem,
\begin{equation}
    \min_{f\in \mathcal{F}} \frac{1}{n}\sum_{i=1}^n[f(S_i, A_i) - Y_i]^2 + \beta \langle\Phi(s,a;\Theta), \mathbb{E}\Phi_{s',a'}(s',a';\Theta')\rangle.
\end{equation}
Meanwhile, for any $\delta > 0$, let $\mathcal{N}(\delta, \mathcal{F}, \lVert \cdot \rVert_{\infty})$ be the minimal $\delta$-covering set of $\mathcal{F}$ with respect to
$l_\infty$-norm, and we denote by $N_{\delta}$ its cardinality. Then for any $\epsilon \in (0, 1]$ and any $\delta > 0$, we have
\begin{equation}
\label{conv-result}
    \lVert\hat{O} - TQ\rVert_\sigma^2 \leq (1+\epsilon)^2\cdot \omega(\mathcal{F}) + C\cdot V_{\max}^2/(n\cdot \epsilon) + C'\cdot V_{\max}\cdot \delta + 2\beta\cdot G^2,
\end{equation}
where $C$ and $C'$ are two absolute constants and are defined as \begin{equation}
    \omega(\mathcal{F}) = \sup_{g\in\mathcal{F}}\inf_{f\in\mathcal{F}} \lVert f - Tg \rVert_{\sigma}.
\end{equation}
\end{theorem}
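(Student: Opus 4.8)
The plan is to reduce the PEER update to the unregularized least-squares regression analyzed in classical fitted-Q-iteration error bounds (cf. the arguments in Munos--Szepesvári-style analyses and their refinements) and then control the extra contribution of the regularizer separately. Write $\hat{f}$ for the unregularized minimizer, i.e.\ the solution of $\min_{f\in\mathcal{F}} \frac1n\sum_i [f(S_i,A_i)-Y_i]^2$, and $\hat{O}$ for the PEER minimizer. The first step is the standard decomposition: bound $\lVert \hat{f} - TQ\rVert_\sigma^2$ by the sum of the approximation term $(1+\epsilon)^2\,\omega(\mathcal{F})$, the statistical/estimation term $C\cdot V_{\max}^2/(n\epsilon)$, and the covering-number discretization term $C'\cdot V_{\max}\cdot\delta$. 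This is obtained by (i) writing $Y_i = (TQ)(S_i,A_i) + \xi_i$ with $\xi_i$ a centered noise bounded by $2V_{\max}$; (ii) using the optimality of $\hat f$ against a near-minimizer of $\inf_f\lVert f - TQ\rVert_\sigma$ in $\mathcal{F}$ to get a basic inequality relating empirical and population norms; (iii) passing from empirical to population norm via a uniform deviation bound over the minimal $\delta$-cover $\Gamma_\delta$ (here $\lvert\Gamma_\delta\rvert = N_\delta$ enters through a union bound / Bernstein inequality on the $N_\delta$ fixed functions, and the $\delta$-approximation handles the gap between $\mathcal{F}$ and $\Gamma_\delta$); (iv) using the elementary inequality $2ab \le \epsilon a^2 + b^2/\epsilon$ to absorb cross terms, which is where the $\epsilon$-dependence $1/(n\epsilon)$ and the $(1+\epsilon)^2$ factor come from.

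The second step handles the regularizer. By optimality of $\hat{O}$ for the penalized objective, $\frac1n\sum_i[\hat O(S_i,A_i)-Y_i]^2 + \beta\langle\Phi,\mathbb{E}\Phi'\rangle_{\hat O} \le \frac1n\sum_i[\hat f(S_i,A_i)-Y_i]^2 + \beta\langle\Phi,\mathbb{E}\Phi'\rangle_{\hat f}$, so the empirical risk of $\hat O$ exceeds that of $\hat f$ by at most $\beta$ times the difference of the two inner-product terms. By Cauchy--Schwarz and \cref{ass: theta-prod}, $\lvert\langle\Phi(s,a;\Theta_+),\mathbb{E}_{s',a'}\Phi(s',a';\Theta'_+)\rangle\rvert \le \lVert\Phi(s,a;\Theta_+)\rVert\cdot\lVert\mathbb{E}_{s',a'}\Phi(s',a';\Theta'_+)\rVert \le G\cdot G = G^2$ (Jensen for the expectation), so the difference of the two regularizer values is at most $2G^2$. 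Hence the empirical squared error of $\hat O$ is within $2\beta G^2$ of that of $\hat f$, and re-running the empirical-to-population comparison from Step 1 with this slack added yields exactly the stated bound with the extra $+2\beta G^2$ term. (The additive, rather than multiplicative, appearance of $2\beta G^2$ is the reason it sits outside the $(1+\epsilon)^2$ factor.)

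Concretely, I would carry out the steps in this order: (1) set up the noise decomposition and the basic inequality for the penalized estimator, immediately carrying the $2\beta G^2$ slack; (2) prove the uniform concentration lemma over $\Gamma_\delta$ using Bernstein plus a union bound over $N_\delta$ functions, giving deviation $\lesssim V_{\max}^2\log N_\delta/n$ — note the $\log N_\delta$ is absorbed into the constant $C$ in the paper's statement, or one takes $\delta$-dependence to dominate; (3) combine via the $2ab$ trick and collect terms, matching the claimed constants $C, C'$; (4) verify the $\omega(\mathcal{F})$ term by exhibiting the near-minimizer in $\mathcal{F}$ and relating $\lVert f - Tg\rVert_\sigma$ quantities. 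The main obstacle I anticipate is Step 2: getting the empirical-to-population transfer uniform over the infinite class $\mathcal{F}$ with the right $V_{\max}^2/n$ and $V_{\max}\delta$ scaling requires care with the Bernstein variance term (the variance of $[f-TQ]^2$-type increments is itself controlled by $V_{\max}^2$ times the mean, which is what lets the estimation error come out as $1/n$ rather than $1/\sqrt n$ after the $\epsilon$-splitting), and one must check that the regularizer slack $2\beta G^2$ genuinely passes through this transfer without picking up extra $n$- or $\epsilon$-dependence — which it does, since it is a deterministic additive constant on the empirical risk. The remaining algebra (choosing how $\epsilon$ trades against the estimation term, bookkeeping the absolute constants) is routine.
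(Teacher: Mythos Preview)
Your proposal is correct and matches the paper's proof essentially line by line: the paper works directly with the penalized minimizer $\hat{O}$ (your ``concrete'' route), uses the optimality of $\hat{O}$ against an arbitrary $f\in\mathcal{F}$, bounds the difference of regularizer values by $2G^2$ via Cauchy--Schwarz and \cref{ass: theta-prod}, and then runs the standard two-step argument (sub-Gaussian/Bernstein concentration over the $\delta$-cover to control the cross term and get $\mathbb{E}\|\hat O-TQ\|_n^2$, followed by a ghost-sample Bernstein step to pass to $\|\hat O-TQ\|_\sigma^2$), with the $2ab\le \epsilon a^2+b^2/\epsilon$ trick applied in each step. Your anticipated obstacle---that the additive $2\beta G^2$ slack passes through the empirical-to-population transfer unchanged---is exactly what happens (up to a harmless $(1+\epsilon)\le 2$ factor that the paper silently absorbs), and your observation that the $\log N_\delta$ is folded into the ``absolute'' constant $C$ is also what the paper does.
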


This result suggests the convergence rate of the previous result without PEER is maintained while PEER only adds a constant term $2\beta\cdot G^2$.

\begin{figure*}[!ht]
\centering
\hspace{-0.1in}
\subcaptionbox{\label{fig: sub gridworld}Grid World}
{\includegraphics[width=0.25\textwidth]{./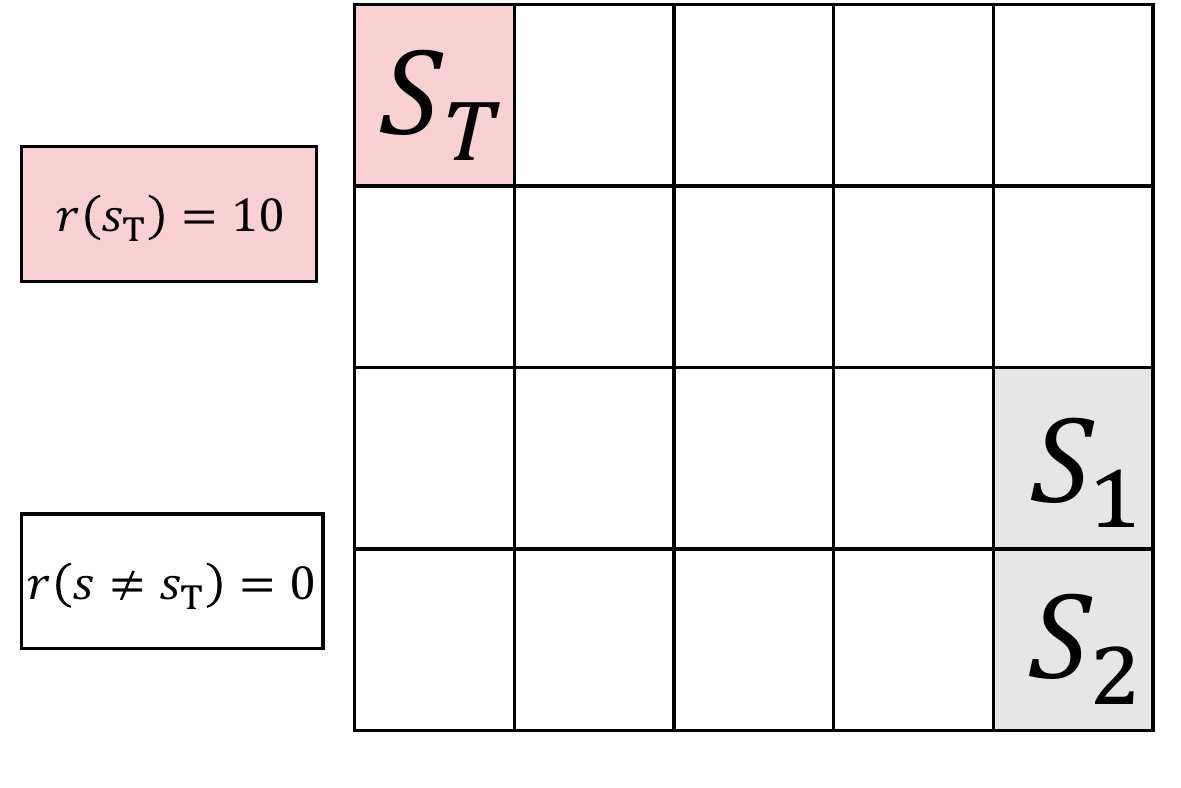}}
\hspace{-0.1in}
\subcaptionbox{\label{fig: sub similarity}Cosin similarity}
{\includegraphics[width=0.25\textwidth]{./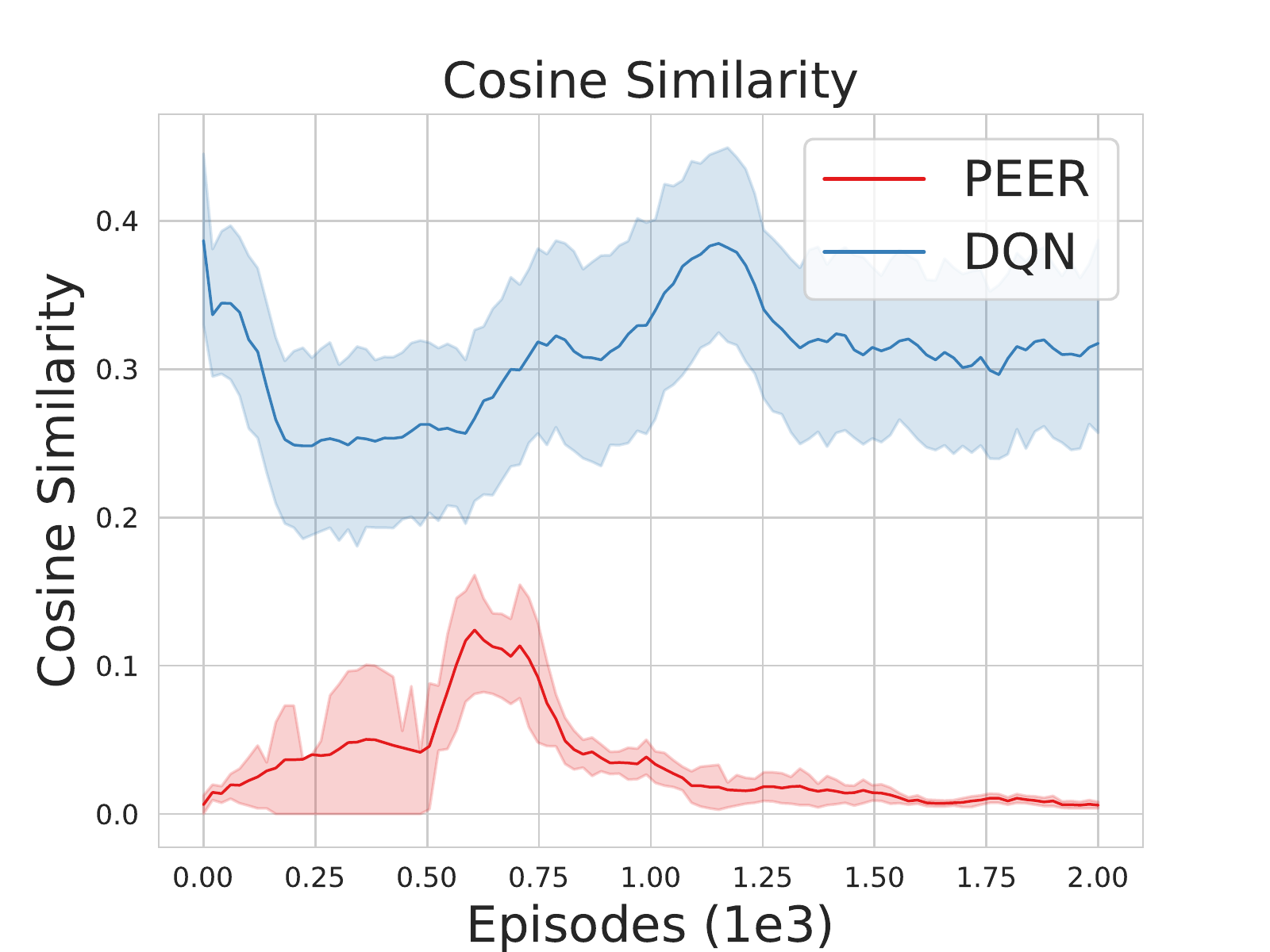}}
\hspace{-0.1in}
\subcaptionbox{\label{fig: sub q value difference}Q value difference}
{\includegraphics[width=0.25\textwidth]{./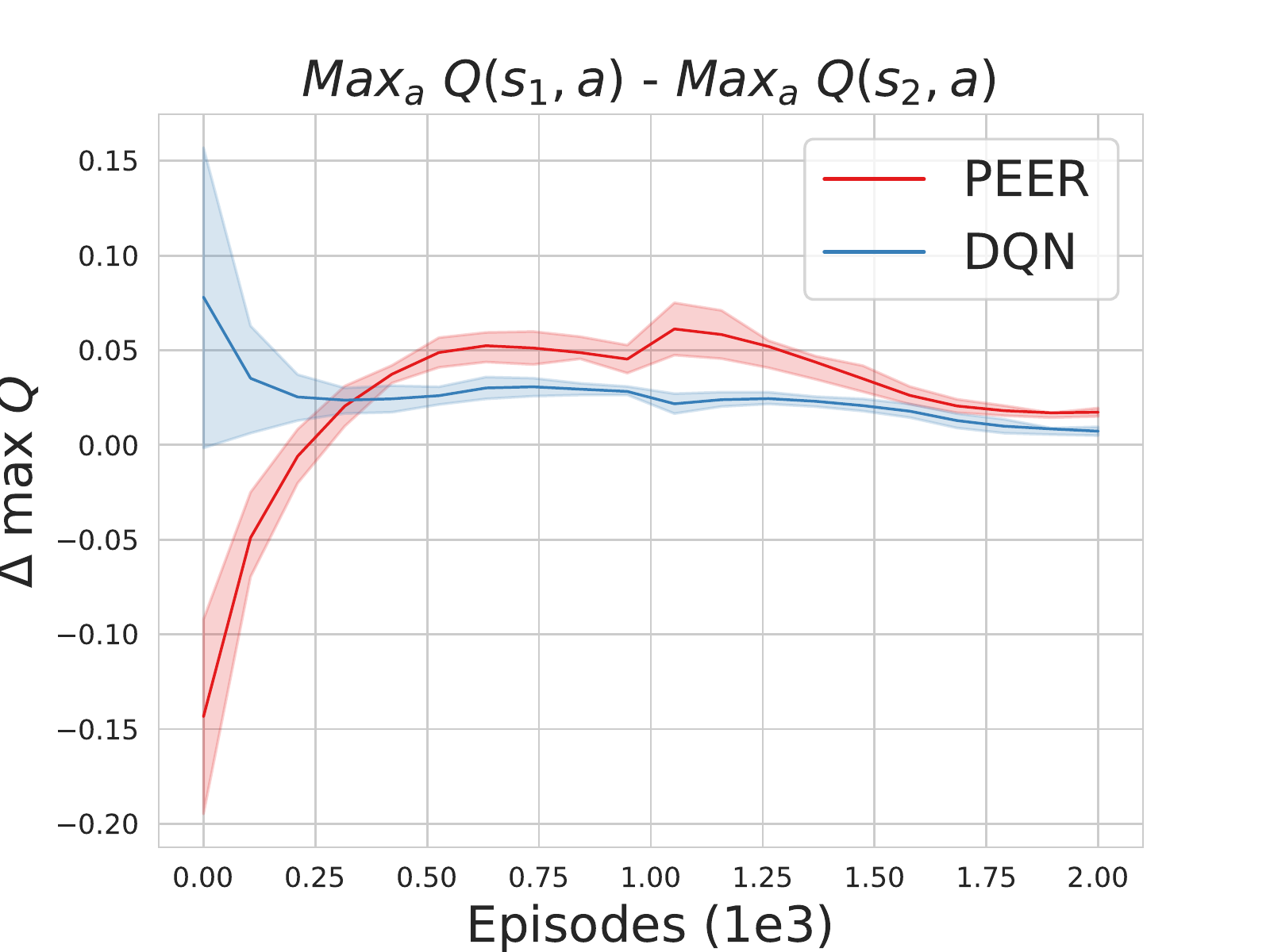}}
\hspace{-0.1in}
\subcaptionbox{\label{fig: sub steps}Steps}
{\includegraphics[width=0.25\textwidth]{./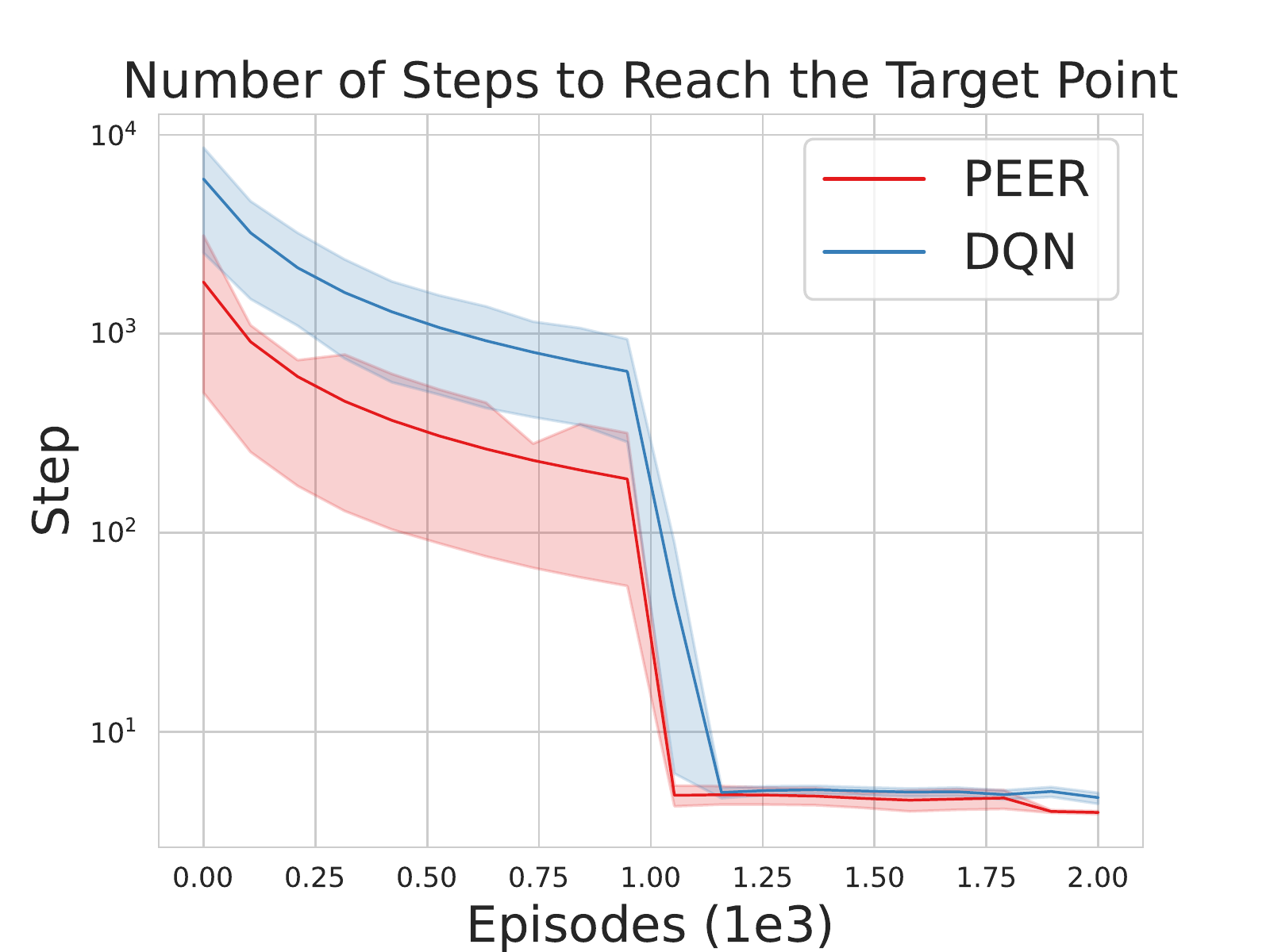}}
\caption{\label{fig: grid world exp}Experiments on the grid world. (a) Grid world. We are interested in the values of $S_1$ and $S_2$. The policies are supposed to be able to differentiate the values of states $S_1$ and $S_2$ and tell that $S_1$ has the higher value. (b) The cosine similarity between the representation of the $Q$-network and its target. PEER (combined with DQN) effectively alleviates the similarity. (c) The difference between the max Q values of $S_1$ and $S_2$. Compared with DQN, PEER is able to better distinguish the Q values of two adjacent states by differentiating the representation of the $Q$-network and its target. (d) The number of steps to reach the $S_T$. PEER is better than DQN. The results are reported over five seeds and the shaded area represents a half standard deviation.}
\end{figure*}

\subsection{A toy example}

We provide a toy example of a grid world (\cref{fig: sub gridworld}) to intuitively illustrate the functionality of PEER. In the grid world, the red state is the only state with a reward. We are interested in the Q values of the two gray states $S_1$ and $S_2$. Ideally, the policy is supposed to differentiate between those two adjacent states and accurately determine that $S_1$ has a higher value. We show the similarity between the representation of the $Q$-network and its target and the difference between the maximum Q value of two adjacent states in \cref{fig: grid world exp}. As shown in the \cref{fig: sub similarity}, PEER is able to better distinguish the representation of the $Q$-network and its target compared to DQN. Thus, PEER is better equipped to differentiate between the maximum Q values of the two nearby states $S_1$ and $S_2$~(\cref{fig: sub q value difference}). Consequently, PEER yields a better policy~(\cref{fig: sub steps}).
 \section{Experiments}\label{sec: exp}

We perform comprehensive experiments to thoroughly assess PEER. Specifically, we evaluate (i) \textbf{performance} by measuring average returns on multiple suites when combined with different backbone algorithms; (ii) \textbf{sample efficiency} by comparing it with other algorithms at fixed timesteps; (iii) \textbf{compatibility}: whether PEER can be combined with other DRL algorithms such as off-policy methods TD3, contrastive unsupervised representation learning (CURL), and DrQ through extensive experiments; and (iv) the distinguishable representation property of PEER. To achieve this, we couple {PEER} with three representative DRL algorithms {TD3}~\cite{td3}, CURL~\cite{laskin2020curl}, and DrQ~\cite{drq}, and perform extensive experiments on four suites, namely, PyBullet, MuJoCo, DMControl, and Atari. PEER's simplicity and ease of implementation are noteworthy, requiring only one line of code. We deliberately avoid using any engineering tricks that could potentially enhance its performance. This decision is made to ensure that the reproducibility crisis, which has been a growing concern in the field of DRL \cite{andrychowicz2021matters, drl_matters}, is not further exacerbated. We also maintain a fixed newly introduced hyper-parameter $\beta$ across all the experiments to achieve fair comparisons. Additional experimental details can be found in the Appendix.

\subsection{Experimental settings} \label{sec: experimental settings}

\textbf{Hyper-parameters.} 
We introduce only one additional hyper-parameter $\beta$ to control the magnitude of regularization effectiveness of {PEER}. We report all results using a fixed value of $\beta=5e-4$. It is important to note that PEER may benefit from selecting a $\beta$ value that is better suited to a specific environment.

\textbf{Random seeds.}
To ensure the reproducibility of our experiments, we evaluate each tested algorithm using ten fixed random seeds unless otherwise specified. Moreover, we maintain fixed seeds across all experiments, including those used in PyTorch, Numpy, Gym, and CUDA packages.

\textbf{Environments.}
To evaluate PEER, we use both state-based (state represented as vectors containing sensor information such as velocity, position, friction, etc) PyBullet and MuJoCo, and pixel-based (state represented as images) DMControl and Atari suites. The action space of PyBullet, MuJoCo, and DMControl is continuous, while that of Atari is discrete. Through the four experimental suites, we can check the performance and sample efficiency of PEER. We use the Gym \cite{gym} library for the interactive protocol. For PyBullet and MuJoCo suites, we run each tested algorithm for 1 million timesteps and evaluate the average return of the algorithm every 5k timesteps over ten episodes. For DMControl and Atari suites, following the commonly used experimental setting \cite{laskin2020curl, drq}, we measure the performance and sample-efficiency of tested algorithms at 100k and 500k environment timesteps, resulting in DMControl100k, DMControl500k, and Atari100k settings.

\begin{table}[!htbp]
\setlength\tabcolsep{3pt}
    \centering
    \scalebox{0.88}{
    \begin{tabular}{lllll}
    \toprule
    Algorithm & Ant &	HalfCheetah	&Hopper	&Walker2D \\
    \midrule
PEER &\colorbox{mine}{3003} $\pm$ {\footnotesize204} &\colorbox{mine}{2494} $\pm$ {\footnotesize276} &\colorbox{mine}{2106} $\pm$ {\footnotesize164} &\colorbox{mine}{1966} $\pm$ {\footnotesize58}\\
TD3 &\;2731 $\pm$ {\footnotesize278} &\;2359 $\pm$ {\footnotesize229} &\;1798 $\pm$ {\footnotesize471} &\;1646 $\pm$ {\footnotesize314}\\
METD3 &\;2601 $\pm$ {\footnotesize246} &\;2345 $\pm$ {\footnotesize151} &\;1929 $\pm$ {\footnotesize351} &\;1901 $\pm$ {\footnotesize111}\\
SAC &\;2561 $\pm$ {\footnotesize146} &\;1675 $\pm$ {\footnotesize567} &\;1984 $\pm$ {\footnotesize103} &\;1716 $\pm$ {\footnotesize30}\\
PPO2 &\;539 $\pm$ {\footnotesize25} &\;397 $\pm$ {\footnotesize63} &\;403 $\pm$ {\footnotesize70} &\;390 $\pm$ {\footnotesize106}\\
TRPO &\;693 $\pm$ {\footnotesize74} &\;639 $\pm$ {\footnotesize154} &\;1140 $\pm$ {\footnotesize469} &\;496 $\pm$ {\footnotesize206}\\
        \bottomrule
    \end{tabular}}
  \caption{\label{table: exp bullet}The average return of the last ten evaluations over ten random seeds. PEER (coupled with TD3) outperforms all the compared algorithms, which shows that the PEER loss works in state-based environments. The best score is marked with \colorbox{mine}{colorbox.} $\pm$ corresponds to a standard deviation over trials. }
\end{table}

\begin{table*}[!htbp]        
    \centering
    \scalebox{0.95}{
    \begin{tabular}{l|l|llllll|l}
    \toprule
        500K Step Scores & State SAC & PlaNet & Dreamer & SAC+AE & DrQ & DrQ-v2 & CURL & PEER \\ 
        \midrule
Finger, Spin &\;923 $\pm$ {\footnotesize21} &\;561 $\pm$ {\footnotesize284} &\;796 $\pm$ {\footnotesize183} &\;884 $\pm$ {\footnotesize128} &\colorbox{mine}{938} $\pm$ {\footnotesize 103} &\;789 $\pm$ {\footnotesize124} &\;926 $\pm$ {\footnotesize45} &\;864 $\pm$ {\footnotesize160}\\
Cartpole, Swingup &\;848 $\pm$ {\footnotesize15} &\;475 $\pm$ {\footnotesize71} &\;762 $\pm$ {\footnotesize27} &\;735 $\pm$ {\footnotesize63} &\colorbox{mine}{868} $\pm$ {\footnotesize 10} &\;845 $\pm$ {\footnotesize18} &\;841 $\pm$ {\footnotesize45} &\;866 $\pm$ {\footnotesize17}\\
Reacher, Easy &\;923 $\pm$ {\footnotesize24} &\;210 $\pm$ {\footnotesize390} &\;793 $\pm$ {\footnotesize164} &\;627 $\pm$ {\footnotesize58} &\;942 $\pm$ {\footnotesize71} &\;748 $\pm$ {\footnotesize229} &\;929 $\pm$ {\footnotesize44} &\colorbox{mine}{980} $\pm$ {\footnotesize 3}\\
Cheetah, run &\;795 $\pm$ {\footnotesize30} &\;305 $\pm$ {\footnotesize131} &\;570 $\pm$ {\footnotesize253} &\;550 $\pm$ {\footnotesize34} &\;660 $\pm$ {\footnotesize96} &\;607 $\pm$ {\footnotesize32} &\;518 $\pm$ {\footnotesize28} &\colorbox{mine}{732} $\pm$ {\footnotesize 41}\\
Walker, Walk &\;948 $\pm$ {\footnotesize54} &\;351 $\pm$ {\footnotesize58} &\;897 $\pm$ {\footnotesize49} &\;847 $\pm$ {\footnotesize48} &\;921 $\pm$ {\footnotesize45} &\;696 $\pm$ {\footnotesize370} &\;902 $\pm$ {\footnotesize43} &\colorbox{mine}{946} $\pm$ {\footnotesize 17}\\
Ball\_in\_cup, Catch &\;974 $\pm$ {\footnotesize33} &\;460 $\pm$ {\footnotesize380} &\;879 $\pm$ {\footnotesize87} &\;794 $\pm$ {\footnotesize58} &\;963 $\pm$ {\footnotesize9} &\;844 $\pm$ {\footnotesize174} &\;959 $\pm$ {\footnotesize27} &\colorbox{mine}{973} $\pm$ {\footnotesize 5}\\\midrule
100K Step Scores\\\midrule
Finger, Spin &\;811 $\pm$ {\footnotesize46} &\;136 $\pm$ {\footnotesize216} &\;341 $\pm$ {\footnotesize70} &\;740 $\pm$ {\footnotesize64} &\colorbox{mine}{901} $\pm$ {\footnotesize 104} &\;325 $\pm$ {\footnotesize292} &\;767 $\pm$ {\footnotesize56} &\;820 $\pm$ {\footnotesize166}\\
Cartpole, Swingup &\;835 $\pm$ {\footnotesize22} &\;297 $\pm$ {\footnotesize39} &\;326 $\pm$ {\footnotesize27} &\;311 $\pm$ {\footnotesize11} &\;759 $\pm$ {\footnotesize92} &\;677 $\pm$ {\footnotesize214} &\;582 $\pm$ {\footnotesize146} &\colorbox{mine}{863} $\pm$ {\footnotesize 17}\\
Reacher, Easy &\;746 $\pm$ {\footnotesize25} &\;20 $\pm$ {\footnotesize50} &\;314 $\pm$ {\footnotesize155} &\;274 $\pm$ {\footnotesize14} &\;601 $\pm$ {\footnotesize213} &\;256 $\pm$ {\footnotesize145} &\;538 $\pm$ {\footnotesize233} &\colorbox{mine}{961} $\pm$ {\footnotesize 28}\\
Cheetah, run &\;616 $\pm$ {\footnotesize18} &\;138 $\pm$ {\footnotesize88} &\;235 $\pm$ {\footnotesize137} &\;267 $\pm$ {\footnotesize24} &\;344 $\pm$ {\footnotesize67} &\;273 $\pm$ {\footnotesize130} &\;299 $\pm$ {\footnotesize48} &\colorbox{mine}{499} $\pm$ {\footnotesize 74}\\
Walker, Walk &\;891 $\pm$ {\footnotesize82} &\;224 $\pm$ {\footnotesize48} &\;277 $\pm$ {\footnotesize12} &\;394 $\pm$ {\footnotesize22} &\;612 $\pm$ {\footnotesize164} &\;171 $\pm$ {\footnotesize160} &\;403 $\pm$ {\footnotesize24} &\colorbox{mine}{714} $\pm$ {\footnotesize 148}\\
Ball\_in\_cup, Catch &\;746 $\pm$ {\footnotesize91} &\;0 $\pm$ {\footnotesize0} &\;246 $\pm$ {\footnotesize174} &\;391 $\pm$ {\footnotesize82} &\;913 $\pm$ {\footnotesize53} &\;359 $\pm$ {\footnotesize228} &\;769 $\pm$ {\footnotesize43} &\colorbox{mine}{968} $\pm$ {\footnotesize 7}\\
        \bottomrule
    \end{tabular}}
    \caption{\label{table: exp dm control}Scores achieved by {PEER} (coupled with CURL) on DMControl continuous control suite. PEER achieves superior performance on the majority (\textbf{9} out of \textbf{12}) tasks. And PEER also outperforms its backbone algorithm CURL on \textbf{11} out of \textbf{12} tasks by a large margin. The best score is marked with \colorbox{mine}{colorbox.} $\pm$ corresponds to a standard deviation over trials. }
\end{table*}

\begin{table*}[!htb]
    \centering
    \scalebox{0.92}{
    
        \begin{tabular}{l|ll|lllll|ll}
        \toprule
        Game & Human & Random & OTRainbow & Eff. Rainbow & Eff. DQN & DrQ & CURL & PEER{\tiny +CURL} & PEER{\tiny +DrQ} \\ 
        \midrule
        Alien &\;7127.7 &\;227.8 &\;824.7 &\;739.9 &\;558.1 &\;771.2 &\;558.2 & \colorbox{mine}{1218.9}  &\;712.7\\
Amidar &\;1719.5 &\;5.8 &\;82.8 & \colorbox{mine}{188.6}  &\;63.7 &\;102.8 &\;142.1 &\;185.2 &\;163.1\\
Assault &\;742 &\;222.4 &\;351.9 &\;431.2 &\;589.5 &\;452.4 &\;600.6 &\;631.2 & \colorbox{mine}{721} \\ 
        Asterix  &\;8503.3 &\;210 &\;628.5 &\;470.8 &\;341.9 &\;603.5 &\;734.5 &\;834.5 & \colorbox{mine}{918.2} \\ 
        BankHeist  &\;753.1 &\;14.2 & \colorbox{mine}{182.1}  &\;51 &\;74 &\;168.9 &\;131.6 &\;78.6 &\;12.7\\
BattleZone &\;37187.5 &\;2360 &\;4060.6 &\;10124.6 &\;4760.8 &\;12954 &\;14870 & \colorbox{mine}{15727.3}  &\;5000\\
Boxing &\;12.1 &\;0.1 &\;2.5 &\;0.2 &\;-1.8 &\;6 &\;1.2 &\;3.7 & \colorbox{mine}{14.5} \\ 
        Breakout  &\;30.5 &\;1.7 &\;9.8 &\;1.9 &\;7.3 & \colorbox{mine}{16.1}  &\;4.9 &\;3.9 &\;8.5\\
ChopperCommand &\;7387.8 &\;811 &\;1033.3 &\;861.8 &\;624..4 &\;780.3 &\;1058.5 & \colorbox{mine}{1451.8}  &\;1233.6\\
CrazyClimber &\;35829.4 &\;10780.5 & \colorbox{mine}{21327.8}  &\;16185.3 &\;5430.6 &\;20516.5 &\;12146.5 &\;18922.7 &\;18154.5\\
DemonAttack &\;1971 &\;152.1 &\;711.8 &\;508 &\;403.5 &\;1113.4 &\;817.6 &\;742.9 & \colorbox{mine}{1236.7} \\ 
        Freeway  &\;29.6 &\;0 &\;25 &\;27.9 &\;3.7 &\;9.8 &\;26.7 & \colorbox{mine}{30.4}  &\;21.2\\
Frostbite &\;4334.7 &\;65.2 &\;231.6 &\;866.8 &\;202.9 &\;331.1 &\;1181.3 & \colorbox{mine}{2151}  &\;537.4\\
Gopher &\;2412.5 &\;257.6 & \colorbox{mine}{778}  &\;349.5 &\;320.8 &\;636.3 &\;669.3 &\;583.6 &\;681.8\\
Hero &\;30826.4 &\;1027 &\;6458.8 &\;6857 &\;2200.1 &\;3736.3 &\;6279.3 & \colorbox{mine}{7499.9}  &\;3953.2\\
Jamesbond &\;302.8 &\;29 &\;112.3 &\;301.6 &\;133.2 &\;236 & \colorbox{mine}{471}  &\;414.1 &\;213.6\\
Kangaroo &\;3035 &\;52 &\;605.4 &\;779.3 &\;448.6 &\;940.6 &\;872.5 & \colorbox{mine}{1148.2}  &\;663.6\\
Krull &\;2665.5 &\;1598 &\;3277.9 &\;2851.5 &\;2999 &\;4018.1 &\;4229.6 &\;4116.1 & \colorbox{mine}{5444.7} \\ 
        KungFuMaster  &\;22736.3 &\;258.5 &\;5722.2 &\;14346.1 &\;2020.9 &\;9111 &\;14307.8 & \colorbox{mine}{15439.1}  &\;4090.9\\
MsPacman &\;6951.6 &\;307.3 &\;941.9 &\;1204.1 &\;872 &\;960.5 &\;1465.5 & \colorbox{mine}{1768.4}  &\;1027.3\\
Pong &\;14.6 &\;-20.7 & \colorbox{mine}{1.3}  &\;-19.3 &\;-19.4 &\;-8.5 &\;-16.5 &\;-9.5 &\;-18.2\\
PrivateEye &\;69571.3 &\;24.9 &\;100 &\;97.8 &\;351.3 &\;-13.6 &\;218.4 & \colorbox{mine}{3207.7}  &\;8.2\\
Qbert &\;13455 &\;163.9 &\;509.3 &\;1152.9 &\;627.5 &\;854.4 &\;1042.4 & \colorbox{mine}{2197.7}  &\;913.6\\
RoadRunner &\;7845 &\;11.5 &\;2696.7 &\;9600 &\;1491.9 &\;8895.1 &\;5661 & \colorbox{mine}{10697.3}  &\;6900\\
Seaquest &\;42054.7 &\;68.4 &\;286.9 &\;354.1 &\;240.1 &\;301.2 &\;384.5 & \colorbox{mine}{538.5}  &\;409.6\\
UpNDown &\;11693.2 &\;533.4 &\;2847.6 &\;2877.4 &\;2901.7 &\;3180.8 &\;2955.2 &\;6813.5 & \colorbox{mine}{7680.9} \\ 
        \bottomrule
    \end{tabular}
    }
\caption{\label{table: atari exp results}Scores achieved by {PEER} (coupled with CURL and DrQ) and baselines on Atari. {PEER} achieves state-of-the-art performance on \textbf{19} out of \textbf{26} games. PEER implemented on top of CURL/DrQ improves over CURL/DrQ on \textbf{19/16} out of \textbf{26} games. Algorithms combined with PEER are reported across 10 random seeds. We also see that PEER achieves superhuman performance on Boxing, Freeway, JamesBond, Krull, and RoadRunner games.  The best score is marked with \colorbox{mine}{colorbox.}}
\vspace{-10pt}
\end{table*}

\textbf{Baselines.}
We first evaluate {PEER} on the state-based PyBullet suite, using TD3, SAC \cite{sac}, {TRPO} \cite{trpo}, {PPO}\cite{ppo} as our baselines for their superior performance. And we couple PEER with TD3 on PyBullet and MuJoCo experiments. PEER works as preventing the similarity between the representation of the Q-network and its target. Another relevant baseline is MEPG \cite{mepg}, which enforces a dropout operator on both the Q-network and its target. Dropout operator \cite{warde-farleyEmpiricalAnalysisDropout2013,srivastava2014dropout} is generally believed to prevent feature co-adaptation, which is close to what PEER achieves. Therefore, we use MEPG combined with TD3 as a baseline, denoted as METD3.
We employ the authors' TD3 implementation, along with the public implementation \cite{pytorch_sac} for SAC,  and the Baselines codebase \cite{baselines} for TRPO and PPO. We opt to adhere to the authors' recommended default hyper-parameters for all considered algorithms.

Then we evaluate {PEER} on the pixel-based DMControl and Atari suites, combining it with CURL and DrQ algorithms. For DMControl, we select (i) PlaNet \cite{planet} and (ii) Dreamer \cite{dreamer}, which learn a world model in latent space and execute planning; (iii) SAC+AE \cite{sacae} uses VAE \cite{vae} and a regularized encoder; (iv) {CURL} using contrastive unsupervised learning to extract high-level features from raw pixels; (v) DrQ \cite{drq} and (vi) DrQ-v2 \cite{drqv2}, which adopt data augmentation technique; and (vii) state-based {SAC}. For Atari suite, we select {CURL}, OTRainbow \cite{otrainbow}, {Efficient Rainbow} \cite{effrainbow}, {Efficient DQN} \cite{drq}, and {DrQ} as baselines.
\subsection{Results} \label{sec: experimental results}

\begin{figure}[!htbp]
	\begin{center}
	\includegraphics[width=\linewidth]{./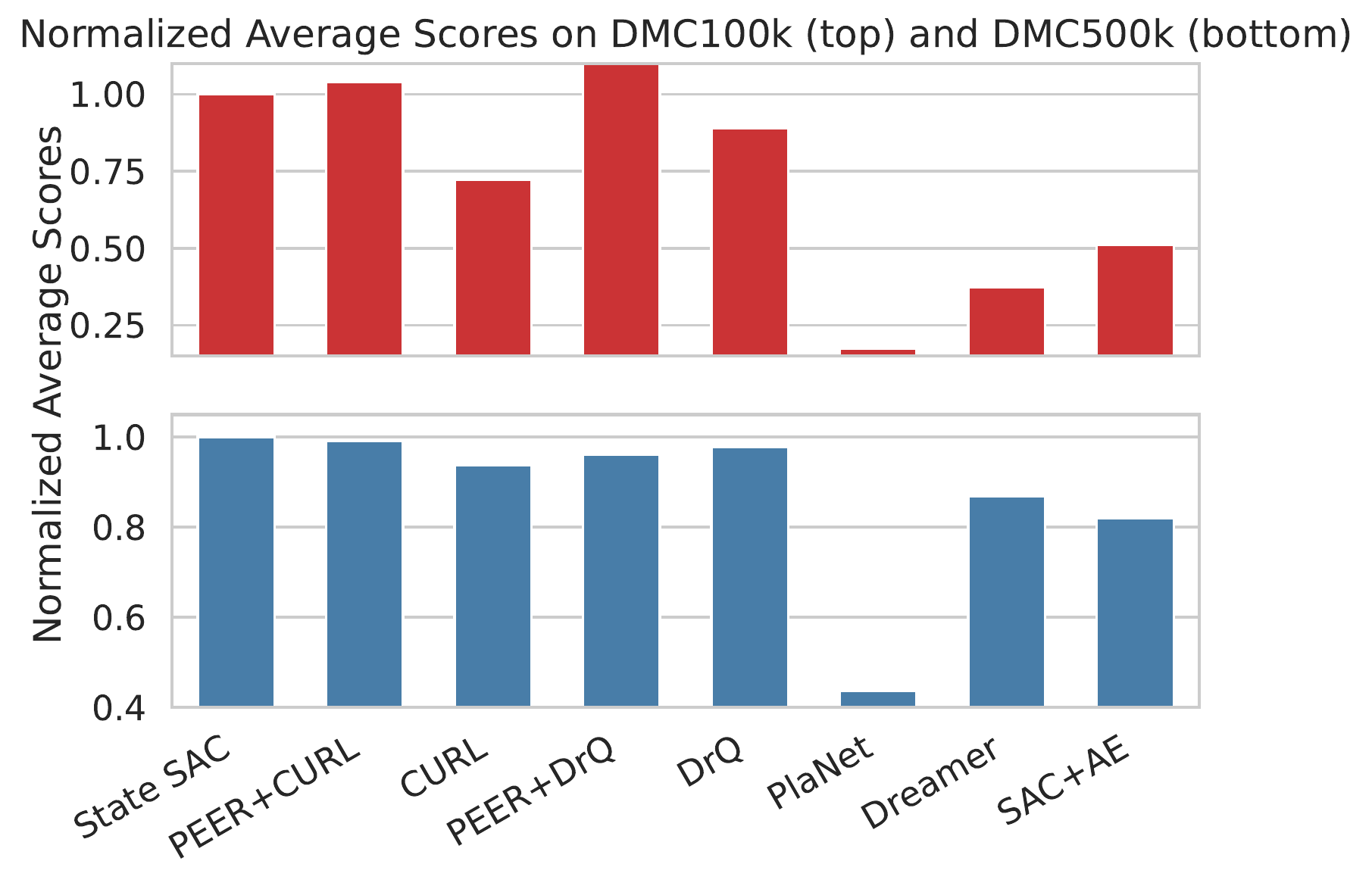} 
	\end{center}
	\caption{\label{fig: normalized score}The normalized average scores on the DMControl suite. We normalize the average score of the tested algorithm by the average scores of State SAC. On the DMC100k benchmark, PEER (coupled with CURL and DrQ) outperforms all the compared algorithms including State SAC.}
 \vspace{-0.2in}
\end{figure}

\textbf{PyBullet.}
We first evaluate the PEER loss in the state-based suite PyBullet. We show the final ten evaluation average return in \cref{table: exp bullet}. The results show (i) {PEER} coupled with {TD3} outperforms {TD3} on all environments. Furthermore, (ii) {PEER} also surpasses all other tested algorithms. The superior performance of {PEER} shows that the PEER loss can improve the empirical performance of the off-policy model-free algorithm that does not adopt any representation learning techniques.

\textbf{DMControl.}
We then perform experiments on the DMControl suite. Specifically, we couple {PEER} with the {CURL} and DrQ algorithms, and run them in DMControl500k and DMControl100k settings. The results are shown in \cref{table: exp dm control} and \cref{fig: normalized score}. The key findings are as follows:
(i) On the DMControll500k setting, {PEER} coupled with {CURL} outperforms its backbone by a large margin on \textbf{5} out of \textbf{6} tasks, which shows the proposed PEER does improve the performance of its backbone algorithm. And the performance improvement on DMC500k shows that the PEER is beneficial for contrastive unsupervised learning (CURL).  
(ii) On the DMControl100k setting, The PEER (coupled with CURL) outperforms its backbone CURL on \textbf{6} out of \textbf{6} tasks. Besides, results in \cref{fig: normalized score} demonstrates that PEER (coupled with DrQ) improves the sample efficiency of DrQ by a large margin. Overall, the {PEER} achieves SOTA performance on \textbf{11} out of \textbf{12} tasks. 
(iii) PEER outperforms State SAC on \textbf{6} out of \textbf{12} tasks. In \cref{fig: normalized score}, we computed the average score of the tested algorithm on all environments under DMControl100k and DMControl500k settings, normalized by the score of State SAC. The results in DMControl100k show that {PEER} (combined with CURL and DrQ) is more sample-efficient than State SAC and other algorithms. And in the DMControl500k suite, the sample efficiency of {PEER} matches that of State SAC. These results illustrate that {PEER} remarkably improves the empirical \textbf{performance} and \textbf{sample efficiency} of the backbone algorithms TD3, CURL, and DrQ.

\textbf{Atari.} We present Atari100k experiments in \cref{table: atari exp results}, which show that (i) PEER achieves state-of-the-art performance on \textbf{19} out of \textbf{26} environments in given timesteps. And (ii) {PEER} implemented on top of CURL/DrQ improves over CURL on \textbf{19/16} out of \textbf{26} games. (iii) We also see that {PEER} achieves superhuman performance on Boxing, Freeway, JamesBond, Krull, and RoadRunner games. The empirical results demonstrate that PEER dramatically improves the \textbf{sample efficiency} of backbone algorithms.

\subsection{Compatibility}
Given a fixed hyper-parameter $\beta=5e-4$, PEER, as a plug-in, outperforms its backbone CURL algorithm on DMControl (\textbf{11} out of \textbf{12}) and Atari (\textbf{19} out of \textbf{26}) tasks, which shows that PEER is able to be incorporated into contrastive learn-based and data-augmentation methods to improve performance and sample efficiency. Besides, PEER coupled with DrQ also outperforms DrQ on \textbf{16} out of \textbf{26} environments. For state-based continuous control tasks, PEER (coupled with TD3) surpasses all the tested algorithms. These facts show that the compatibility of the PEER loss is remarkable and the PEER can be extended to incorporate more DRL algorithms. Theoretically, PEER tackles representation learning from the perspective of the representation of the Bellman equation, which is contrasting with other representation learning methods in DRL. Empirically, the performance of PEER loss coupled with representation learning DRL methods is better than that of backbone methods, which means that the PEER loss is orthogonal to other representation learning methods in DRL. Thus the compatibility of the PEER loss is auspicious.

\subsection{PEER preserves distinguishable representation property}
To validate whether the  PEER regularizer preserves the distinguishable representation property or not, we measure the distinguishable representation discrepancy of the action value network and its target in PEER following \cref{sec: theoretical analysis}. We show the experimental results in \cref{fig: alpha visuallization}. The results show (i) TD3 and PEER (based on TD3) agents do enjoy the distinguishable representation property but that of PEER is more evident, which reveals the performance gain of PEER in this setting comes from better distinguishable representation property. (ii) The CURL agent does not maintain the distinguishable representation property on the tested DMControl suite, which negatively affects the model performance. And (iii) PEER also enjoys the distinguishable representation property on the pixel-based environment DMControl. Thus the performance of PEER is naturally improved due to the property being desirable. (iv) Combined with the performance improvement shown in performance experiments (\cref{sec: experimental results}), PEER does improve the sample efficiency and performance by preserving the distinguishable representation property of the $Q$-network and its target.
 \section{Related Work}

Representation learning \cite{bert,moco,mucov2,simclr,cpc_o,cpc} aims at learning good or rich features of data. Such learned representation may be helpful for downstream tasks. The fields of natural language processing and computer vision have benefited from such techniques.
It \cite{ghosh2020representations,laskin2020curl,drq,drqv2,rad,cpc} is widely acknowledged that good representations are conducive to improving the performance of DRL~\cite{ddqn, c51, ddpg, td3, sac, ppo, mepg, wd3}. Recent works \cite{unreal,pbl,cpc,laskin2020curl,lyle2021effect, drq, drqv2} used self-supervised, unsupervised, contrastive representation learning, and data-augmentation approaches to improve the performance of DRL methods. There emerged several works studying representation learning from a geometric view~\cite{bellemare2019geometric, dadashi2019value, eysenbach2021information}. \citet{lyle2021understanding} explicitly considers representation capacity, which imposes regularization on the neural networks. The closest work to PEER is DR3 \cite{dr3}, which proposed a regularizer to tackle the implicit regularization effect of stochastic gradient descent in an offline RL setting. Despite the coincidental synergistic use of dot product form in PEER and DR3, we prove its necessity with different derivations and motivations.  Our upper bound is rigorously derived for the representation of critic and its \textit{target network}. But DR3 does not specify which network they select. DR3 only works in the offline setting, while PEER is, in theory, applicable to both online and offline settings.

Our work differentiates from previous works from the following three perspectives. First, PEER tackles representation learning by preserving the distinguishable representation property instead of learning good representations with the help of auxiliary tasks. The convergence rate guarantee of PEER can be proved. Second, the experiments show that PEER is orthogonal to existing representation learning methods in DRL. Third, PEER is also suitable for environments based on both states and pixels while other representation learning methods in DRL are almost only performed on pixel-based suites.
  \section{Conclusion}
\label{sec: limitations-conclusion}

In this work, we initiated an investigation of the representation property of the $Q$-network and its target and found that they ought to satisfy an inherently favorable distinguishable representation property. Then illustrative experimental results demonstrate that deep RL agents may not be able to preserve this desirable property during training. As a solution to maintain the distinguishable representation property of deep RL agents, we propose a straightforward yet effective regularizer, PEER, and provide a convergence rate guarantee. Our extensive empirical results indicate that, with a fixed hyper-parameter, PEER exhibits superior improvements across all suites. Furthermore, the observed performance improvements are attributable to the preservation of distinguishable representational properties. Previous work on value function representations often makes the representations of adjacent moments similar, which is thought to maintain some smoothing. However, our work demonstrates that there is an upper bound on this similarity and that minimizing it can preserve a beneficial property, leading to sampling efficiency and performance improvements. In some cases, the performance of PEER is commensurate with that of State SAC. However, we leave the task of further analyzing the reasons for this parity for future research. To the best of our knowledge, PEER is the first work to study the inherent representation property of $Q$-network and its target. We believe that our work sheds light on the nature of the inherent representational properties arising from combining the parameterization tool neural networks and RL.

\section*{Acknowledgements} 
We thank anonymous reviewers and Area Chairs for their fair evaluations and professional comments. Qiang He thanks Yuxun Qu for his sincere help. This work was done when Qiang He was with Insititute of Automation, Chinese Academy of Sciences. This work was supported by the National Key Research and Development Program of China (2021YFC2800501).



\clearpage
{\small
\bibliographystyle{unsrtnat}
\bibliography{ref}
}

\clearpage
\onecolumn
 \section{Appendix: Theoretical Derivation}\label{app sec proofs}
\label{appendix: proof}

\begin{theorem}[Distinguishable Representation Property]\label{app theorem: representation gap}
	The similarity (defined as inner product $\langle \cdot, \cdot \rangle$) between normalized representations $\Phi (s,a; \Theta_{+}) $ of the $Q$-network and $ \mathbb{E}_{s',a'}\Phi(s',a'; \Theta'_{+})$ satisfies 

\begin{equation}\label{app thm: representation gap}
   \langle \Phi(s,a; \Theta_+), \mathbb{E}_{s',a'} \Phi(s',a'; \Theta'_{+}) \rangle  \leq \frac{1}{\gamma} - \frac{r(s,a)^2}{2\lVert \Theta_{-1} \rVert^2},
\end{equation}
where $s, a$ and $\Theta_+$ are state, action, and parameters of the $Q$-network except for those of the last layer. While $s', a', \Theta'_{+}$ are the state, action at the next time step, and parameters of the target $Q$-network except for those of the last layer. And $\Theta_{-1}$ is the parameters of the last layer of $Q$-network. 
\end{theorem}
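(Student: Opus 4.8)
The plan is to turn the Bellman equation \eqref{eq: Q-learning} into a statement purely about representations, extract from it a linear identity involving the reward $r(s,a)$, and then isolate the inner product via Cauchy--Schwarz together with the boundedness of the normalized representations. Everything beyond that is algebra; the only probabilistic ingredient is Jensen's inequality.

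First I would substitute the decomposition \eqref{eq: def representation} into \eqref{eq: Q-learning}. Writing $w := \Theta_{-1}$ and using that the target network is a (delayed) copy of the $Q$-network so that it carries the same last linear layer, the Bellman equation reads $\langle \Phi(s,a;\Theta_+), w \rangle = r(s,a) + \gamma \langle \mathbb{E}_{s',a'}\Phi(s',a';\Theta'_+), w \rangle$ (the expectation passes through the inner product since $w$ is deterministic), hence
\[
  \big\langle\, \Phi(s,a;\Theta_+) - \gamma\, \mathbb{E}_{s',a'}\Phi(s',a';\Theta'_+),\ w \,\big\rangle = r(s,a).
\]
Second, applying Cauchy--Schwarz to this identity gives $r(s,a)^2 \le \lVert \Phi(s,a;\Theta_+) - \gamma\, \mathbb{E}_{s',a'}\Phi(s',a';\Theta'_+) \rVert^2 \, \lVert w \rVert^2$. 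Third, I would expand the squared norm as $\lVert \Phi \rVert^2 - 2\gamma \langle \Phi, \mathbb{E}_{s',a'}\Phi \rangle + \gamma^2 \lVert \mathbb{E}_{s',a'}\Phi \rVert^2$; since the representations are normalized we have $\lVert \Phi \rVert^2 \le 1$, and Jensen's inequality applied to the convex Euclidean norm gives $\lVert \mathbb{E}_{s',a'}\Phi \rVert \le \mathbb{E}_{s',a'}\lVert \Phi \rVert \le 1$. Rearranging yields $2\gamma \langle \Phi, \mathbb{E}_{s',a'}\Phi \rangle \le 1 + \gamma^2 - r(s,a)^2/\lVert w \rVert^2$. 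Finally, dividing by $2\gamma$ and using $0 < \gamma \le 1$ twice --- once as $\tfrac{1+\gamma^2}{2\gamma} = \tfrac{1}{2\gamma} + \tfrac{\gamma}{2} \le \tfrac{1}{\gamma}$, and once as $\tfrac{1}{2\gamma} \ge \tfrac{1}{2}$ so that $\tfrac{r(s,a)^2}{2\gamma \lVert w \rVert^2} \ge \tfrac{r(s,a)^2}{2 \lVert w \rVert^2}$ --- gives exactly \eqref{app thm: representation gap}.

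The main obstacle is not the computation but the modeling step that lets the $Q$-network and its target share the last linear layer $\Theta_{-1}$, since that is precisely what allows the same $w$ to be factored out of both inner products in the Bellman identity; I would justify it through the target-update rule $\theta' \leftarrow \eta \theta + (1-\eta)\theta'$, under which the target is a slowly moving copy of the learning network, or equivalently by working at the Bellman fixed point where $\Theta' = \Theta$ (which is also where the representation-form Bellman equation is exact). A secondary point requiring care is pinning down ``normalized'': one must fix a scaling under which $\lVert \Phi \rVert \le 1$ (consistent with \cref{ass: theta-prod}) and make sure the $\lVert \Theta_{-1} \rVert$ in the bound refers to the last layer after that scaling. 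With those two conventions fixed, the rest is the Cauchy--Schwarz estimate and the $\gamma \le 1$ bookkeeping above.
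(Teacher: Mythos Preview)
Your proposal is correct and follows essentially the same route as the paper: rewrite the Bellman equation in representation form, use $\Theta'=\Theta$ at the fixed point to factor out $\Theta_{-1}$, apply Cauchy--Schwarz to lower-bound $\lVert \Phi - \gamma\,\mathbb{E}\Phi'\rVert$, then expand the squared norm and use $\gamma\le 1$ to finish. The only cosmetic difference is that the paper packages the expansion of $\lVert \Phi - \gamma\,\mathbb{E}\Phi'\rVert^2$ as a law-of-cosines computation on the unit sphere, whereas you expand it algebraically and invoke Jensen for $\lVert \mathbb{E}\Phi'\rVert\le 1$; these are the same step in different clothing.
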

	
\begin{proof}\label{proof: size of representation gap}
Following \cref{eq: def representation}, the Bellman Equation \cref{eq: Q-learning} can be rewritten as 
\begin{equation}
\begin{split}
    \Phi(s,a; \Theta_+)^\top \Theta_{-1} &= r(s,a) + \gamma \mathbb{E}_{s',a'} \Phi(s',a'; \Theta'_{+}) ^\top \Theta'_{-1}.
\end{split}
\end{equation}
	After the policy evaluation converges, $\Theta 
 \text{ and }\Theta'$ satisfy $\Theta = \Theta'.$ Thus we have

\begin{equation}
\begin{split}
    (\Phi(s,a; \Theta_+)^\top - \gamma \mathbb{E}_{s',a'} \Phi(s',a'; \Theta'_{+}) ^\top) \Theta_{-1} &= r(s,a) \\
    \lVert (\Phi(s,a; \Theta_+)^\top - \gamma \mathbb{E}_{s',a'} \Phi(s',a'; \Theta'_{+}) ^\top) \Theta_{-1} \rVert &= | r(s,a)| \\
    \lVert (\Phi(s,a; \Theta_+)^\top - \gamma \mathbb{E}_{s',a'} \Phi(s',a'; \Theta'_{+}) ^\top)\rVert \lVert \Theta_{-1}  \rVert \cos{\varphi} &= |r(s,a)| \\ 
    \lVert (\Phi(s,a; \Theta_+)^\top - \gamma \mathbb{E}_{s',a'} \Phi(s',a'; \Theta'_{+}) ^\top)\rVert \lVert \Theta_{-1} \rVert &\geq | r(s,a)| \\ 
    \lVert (\Phi(s,a; \Theta_+)^\top - \gamma \mathbb{E}_{s',a'} \Phi(s',a'; \Theta'_{+}) ^\top)\rVert  &\geq  \frac{|r(s,a)|}{\lVert \Theta_{-1} \rVert}. \\ 
\end{split}
\label{eqn: repre-norm}
	\end{equation}
 
\begin{figure}[!hbp]
	\hspace{-0.6in}
	\begin{center}
		\includegraphics[width=0.4\textwidth]{./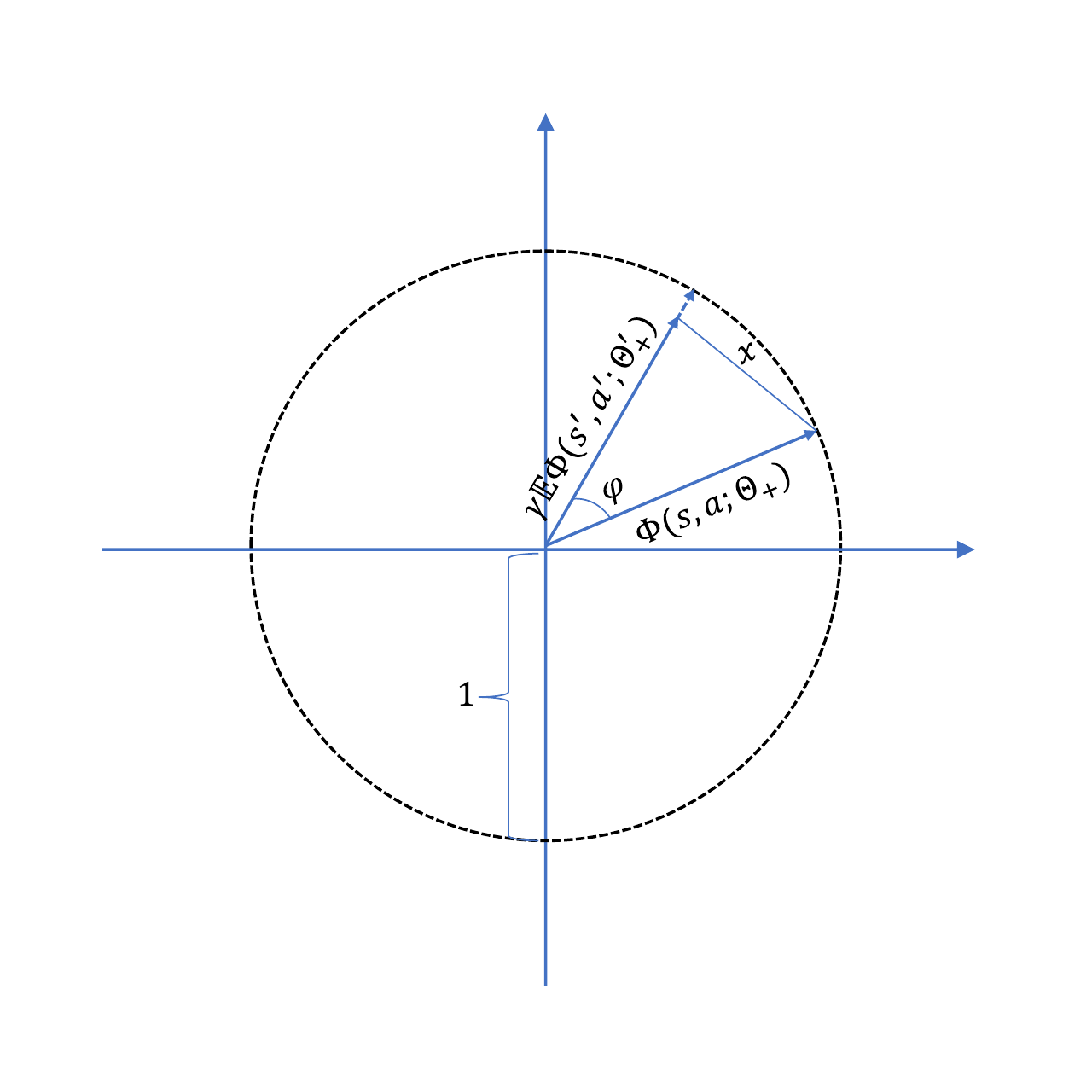}
	\end{center} 
	\caption{\label{fig: repr}Normalized representation vectors.} 
\end{figure}
 Since the representation vectors are normalized, they should co-exist on some tangent plane as visualized in \cref{fig: repr}. Let $x$ be $ \| \Phi(s,a; \Theta_+) - \gamma \mathbb{E}  \Phi(s',a'; \Theta'_{+}\|$, then we have $x \geq \frac{|r(s,a)|}{\lVert \Theta_{-1} \rVert}$, and 
 \begin{equation}
     \begin{aligned}
         \cos \varphi = \frac{ \| \Phi(s,a; \Theta_+) \|^2 + \| \gamma \mathbb{E}  \Phi(s',a'; \Theta'_{+})\|^2 - x^2}{2 \| \Phi(s,a; \Theta_+) \| \| \gamma \mathbb{E}  \Phi(s',a'; \Theta'_{+} )\|}.
     \end{aligned}
 \end{equation}
Now we have 
\begin{equation}
    \begin{aligned}
        \langle \Phi(s,a; \Theta_+), \gamma \mathbb{E}_{s',a'} \Phi(s',a'; \Theta'_{+}) \rangle &= \| \Phi(s,a; \Theta_+) \| \| \gamma \mathbb{E}_{s',a'} \Phi(s',a'; \Theta'_{+}) \| \cos \varphi \\
        &= 1 \cdot \| \gamma \mathbb{E}_{s',a'} \Phi(s',a'; \Theta'_{+}) \| \cdot \frac{ \| \Phi(s,a; \Theta_+) \|^2 + \| \gamma \mathbb{E}  \Phi(s',a'; \Theta'_{+})\|^2 - x^2}{2 \| \Phi(s,a; \Theta_+) \| \| \gamma \mathbb{E}  \Phi(s',a'; \Theta'_{+} )\|} \\
        &= \frac{1 + \| \gamma \mathbb{E}  \Phi(s',a'; \Theta'_{+} )\|^2 - x^2 }{2} \\
        &\leq \frac{1 + \gamma ^2}{2} - \frac{r(s,a)^2}{2\lVert \Theta_{-1} \rVert^2} \\
        & \leq 1 - \frac{r(s,a)^2}{2\lVert \Theta_{-1} \rVert^2}.
    \end{aligned}
\end{equation}

Thus, we have

\begin{equation}
    \begin{aligned}
        \langle \Phi(s,a; \Theta_+),  \mathbb{E}_{s',a'} \Phi(s',a'; \Theta'_{+}) \rangle & \leq \frac{1}{\gamma} - \frac{r(s,a)^2}{2 \gamma \lVert \Theta_{-1} \rVert^2} \\
        & \leq \frac{1}{\gamma} - \frac{r(s,a)^2}{2 \lVert \Theta_{-1} \rVert^2}.
    \end{aligned}
\end{equation}
\end{proof}

In the following, for notational simplicity, we use $X_i$ to denote $S_i, A_i$ for all $i\in [n]$. For any $f \in \mathcal{F}$, $\lVert f \rVert_n^2 = 1/n \cdot \sum_{i=1}^{n}[f(X_i)]^2$. Since both $\hat{O}$ and $TQ$ are bounded by $V_{\max} = R_{\max}/(1 - \gamma)$, we only need to consider the case where $\log{N_{\delta}} \leq n$. 

Let $f_1, \cdots, f_{N_{\delta}}$ be the centers of minimal $\delta$-cover the of $\mathcal{F}$. By the definition of $\delta$-cover, there exists $k^* \in [N_{\delta}]$ such that $\lVert \hat{O} - f_{k^*} \rVert_{\infty} \leq \delta$. Notice that $k^*$ is a random variable since $\hat{O}$ is obtained from data. 
\begin{theorem}[One-step Approximation Error of PEER Update]
Suppose \cref{ass: theta-prod} hold, let $\mathcal{F}\subseteq \mathcal{B}(\mathcal{S}\times\mathcal{A})$ be a class of measurable function on $\mathcal{S}\times \mathcal{A}$ that are bounded by $V_{\max} = R_{\max}/(1 - \gamma)$, and let $\sigma$ be a probability distribution on $\mathcal{S} \times \mathcal{A}$. Also, let $\{(S_i, A_i)\}_{i\in[n]}$ be $n$ i.i.d. random variables in following $\sigma$. Based on $\{(X_i, A_i, Y_i)\}_{i\in[n]}$, we define $\hat{O}$ as the solution to the lease-square with regularization problem,
\begin{equation}
    \min_{f\in \mathcal{F}} \frac{1}{n}\sum_{i=1}^n[f(S_i, A_i) - Y_i]^2 + \beta\Phi(s,a;\Theta)\mathbb{E}\Phi_{s',a'}(s',a';\Theta').
\end{equation}
At the same time, for any $\delta > 0$, let $\mathcal{N}(\delta, \mathcal{F}, \lVert\cdot\rVert_{\infty})$ be the 
\begin{equation}
\label{app conv-result}
    \lVert\hat{O} - TQ\rVert_\sigma^2 \leq (1+\epsilon)^2\cdot \omega(\mathcal{F}) + C\cdot V_{\max}^2/(n\cdot \epsilon) + C'\cdot V_{\max}\cdot \delta + 2\beta\cdot G^2,
\end{equation}
where $C$ and $C'$ are two absolute constants and is defined as \begin{equation}
    \omega(\mathcal{F}) = \sup_{g\in\mathcal{F}}\inf_{f\in\mathcal{F}} \lVert f - Tg \rVert_{\sigma}.
\end{equation}
\label{app thm:convergence}
\end{theorem}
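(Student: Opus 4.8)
The plan is to reduce the statement to the classical one-step error bound for least-squares (fitted-$Q$) value iteration and to show that the PEER penalty perturbs that bound only by the additive constant $2\beta G^2$. First I would record that the regularizer is uniformly bounded: for any state-action pair and any weights, Cauchy--Schwarz together with Jensen's inequality and \cref{ass: theta-prod} give
\begin{equation*}
\bigl|\langle \Phi(s,a;\Theta_+),\, \mathbb{E}_{s',a'}\Phi(s',a';\Theta'_+)\rangle\bigr| \le \lVert\Phi(s,a;\Theta_+)\rVert \cdot \mathbb{E}_{s',a'}\lVert\Phi(s',a';\Theta'_+)\rVert \le G^2 ,
\end{equation*}
and the same bound survives any averaging over the sample. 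Hence, if $f^\dagger\in\mathcal F$ is any fixed competitor, optimality of $\hat{O}$ for the \emph{regularized} empirical objective yields
\begin{equation*}
\frac1n\sum_{i=1}^n\bigl[\hat{O}(X_i)-Y_i\bigr]^2 \le \frac1n\sum_{i=1}^n\bigl[f^\dagger(X_i)-Y_i\bigr]^2 + 2\beta G^2 ,
\end{equation*}
i.e. $\hat{O}$ is a $2\beta G^2$-approximate minimizer of the \emph{unregularized} empirical risk over $\mathcal F$. From this point on the PEER-specific content is finished, and the remainder is the standard analysis with this extra slack carried along.

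Next I would run the classical decomposition. Write $\xi_i = Y_i-(TQ)(X_i)$, so that $\mathbb{E}[\xi_i\mid X_i]=0$ and $|\xi_i|\le 2V_{\max}$. Choosing $f^\dagger$ so that $\lVert f^\dagger - TQ\rVert_\sigma$ is within $\delta$ of the inherent Bellman error $\omega(\mathcal F)$, expand $\lVert\hat{O}-TQ\rVert_n^2$ using the basic inequality above; this produces a bias term governed by $\omega(\mathcal F)$, a cross term $\tfrac2n\sum_i\xi_i(\hat{O}(X_i)-f^\dagger(X_i))$, and the residual $2\beta G^2$. For the cross term, replace $\hat{O}$ by the center $f_{k^*}$ of the minimal $\delta$-cover (using $\lVert\hat{O}-f_{k^*}\rVert_\infty\le\delta$, at cost $O(V_{\max}\delta)$), apply a Bernstein-type tail bound with a union bound over the $N_\delta$ centers, and split the resulting product with Young's inequality $ab\le \tfrac{\epsilon}{2}a^2+\tfrac1{2\epsilon}b^2$ (using $\log N_\delta\le n$); this converts the cross term into $\epsilon\,\lVert\hat{O}-TQ\rVert_\sigma^2$ plus a statistical error of order $C\,V_{\max}^2/(n\epsilon)$. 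A further covering-number concentration step passes from the empirical norm $\lVert\cdot\rVert_n$ back to the population norm $\lVert\cdot\rVert_\sigma$; collecting constants and absorbing the $(1+\epsilon)$ factors into the $\omega(\mathcal F)$ term then yields exactly \eqref{app conv-result}.

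The main obstacle is the uniform concentration: controlling the empirical process $\sup_{k\in[N_\delta]}\tfrac1n\sum_i\xi_i(f_k-f^\dagger)(X_i)$ and the empirical-versus-population norm gap, which needs Bernstein's inequality, the covering argument, and careful bookkeeping of the $\epsilon$-dependent constants. This is precisely the classical $Q$-learning one-step-error argument (as in Fan et al.), and I would invoke it as a lemma rather than re-derive it. The only genuinely new ingredient is the elementary boundedness of the PEER penalty above, which is exactly why the final rate changes only by the benign additive constant $2\beta G^2$; accordingly I would give full detail for that step and for the point where this constant is threaded through the decomposition, and cite the existing result for the rest.
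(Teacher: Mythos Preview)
Your proposal is correct and follows essentially the same route as the paper: both bound the PEER regularizer by $G^2$ via Cauchy--Schwarz and \cref{ass: theta-prod}, so that optimality of $\hat{O}$ gives a $2\beta G^2$-approximate empirical risk minimizer, and then both run the standard fitted-$Q$ one-step analysis (covering, sub-Gaussian/Bernstein concentration for the cross term, empirical-to-population norm transfer, and a Young-type splitting with parameter $\epsilon$) with the $2\beta G^2$ slack carried additively throughout. The only cosmetic difference is that the paper re-derives the classical argument in full rather than invoking it as a lemma, and uses the specific splitting $2ab\le \tfrac{\epsilon}{1+\epsilon}a^2+\tfrac{1+\epsilon}{\epsilon}b^2$ to produce the $(1+\epsilon)^2$ prefactor.
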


\begin{proof}
\textbf{Step (i):}
We relate $\mathbb{E}[\lVert \hat{O} - TQ \rVert_n^2]$ with its empirical counterpart $\lVert \hat{O} - TQ \rVert_n^2$. Since $Y_i = R_i + \gamma\max_{a\in\mathcal{A}}Q(S_{i+1}, a)$ for each $i \in [n]$. By the definition of $\hat{O}$, for any $f \in \mathcal{F}$ we have
\begin{equation}
\label{C.31}
    \sum_{i=1}^n [Y_i - \hat{O}(X_i)]^2 + \beta\Phi^{\top}(X_{i};\Theta_{\hat{O}})\mathbb{E}\Phi_{X_{i+1}}(X_{i+1};\Theta_{\hat{O}}') \leq \sum_{i=1}^n [Y_i - f(X_i)]^2 + \beta\Phi^{\top}(X_{i};\Theta_f)\mathbb{E}\Phi_{X_{i+1}}(X_{i+1};\Theta_f').
\end{equation}
For each $i \in [n]$, we define $\xi_i = Y_i - (TQ)(X_i)$. Then \cref{C.31} can be rewritten as
\begin{equation}
\begin{split}
       \lVert \hat{O} - TQ \rVert_n^2  \leq \lVert f - TQ \rVert_n^2 + \frac{1}{n}\sum_{i=1}^n \left[2\xi_i[\hat{O}(X_i) - f(X_i)] + \beta\left(\Phi^{\top}(X_i;\Theta_f)\mathbb{E}\Phi^\top(X_{i+1};\Theta_f') - \Phi^\top(X_i;\Theta_{\hat{O}})\mathbb{E}\Phi(X_{i+1};\Theta_{\hat{O}}')\right)\right].
\end{split}
    \label{C.32}
 \end{equation}
 
 We start by bounding the value of $\left(\Phi^\top(X_i;\Theta_f)\mathbb{E}\Phi(X_{i+1};\Theta_f') - \Phi^\top(X_i;\Theta_{\hat{O}})\mathbb{E}\Phi(X_{i+1};\Theta_{\hat{O}}')\right)$. 
 First, by Cauchy-Schwartz Inequality, we have \begin{equation}
    \left\vert\Phi(X_i;\Theta_f)\mathbb{E}\Phi(X_{i+1};\Theta_f')\right\vert \leq \sqrt{\lVert \Phi(X_i;\Theta_{f,+})\rVert^2} \cdot \sqrt{\lVert \mathbb{E}\Phi(X_{i+1};\Theta_{f,+}')\rVert^2} \leq G^2,
 \end{equation}
 where we used \cref{ass: theta-prod} for the second inequality.
  Thus, by triangle inequality, we have 
 \begin{equation}
     \left\vert\Phi^\top(X_i;\Theta_f)\mathbb{E}\Phi(X_{i+1};\Theta_f') - \Phi(X_i;\Theta_{\hat{O}})\mathbb{E}\Phi(X_{i+1};\Theta_{\hat{O}}'\right\vert \leq 2G^2.
 \end{equation}

 And \cref{C.32} reduces to \begin{equation}
    \lVert \hat{O} - TQ \rVert_n^2  \leq \lVert f - TQ \rVert_n^2 + \frac{2}{n}\sum_{i=1}^n \left[\xi_i[\hat{O}(X_i) - f(X_i)] + \beta G^2\right].
    \label{new-C.32}
 \end{equation}
Then we bound the rest on the right side of \cref{C.32}. Since both $f$ and $Q$ are deterministic, we have $\mathbb{E}(\lVert f - TQ\rVert^2_n) = \lVert f - TQ \rVert^2_{\sigma}$. Moreover, since $\mathbb{E}(\xi_i|X_i) = 0$ by definition, we have $\mathbb{E}[\xi_i \cdot g(X_i)] = 0$ for any bounded and measurable function $g$. Thus it holds that 
\begin{equation}
\label{eqn: subst}
    \mathbb{E}\left\{\sum_{i=1}^n \xi_i\cdot[\hat{O}(X_i) - f(X_i)]\right\} = \mathbb{E}\left\{\sum_{i=1}^n \xi_i\cdot[\hat{O} - (TQ)(X_i)]\right\}.
\end{equation}
In addition, by triangle inequality and \cref{eqn: subst} we have 
\begin{equation}
   \left\vert\mathbb{E}\left\{\sum_{i=1}^n\xi_i\cdot[\hat{O}(X_i) - (TQ)(X_i)]\right\}\right\vert 
   \leq \left\vert\mathbb{E}\left\{\sum_{i=1}^n \xi_i \cdot[\hat{O} - f_{k^*}(X_i)]\right\}\right\vert + \left\vert\mathbb{E}\left\{\sum_{i=1}^n \xi_i \cdot [f_{k^*}(X_i) - (TQ)(X_i)]\right\}\right\vert
   \label{two-terms},
\end{equation}
where $f_{k*}$ satisfies $\lVert f_{k*} \rVert \leq \delta$. In the following, we upper bound the two terms on the right side of \cref{two-terms} respectively. For the first term, by applying the Cauchy-Schwarz inequality twice, we have
\begin{equation}
\begin{split}
    \left\vert \mathbb{E}\left\{\sum_{i=1}^n \xi_i\cdot[\hat{O} - f_{k^*}(X_i)]\right\} \right\vert &\leq \sqrt{n}\cdot\left\vert \mathbb{E} \left[\left(\sum_{i=1}^n \xi_i^2\right)^{1/2}\cdot\lVert \hat{O} - f_{k^*}\rVert_n\right]\right\vert \\
    &\leq \sqrt{n}\cdot[\mathbb{E}(\sum_{i=1}^n \xi_i^2)]^{1/2}\cdot[\mathbb{E}(\lVert \hat{O}- f_{k^*} \rVert_n^2)]^{1/2} \leq n\delta \cdot [\mathbb{E}(\xi_i^2)]^{1/2}.
\end{split}
    \label{C.35}
\end{equation}
where we use the fact that $\{\xi_i\}_{i\in[n]}$ have the same marginal distributions and $\lVert \hat{O} - f_{k*} \rVert_n \leq \delta$. Since both $Y_i$ and $TQ$ are bounded by $V_{\max}$, $\xi_i$ is a bounded random variable by its definition. Thus, there exists a constant $C_{\xi} > 0$ depending on $\xi$ such that $\mathbb{E}(\xi_i^2) \leq C_{\xi}^2\cdot V_{\max}^2$. Then \cref{C.35} implies
\begin{equation}
    \left\vert \mathbb{E}\left\{\sum_{i=1}^n\xi_{i}\cdot [\hat{O}(X_i) - f_{k^*}(X_i)]\right\} \right\vert \leq C_{\xi}\cdot V_{\max} \cdot n\delta.
    \label{C.36}
\end{equation}
It remains to upper bound the second term on the right side of \cref{two-terms}. We define $N_\delta$ self-normalized random variables 
\begin{equation}
    Z_j = \frac{1}{\sqrt{n}}\sum_{i=1}^n\xi_i \cdot [f_j(X_i) - (TQ)(X_i)] \cdot \lVert f_j - (TQ)\rVert_n^{-1}
    \label{C.37}
\end{equation}
for all $j \in [N_{\delta}]$. Here recall that $\left\{f_j\right\}_{j\in[N_{\delta}]}$ are the centers of the minimal $\delta$-covering of $\mathcal{F}$. Then we have
\begin{equation}
\begin{split}
    \left\vert \mathbb{E}\left\{\sum_{i=1}^n \xi_i\cdot[ f_{k*}(X_i) - (TQ)(X_i)]  \right\}\right\vert &= \sqrt{n}\cdot\mathbb{E}[{\lVert f_{k^*} - TQ\rVert_n} \cdot \vert Z_{k^*}\vert] \\
    &\leq \sqrt{n}\cdot\mathbb{E}\left\{[\lVert \hat{O} - TQ \rVert_n + \lVert \hat{O} - f_{k^{*}}\rVert_n] \cdot\vert Z_{k^*}\vert\right\} \leq \sqrt{n}\cdot\left\{[\lVert \hat{O} - TQ\rVert_n+ \delta]\cdot \vert Z_{k^*} \vert\right\},
\end{split}
\label{C.38}
\end{equation}
where the first inequality follows from triangle inequality and the second follows from the fact that $\leq \delta$ \cref{C.38}, we obtain 
\begin{equation}
\begin{split}
    \mathbb{E}\left\{[\lVert \hat{O} - TQ \rVert_n+ \delta] \cdot \vert Z_{k^*}\vert\right\} &\leq \left(\mathbb{E}\left\{[\lVert \hat{O} - TQ \rVert_n + \delta]^2\right\}\right)^{1/2}\cdot[\mathbb{E}(Z_{k^*}^2)]^{1/2} \\
&\leq \left(\left\{\mathbb{E}[\lVert \hat{O} - TQ \rVert_n^2]\right\}^{1/2} + \delta\right)\cdot[\mathbb{E}(\max_{j\in[n]} Z_j^2)]^{1/2},
\end{split}    \label{C.39}
\end{equation}
where the last inequality follows from 
\begin{equation}
    \mathbb{E}[{\lVert \hat{O} - TQ\rVert_n}] \leq \left\{\mathbb{E}[\lVert \hat{O} - TQ\rVert_n^2]\right\}^{1/2}.
\end{equation}
Moreover, since $\xi_i$ is centered conditioning on $\left\{X_i\right\}$, $\xi_i$ is a sub-Gaussian random variable. Specifically, there exists an absolute constant $H_{\xi} > 0$ such that $\lVert \xi_i \rVert_{\psi_2} \leq H_{\xi}\cdot V_{\max}$ for each $i \in [n]$. Here the $\psi_2$-norm of a random variable $W$ is defined as \begin{equation}
    \lVert W\rVert_{\psi_2} = \sup_{p\geq 1}p^{-1/2}[\mathbb{E}({\vert W\vert^p})]^{1/p}.
\end{equation}
By the definition of $Z_j$ in \cref{C.37}, conditioning on $\left\{X_i\right\}_{i\in[n]}$, $\xi_i\cdot[f_j(X_i) - (TQ)(X_i)]$ is a centered and sub-Guassian random variable with \begin{equation}
   \lVert \xi_i \cdot [f_j(X_i) - TQ(X_i)] \rVert_{\psi_2} \leq H_{\xi} \cdot V_{\max} \cdot \vert f_j(X_i) - (TQ)(X_i)\vert.
\end{equation}
Moreover, since $Z_j$ is a summation of independent sub-Gaussian random variables, by Lemma 5.9 of \cite{vershynin2010introduction}, the $\psi_2$-norm of $Z_j$ satisfies 
\begin{equation}
    \lVert Z_j\rVert_{\psi_2} \leq C\cdot H_{\xi} \cdot V_{\max} \cdot \lVert f_j - TQ \rVert_n^{-1} \cdot \left[\frac{1}{n}\sum_{i=1}^n \vert [f_j(X_i) - (TQ)(X_i)] \vert^2 \right]^{1/2} \leq C\cdot H_{\xi} \cdot V_{\max},
\end{equation} where $C > 0$ is an absolute constant. Furthermore, by Lemma 5.14 and 5.15 of \cite{vershynin2010introduction}, $Z_j^2$ is a sub-exponential random variable, and its moment-generating function is bounded by 
\begin{equation}
\label{mgf}
    \mathbb{E}\left[\exp(t \cdot Z_j^2)\right] \leq \exp(C \cdot t^2 \cdot H_{\xi}^4 \cdot V_{\max}^4)
\end{equation}
for any $t$ satisfying $C' \cdot |t| \cdot H_{\xi}^2 \cdot V_{\max}^2 \leq 1$, where $C$ and $C'$ are two positive absolute constants. Moreover, by Jensen's Inequality, we bound the moment-generating function of $\max_{j\in[N_\delta]}Z_j^2$ by 
\begin{equation}
\label{C.41}
    \mathbb{E}\left[\exp(t \cdot \max_{j \in [N_\delta]} Z_j^2)\right] \leq \sum_{j \in [N_\delta]}\mathbb{E}[\exp(t\cdot Z_j^2)].
\end{equation}
Combining \cref{mgf} and \cref{C.41}, we have
\begin{equation}
\label{C.42}
    \mathbb{E}(\max_{j \in [N]}Z_j^2) \leq C^2 \cdot H_{\xi}^2 \cdot V_{\max}^2 \cdot\log{N_\delta},
\end{equation}
where $C>0$ is an absolute constant. Hence, plugging \cref{C.42} into \cref{C.38} and \cref{C.39}, we upper bound the second term of \cref{eqn: subst} by \begin{equation}
    \begin{split}
        \left\vert \mathbb{E}\left\{\sum_{i=1}^n \xi_i \cdot [f_{k^*}(X_i) - (TQ)(X_i)]\right\} \right\vert 
        &\leq \left(\left\{\mathbb{E}\lVert \hat{O} - TQ \rVert_n^2\right\}^{1/2} + \delta\right)\cdot C \cdot H_{\xi} \cdot V_{\max} \cdot \sqrt{n\cdot \log{N_{\delta}}}.
    \end{split}
    \label{C.43}
\end{equation}
Finally, combining \cref{new-C.32}, \cref{C.36} and \cref{C.43}, we obtain the following inequality
\begin{equation}
    \begin{split}
       \mathbb{E}[\lVert \hat{O} - TQ \rVert_n^2] &\leq \inf_{f\in \mathcal{F}}\mathbb{E}[\lVert f - TQ \rVert_n^2] + C_{\xi}\cdot V_{\max} \cdot \delta \\
       &+ \left(\left\{\mathbb{E}\lVert \hat{O} - (TQ)\rVert \right\}^{1/2}+ \delta\right)\cdot C\cdot H_{\xi} \cdot V_{\max} + \sqrt{\log{N_{\delta}/n}} + 2\cdot \beta\cdot G^2 \\
       &\leq C\cdot V_{\max}\sqrt{\log{N_\delta/n}} + \inf_{f \in \mathcal{F}}\mathbb{E}[\lVert f - TQ \rVert_n^2] + C'\cdot V_{\max}\delta + 2\cdot\beta\cdot G^2,
    \end{split}
    \label{C.44}
\end{equation}
where $C$ and $C'$ are two constants. Here in the first inequality we take the infimum over $\mathcal{F}$ because \cref{C.31} holds for any $f \in \mathcal{F}$, and the second inequality holds because $\log{N_\delta} \leq n$.

Now we invoke a fact to obtain the final bound for $\mathbb{E}[\lVert\hat{O} - TQ\rVert_n^2]$ from \cref{C.44}. Let $a, b$ and $c$ be positive numbers satisfying $a^2 \leq 2ab + c$. For any $\epsilon \in (0, 1]$, since $2ab \leq \frac{\epsilon}{1 + \epsilon}a^2 + \frac{1 + \epsilon}{\epsilon}b^2$, we have
\begin{equation}
    a^2 \leq (1 + \epsilon)^2\cdot b^2/\epsilon + (1 + \epsilon)\cdot c.
\label{C.45}
\end{equation}
Therefore, applying \cref{C.45} to \cref{C.44} with $a^2 = \mathbb{E}[\lVert \hat{O} - TQ \rVert_n^2]$, $b = C \cdot V_{\max} \cdot \sqrt{\log{N_{}}}$ and $c = \inf_{f\in\mathcal{F}}\mathbb{E}[\lVert f - TQ \rVert_n^2] + C'\cdot V_{\max} \cdot \delta$, we obtain
\begin{equation}
    \mathbb{E}[\lVert \hat{O}- TQ \rVert_n^2] \leq (1 + \epsilon) \cdot \inf_{f\in\mathcal{F}}\mathbb{E}[\lVert f - TQ \rVert_n^2] + C\cdot V_{\max}^2\cdot \log{N_{\delta}}/(n\epsilon) + C'\cdot V_{\max}\cdot \delta + 2\beta G^2,
    \label{C.46}
\end{equation}
where $C$ and $C'$ are two positive absolute constants. This concludes the first step.

\textbf{Step (ii):} In this step, we relate the population risk $\lVert \hat{O} - TQ\rVert^2_\delta$ with $\mathbb{E}[\lVert \hat{O} - TQ\rVert^2_n]$, which is bounded in the first step. To begin with, we generate $n$ i.i.d. random variables $\left\{\tilde{X}_i = (\tilde{S}_i, \tilde{A}_i)\right\}_{i\in[n]}$ following $\sigma$, independent of $\{(S_i, A_i, R_i, S'_i)\}_{i\in[n]}$. Since $\lVert\hat{O} - f_{k^{*}} \rVert_{\infty} \leq \delta$, for any $x \in \mathcal{S} \times \mathcal{A}$, we have
\begin{equation}
\begin{split}
    \left\vert [\hat{O}(x) - (TQ)(x)]^2 - [f_{k^*}(x) - (TQ)(x)]^2 \right\vert = \left\vert \hat{O}(x) - f_{k^*}(x)\right\vert \cdot \left\vert \hat{O}(x) + f_{k^*}(x) - 2(TQ)(x)\right\vert \leq 4V_{\max}\cdot\delta,
\end{split}
\label{C.47}
\end{equation}
where the last inequality follows from the fact that $\lVert TQ \rVert_{\infty} \leq V_{\max}$ and $\lVert f\rVert_{\infty} \leq V_{\max}$ for any $f\in \mathcal{F}$. 

Then by the definition of $\lVert \hat{O} - TQ\rVert^2_{\delta}$ and \cref{C.47}, we have 
\begin{equation}
    \begin{split}
    \lVert \hat{O} - TQ \rVert_{\sigma}^2 &= \mathbb{E}\left\{\frac{1}{n}\sum_{i=1}^n [\hat{O}(\tilde{X_i}) - (TQ)(\tilde{X_i})]^2\right\}\\
    &\leq \mathbb{E}\left\{\lVert\hat{O} - TQ\rVert^2_n+ \frac{1}{n}\sum_{i=1}^n [f_{k^*}(\tilde{X_i}) -  (TQ)(\tilde{X_i})]^2 - \frac{1}{n}\sum_{i=1}^n [f_{k^*}(X_i) - (TQ)(\tilde{X_i})]^2\right\} + 8V_{\max}\cdot \delta\\
    &= \mathbb{E}(\lVert \hat{O} - TQ\rVert^2_n) + \mathbb{E}[\frac{1}{n}\sum_{}^n h_{k^*}(X_i, \tilde{X_i})] + 8V_{\max}\cdot \delta,
\end{split}
\label{C.48}
\end{equation}
where we apply \cref{C.47} to obtain the first inequality, and in the last equality we define 
\begin{equation}
    h_j(x, y) = [f_j(y) - (TQ)(y)]^2 - [f_j(x) - (TQ)(x)]^2,
    \label{C.49}
\end{equation} for any $x, y \in \mathcal{S} \times \mathcal{A}$ and any $j\in [N_{\delta}]$. Note that $h_{k^*}$ is a random function since $k^*$ is random. By the definition of $h_j$, we have $\vert h_j(x, y) \vert \leq 4V_{\max}^2$ for any $(x, y) \in \mathcal{S} \times \mathcal{A}$ and $\mathbb{E}[h_j(X_i, \tilde{X}_i)] = 0$ for any $i \in [n]$.
Moreover, the variance of $h_j(X_i, \tilde{X_i})$ satisfies
\begin{equation}
    \begin{split}
        \Var[h_j(X_i, \tilde{X_i})] &= 2\Var{\left\{[f_j(X_i) - (TQ)(X_i)]^2\right\}}\\
        &\leq 2\mathbb{E}\left\{[f_j(X_i) - (TQ)(X_i)]^4\right\} \leq 8\Upsilon^2\cdot V_{\max}^2,
    \end{split}
    \label{C.50}
\end{equation}
where we define $\Upsilon$ by letting
\begin{equation}
    \Upsilon = \max(4V_{\max}^2 \cdot \log{N_{\delta}/n}, \max_{j\in [N_\delta]} \mathbb{E}\left\{[f_j(X_i) - (TQ)(X_i)]^2\right\}).
\end{equation}

Furthermore, we define \begin{equation}
   T = \sup_{j\in[N_\delta]}\left\vert \sum_{i=1}^n h(X_i, \tilde{X_i})/\Upsilon \right\vert.
    \label{C.51}
\end{equation}
Combining \cref{C.48} and \cref{C.51}, 
\begin{equation}
   \lVert \hat{O} - TQ\rVert_{\sigma}^2 \leq \mathbb{E}[\lVert \hat{O} - TQ \rVert_n^2] + \Upsilon/n \cdot \mathbb{E}[T] + 8V_{\max} \cdot \delta.
   \label{C.52}
\end{equation}
In the following, we use Bernstein's Inequality to establish an upper bound for $\mathbb{E}(T)$:
\begin{lemma}(Bernstein's Inequality)
\label{lem:Bernstein}
Let $U_1, \cdots, U_n$ be n independent random variables satisfying $\mathbb{E}(U_i) = 0$ and $ \leq$ for all $i \in [n]$. Then for any $t > 0$,  we have
\begin{equation}
    \mathbb{P}\left(\left\vert\sum_{i=1}^n U_i \right\vert \geq t\right) \leq 2\exp(\frac{-t^2}{2M\cdot t/3 + 2 \sigma^2}), 
\end{equation}
where $\sigma^2 = \sum_{i=1}^n $ is the variance of $\sum_{i=1}^n U_i$.
\end{lemma}
We first apply Bernstein's Inequality by setting $U_i = h_j(X_i,\tilde{X_i})/\Upsilon$ for each $i \in [n]$. Then we take a union bound for all $j \in [N_{\delta}]$ to obtain
\begin{equation}
    \mathbb{P}(T\geq t) = \mathbb{P}\left[\sup_{j\in [N_\delta]}\frac{1}{n}\left\vert\sum_{i=1}^n h_j(X_i, \tilde{X_i}
    )/\Upsilon\right\vert\geq t\right] \leq 2N_{\delta}\cdot \exp\left\{\frac{-t^2}{8V_{\max}^2\cdot [t/(3\Upsilon) + n]}\right\} 
    \label{C.53}.
\end{equation}
Since $T$ is nonnegative, $\mathbb{E}(T) = \int_{0}^{\infty}\mathbb{P}(T\geq t)dt$. Thus, for any $u \in (0, 3\Upsilon\cdot n)$,
\begin{equation}
\begin{split}
    \mathbb{E}(T) &\leq u + \int_{u}^{\infty}\mathbb{P}(T\geq t)\,dt \leq u + 2 N_{\delta}\int_{u}^{3\Upsilon\cdot n}\exp\left(\frac{-t^2}{16V_{\max}^2\cdot n}\right)\,dt + 2 N_{\delta}\int_{3\Upsilon\cdot n}^{\infty}\exp{\left(\frac{-3\Upsilon \cdot t}{16V_{\max}^2}\right)}\,dt \\
    &\leq u + 32 N_{\delta}\cdot V_{\max} \cdot n/u \cdot \exp\left(\frac{-u^2}{16V_{\max}^2 \cdot n}\right) + 32 N_{\delta} \cdot V_{\max}^2/(3\Upsilon)\cdot \exp{\left(\frac{-9\Upsilon^2\cdot n}{16V_{\max}^2}\right)},
\end{split}
    \label{C.54}
\end{equation}
where in the second inequality we use the fact that $\int_{s}^{\infty}\exp{(-t^2/2)}dt \leq 1/s\cdot\exp(-s^2/2)$. Now we set $u = 4V_{\max}\sqrt{n\cdot\log{N_{\delta}}}$ in \cref{C.54} and plug in the definition of $\Upsilon$ in \cref{C.50} to obtain 
\begin{equation}
\mathbb{E} \leq 4V_{\max} \log{n\cdot N_{\delta}} + 8V_{\max}\sqrt{n/\log{N_{\delta}}} + 6V_{\max}\sqrt{n/\log{N_{\delta}}} \leq 8V_{\max} \sqrt{n\cdot \log{N_{\delta}}},
    \label{C.55}
\end{equation}
where the last inequality holds when $\log{N_{\delta}}\geq4$. Moreover, the definition of $\Upsilon$ in \cref{C.50} implies that $\Upsilon \leq \max[2V_{\max}\sqrt{\log{N_{\delta}/n}}, \lVert \hat{O} - TQ\rVert^2_\sigma + \delta]$. In the following, we only need to consider the case where $\Upsilon \leq \lVert \hat{O} - TQ \rVert_{\sigma}+ \delta$, since we already have \cref{app conv-result} if $\lVert \hat{O} - TQ\rVert + \delta \leq 2V_{\max}\sqrt{\log{N_{\delta}/n}}$, which concludes the proof.

Then, when $\Upsilon \leq \lvert \hat{O} - TQ\rVert_\sigma + \delta$ holds, combining \cref{C.52} and \cref{C.55} we have,
\begin{equation}
    \begin{split}
      \lVert \hat{O} - TQ\rVert_{\delta}^2  &\leq \mathbb{E}[\lVert \hat{O} - TQ \rVert_n^2] + 8V_{\max}\sqrt{\log{N_{\delta}}/n}\cdot \lVert \hat{O} - TQ \rVert_{\delta} + 8V_{\max}\sqrt{\log{N_{\delta}/n}}\cdot \delta + 8V_{\max}\cdot \delta \\
      &\leq \mathbb{E}[\lVert \hat{O} - TQ \rVert_n^2] + 8 V_{\max}\sqrt{\log{N_{\delta}/n}}\cdot \lVert \hat{O} - TQ \rVert_{\sigma} + 16V_{\max}\cdot \delta.
    \end{split}
    \label{C.56}
\end{equation}
We apply the inequality in \cref{C.45} to \cref{C.56} with $a = \lVert \hat{O} - TQ\rVert_\sigma$, $b = 8V_{\max}\sqrt{\log{N_{\delta}}/n}$, and $c = \mathbb{E}[\lVert \hat{O} - TQ \rVert_n^2] + 16V_{\max}\cdot \delta$ we have. Hence we found
\begin{equation}
\begin{split}
    \lVert \hat{O} - TQ \rVert_\sigma^2 \leq (1 + \epsilon)\cdot \mathbb{E}[\lVert \hat{O} - TQ \rVert_n^2] + (1 + \epsilon)^2\cdot 64V_{\max}\cdot \log{N_{\delta}}/(n \cdot \epsilon) + (1 + \epsilon) \cdot 18 V_{\max} \cdot \delta,
\end{split}
    \label{C.57}
\end{equation}
which concludes the second step of the proof.

Finally, combining steps \textbf{(i)} and together, i.e., \cref{C.46} and \cref{C.57}, we conclude that
\begin{equation}
    \lVert \hat{O} - TQ \rVert_\sigma^2 \leq (1 + \epsilon)^2 \cdot \inf_{f \in \mathcal{F}} \mathbb{E}[\lVert f - TQ\rVert^2_n] + C_1\cdot V_{\max}^2\cdot \log{N_{\delta}/(n\cdot \epsilon)} + C_2\cdot V_{\max}\cdot \delta + 2\beta G^2,
\end{equation}
where $C_1$ and $C_2$ are two absolute constants. Moreover, since $Q\in \mathcal{F}$
\begin{equation}
    \inf_{f \in \mathcal{F}}\mathbb{E}[\lVert f - TQ \rVert_n^2] \leq \sup_{Q\in \mathcal{F}}\left\{\inf_{f \in \mathcal{F}}\mathbb{E}[\lVert f - TQ \rVert_n^2]\right\},
\end{equation}

which concludes the proof of \cref{thm: convergence}.
\end{proof}
 \clearpage
 \clearpage
\section{Appendix: Experimental Settings}\label{appdenix: experimental settings}
In this section, we provide the experimental settings in detail. 

\subsection{Code}

Our project is available at \href{https://sites.google.com/view/peer-cvpr2023/}{https://sites.google.com/view/peer-cvpr2023/}.

\subsection{Experimental Details}\label{sec: implement details}
Our implementation of PEER coupled with CURL/DrQ is based on the CURL/DrQ codebase.

\textbf{Computational resources.} All experiments are conducted on two GPU servers. The first one has 3 Titan XP GPUs and Intel(R) Xeon(R) CPU E5-2640 v4 @ 2.40GHz. The second one has 4 Titan RTX GPUs and an Intel(R) Xeon(R) Gold 6137 CPU @ 3.90GHz. Each run for DMControl takes fifty hours to finish. For PyBullet, MuJoCo, and Atari tasks, it takes 5 hours to finish a run. For PyBullet and MuJoCo suites, we simultaneously launch 70 seeds. For the DMControl and Atari suites, we simultaneously run 15 random seeds.

\textbf{How to plot \cref{fig: alpha visuallization}.} Before computing the distinguishable representation discrepancy (DRD), the representation of $Q$-network is normalized as shown in \cref{app theorem: representation gap}. Then we compute DRD in mini-batch samples. We compute the average DRD of mini-batch samples, and plot \cref{fig: alpha visuallization} over five random seeds.

\textbf{The source of data in \cref{table: exp dm control}.} The evaluation results of State SAC, PlaNet, Dreamer, SAC+AE, and CURL in \cref{table: exp dm control} are taken from the original CURL paper \cite{laskin2020curl}. And the results of DrQ are taken from the original DrQ paper \cite{drq}. As for the data of DrQ-v2, we took from the authors' data (link: https://github.com/facebookresearch/drqv2) and presented the statistics in the same way as the rest of \cref{table: exp dm control}. Note that the author only provides DrQ-v2 results over nine seeds.

\textbf{The source of data in \cref{table: atari exp results}.} The evaluation results of Human, Random, OTRainbow, Eff.Rainbow, and CURL in \cref{table: atari exp results} are taken from the original CURL paper \cite{laskin2020curl}. And the results of Eff. DQN and DrQ are taken from the original DrQ paper \cite{drq}.

\textbf{Data in \cref{fig: normalized score}.} We do not include the DrQ-v2 results in \cref{fig: normalized score} because DrQ is better than DrQ-v2 as shown in \cref{table: exp dm control}.

\textbf{Random seeds.} If not otherwise specified, we evaluate each tested algorithm over 10 random seeds to ensure the reproducibility of our experiments. Also, we set all seeds fixed in our experiments.

\textbf{Grid world.} The grid world is shown in \cref{fig: sub gridworld}. If the agent arrives at $S_T$,  it gets a reward of 10, and other states get a reward of 0. We present the remaining hyper-parameters for the grid world in \cref{app table: parameters for grid world}.

\textbf{PyBullet.} When we train the agent on the Pybullet suite, the agent starts by randomly collecting 25,000 states and actions for better exploration. Then we evaluate the agent for ten episodes every 5,000 timesteps. We take the average return of ten episodes as a key evaluation metric. To ensure a fair evaluation of the algorithms, we do not apply any exploration tricks during the evaluation phase (e.g. injecting noise into actions in TD3), because these exploration tricks may harm the performance of tested algorithms. The complete timesteps are 1 million. The results are reported over ten random seeds. For the hyper-parameter $\beta$ of PEER, we take $5e-4$ for every task.

For all algorithms except METD3, we use the author's implementation \cite{td3} or a commonly used public repository \cite{baselines}. Our implementations of PEER and METD3 are based on TD3 implementation. To fairly evaluate our algorithm, we keep all the original TD3's hyper-parameters without any modification. For the hyper-parameter of METD3, we set the dropout rate equal to $0.1$ as the author \cite{mepg} did. The soft update style is adopted for METD3, PEER with $\eta = 0.005$. We summarize the hyper-parameter settings for the PyBullet suite in \cref{app table: hyper-parameters for bullet}.

\textbf{MuJoCo.} All experiments on MuJoCo are consistent with the PyBullet settings, except for the code of SAC used. We found that the performance of SAC \cite{pytorch_sac} deteriorates on the MuJoCo suite. Therefore, we use the code of Stable-Baselines3 \footnote{Code: https://github.com/DLR-RM/stable-baselines3} \cite{stable-baselines3} for SAC implementation with the same hyper-parameters under PyBullet settings.

\textbf{DMControl.}
We utilize the authors' implementation of CURL and DrQ without any further modification as we discussed. And we do not change the default hyper-parameters for CURL\footnote{Code: https://github.com/MishaLaskin/curl}. For a fair comparison, we keep the hyper-parameters of PEER the same as CURL and DrQ. And the hyper-parameter $\beta=5e-4$ is kept in each environment. 
We summarize the hyper-parameter settings for the DMControl suite in \cref{app table: hyper-parameters dm control} and \cref{app table hyperparameter dmc drq-poer}.

\textbf{Atari.} Our implementation PEER is based on CURL\footnote{Code: https://github.com/aravindsrinivas/curl\_rainbow}. For a fair comparison, we keep the hyper-parameters and settings of CURL the same as CURL. And the hyper-parameter $\beta=5e-4$ is kept in each environment. Check \cref{app table: hyperparameters atari curl-poer} and \cref{app table: atari drqpoer} for more details.

\subsection{Pseudocode for PEER Loss} \label{app sec: Pseudocode}
We provide PyTorch-like pseudocode for the PEER loss as follows.

\begin{lstlisting}[language=Python, caption=Pytorch-like pseudocode for the PEER loss]
def PE_loss_with_PEER(representation, Q, target_representation, target_Q, beta):
    """
    representation: shape = Batch_size * N, representation of critic
    Q: shape = Batch_size * 1, current Q value
    target_representation: shape = Batch_size * N, representation of critic_target
    target_Q: shape = Batch_size * 1, target Q value ( r + \mathcal{E}Q(s',a') )
    beta: a small constant, controlling the regularization effectiveness of PEER
    """
    PEER_loss = torch.einsum('ij,ij->i', [representation, target_representation]).mean()
    PE_loss = torch.nn.functional.mse_loss(Q, target_Q).mean()
    
    loss = PE_loss + beta * PEER_loss
    return loss
\end{lstlisting}

\begin{table*}[htbp]
	\renewcommand\tabcolsep{3.0pt} 
	\centering
	\begin{tabular}{l|c}
		\toprule
		\textbf{Hyper-parameter} & \textbf{Value}  \\
		\midrule
		\textit{Shared hyper-parameters} & \\
        State space & integer: from 0 to 19 \\
        Action space & Discrete(4): up, down, left, right \\
		Discount ($\gamma$) & 0.99 \\
		Replay buffer size & $10^5$ \\
		Optimizer & Adam \cite{adam} \\
		Learning rate for Q-network & $1 \times 10^{-4}$ \\
		Number of hidden layers for all networks & 2 \\
		Number of hidden units per layer & 32 \\
		Activation function & ReLU \\
		Mini-batch size & 64  \\
		Random starting exploration time steps & $10^3$ \\
		Target smoothing coefficient ($\eta$) & 0.005 \\
		Gradient Clipping & False \\
		Exploration Method & Epsilon-Greedy \\
		$\epsilon$ & 0.1 \\
		Evaluation Episode & 10 \\
		Number of Episodes & 2000 \\
		\midrule
		\textit{PEER} & \\
		PEER coefficient ($\beta$) & $5 \times 10^{-4}$ \\
		\bottomrule
	\end{tabular}
 \caption{\label{app table: parameters for grid world}Hyper-parameters settings for Grid World experiments}
	
\end{table*}

\begin{table*}[htbp]
	
	\renewcommand\tabcolsep{3.0pt} 
	\centering
	\begin{tabular}{l|c}
		\toprule
		\textbf{Hyper-parameter} & \textbf{Value}  \\
		\midrule
		\textit{Shared hyper-parameters} & \\
		Discount ($\gamma$) & 0.99 \\
		Replay buffer size & $10^6$ \\
		Optimizer & Adam \cite{adam} \\
		Learning rate for actor & $3 \times 10^{-4}$ \\
		Learning rate for critic & $3 \times 10^{-4}$ \\
		Number of hidden layers for all networks & 2 \\
		Number of hidden units per layer & 256 \\
		Activation function & ReLU \\
		Mini-batch size & 256  \\
		Random starting exploration time steps & $2.5 \times 10^4$ \\
		Target smoothing coefficient ($\eta$) & 0.005 \\
		Gradient Clipping & False \\
		Target update interval ($d$) & 2\\
		\midrule
		\textit{TD3} & \\
		Variance of exploration noise & 0.2 \\
		Variance of target policy smoothing & 0.2 \\
		Noise clip range & $[-0.5, 0.5]$ \\
		Delayed policy update frequency & 2 \\
		\midrule
		\textit{PEER} & \\
		PEER coefficient ($\beta$) & $5 \times 10^{-4}$ \\
		\midrule
		\textit{SAC} & \\
		Target Entropy & - dim of $\mathcal{A}$ \\
		Learning rate for $\alpha$ & $1\times 10^{-4}$ \\
		\bottomrule
	\end{tabular}
	
 \caption{\label{app table: hyper-parameters for bullet}Hyper-parameters settings for PyBullet and MuJoCo experiments}
\end{table*}

\begin{table*}[htbp]	
	\renewcommand\tabcolsep{3.0pt} 
	\centering
	\begin{tabular}{l|c}
		\toprule
		\textbf{Hyper-parameter} & \textbf{Value}  \\
		\midrule
		\text{ PEER} coefficient ($\beta$) & $5 \times 10^{-4}$ \\
		\text { D}iscount $\gamma$ & 0.99 \\
		\text { Replay buffer size } & 100000 \\
		\text { Optimizer } & \text { Adam } \\
		\text { Learning rate } & $1 \times 10^{-4}$ \\
		\text { Learning rate }$\left(f_{\theta}, \pi_{\psi}, Q_{\phi}\right) $& $2 \times 10 ^{-4}$ \text { cheetah, run } \\
& 1 $\times 10^{-3}$ \text { otherwise } \\
		\text { Convolutional layers } & 4 \\
\text { Number of filters } & 32 \\
\text { Activation function } & \text { ReLU } \\
\text { Encoder EMA } $\eta$ & 0.05 \\
\text { Q} function EMA ($\eta$) & 0.01 \\
\text { Mini-batch size } & 512 \\

\text { Target Update interval $(d)$ } & 2 \\
\text { Latent dimension } & 50 \\
\text { Initial temperature } & 0.99 \\
\text { Number of hidden units per layer }(\text { MLP }) & 1024 \\
\text { Evaluation episodes } & 10 \\
\text { Random crop } & \text { True } \\
\text { Observation rendering } & (100,100) \\
\text { Observation downsampling } & (84,84) \\
\text { Initial steps } & 1000 \\
\text { Stacked frames } & 3 \\
\text { Action repeat } & 2 \text { finger, spin; walker, walk } \\
& 8 \text { cartpole, swingup } \\
& 4 \text { otherwise } \\

$\left(\beta_{1}, \beta_{2}\right)  \rightarrow\left(f_{\theta}, \pi_{\psi}, Q_{\phi}\right) $& (.9, .999) \\
$\left(\beta_{1}, \beta_{2}\right) \rightarrow(\alpha) $ & (.9, .999) \\

		\bottomrule
	\end{tabular}
 	\caption{\label{app table: hyper-parameters dm control}Hyper-parameters settings for PEER (coupled with CURL) DMControl experiments.}
	
\end{table*}

\begin{table}[!htbp]

\centering

\begin{tabular}{l|c}
\toprule
\textbf{Hyper-parameter} & \textbf{Value}  \\
\midrule
PEER coefficient ($\beta$) & $5 \times 10^{-4}$ \\
Replay buffer capacity & $100000$ \\
Seed steps & $1000$ \\
Main results minibatch size & $512$ \\
Discount $\gamma$ & $0.99$ \\
Optimizer & Adam \\
Learning rate & $10^{-3}$ \\
Critic target update frequency & $2$ \\
Critic Q-function soft-update rate $\tau$ & $0.01$ \\
Actor update frequency & $2$ \\
Actor log stddev bounds & $[-10, 2]$ \\
Init temperature & $0.1$ \\
\bottomrule
\end{tabular}\\
\caption{\label{app table hyperparameter dmc drq-poer}Hyper-parameters settings for PEER (coupled with DrQ) DMControl experiments.}
\end{table}

\begin{table}[h]
\begin{center}
\begin{small}
\begin{tabular}{l|l}
\toprule
\textbf{Hyper-parameter} & \textbf{Value}  \\
\midrule
PEER coefficient ($\beta$) & $5 \times 10^{-4}$ \\
Random crop    & True  \\ 
Image size    & $(84,84)$  \\ 
Data Augmentation & Random Crop (Train) \\
Replay buffer size    & $100000$ \\ 
Training frames & $400000$ \\ 
Training steps & $100000$ \\
Frame skip & $4$ \\
Stacked frames    & $4$  \\ 
Action repeat    & $4$ \\
Replay period every & $1$ \\
Q network: channels & $32$, $64$ \\
Q network: filter size & $5\times 5, 5\times 5$ \\
Q network: stride & $5$, $5$ \\
Q network: hidden units & $256$ \\
Momentum (EMA for CURL) $\tau$ & $0.001$  \\
Non-linearity & ReLU \\
Reward Clipping   & $[-1, 1]$  \\ 
Multi step return & $20$ \\
Minimum replay size for sampling & $1600$ \\ 
Max frames per episode & $108$K \\
Update & Distributional Double Q \\
Target Network Update Period & every $2000$ updates \\
Support-of-Q-distribution & $51$ bins \\ 
Discount $\gamma$ & $0.99$ \\
Batch Size & $32$  \\
Optimizer    & Adam  \\ 
Optimizer: learning rate & $0.0001$ \\
Optimizer: $\beta1$ & $0.9$ \\
Optimizer: $\beta2$ & $0.999$ \\ 
Optimizer $\epsilon$ & $0.000015$ \\
Max gradient norm & $10$ \\ 
Exploration & Noisy Nets   \\
Noisy nets parameter & $0.1$ \\
Priority exponent & $0.5$ \\
Priority correction & $0.4 \rightarrow 1$ \\
Hardware & GPU \\
\bottomrule
\end{tabular}
\caption{\label{app table: hyperparameters atari curl-poer}Hyper-parameters used for Atari100K PEER (coupled with CURL) experiments.}
\end{small}
\end{center}
\vskip -0.1in
\end{table}

\begin{table}[hb!]
\centering
\begin{tabular}{l|c}
\toprule
\textbf{Hyperparameter} & \textbf{Value}  \\
\midrule
PEER coefficient ($\beta$) & $5 \times 10^{-4}$ \\
Data augmentation & Random shifts and Intensity \\
Grey-scaling & True \\
Observation down-sampling & $84 \times 84$ \\
Frames stacked & $4$ \\
Action repetitions & $4$ \\
Reward clipping & $[-1, 1]$ \\
Terminal on loss of life  & True \\
Max frames per episode & $108$k \\
Update  &  Double Q \\
Dueling & True \\
Target network: update period & $1$ \\
Discount factor &  $0.99$ \\
Minibatch size  & $32$ \\
Optimizer  & Adam \\
Optimizer: learning rate & $0.0001$\\
Optimizer: $\beta_1$  & $0.9$ \\
Optimizer: $\beta_2$ &  $0.999$ \\
Optimizer: $\epsilon$ &  $0.00015$ \\
Max gradient norm & $10$ \\
Training steps &  $100$k \\
Evaluation steps & $125$k \\
Min replay size for sampling& $1600$\\
Memory size & Unbounded \\
Replay period every & $1$ step \\
Multi-step return length & $10$ \\
Q network: channels  & $32, 64, 64$ \\ 
Q network: filter size  & $8 \times 8$, $4 \times 4$, $3 \times 3$ \\
Q network: stride  & $4, 2, 1$ \\
Q network: hidden units  & $512$ \\
Non-linearity & \texttt{ReLU}\\
Exploration & $\epsilon$-greedy \\
$\epsilon$-decay & $5000$ \\
\bottomrule
\end{tabular}\\
\caption{\label{app table: atari drqpoer} Hyper-parameters used for Atari100K PEER (coupled with DrQ algorithm) experiments.}
\end{table}
 \clearpage
 \section{Appendix: Experimental Suites}\label{app sec experimentals suites}
The experimental suites we use are Bullet \cite{bullet3}, MuJoCo\cite{mujoco}, DMcontrol\cite{dm-control}, and Atari\cite{atari}.
We show the environments of bullet, MuJoCo, DMControl, and Atari in \cref{app fig: bullet suites}, \cref{app fig: mujoco suites}, \cref{app fig: dmcontrol suite}, \cref{app fig: atari images}, and \cref{app fig: atari images continuation}, respectively. 

Besides, We list the state and action information for the four suites in \cref{app table: bullet info}, \cref{app table: mujoco info}, \cref{app table: dmc info}, and \cref{app table: atari info}. respectively.

\begin{figure*}[!htbp]

	\begin{center}
	\includegraphics[width=\linewidth]{./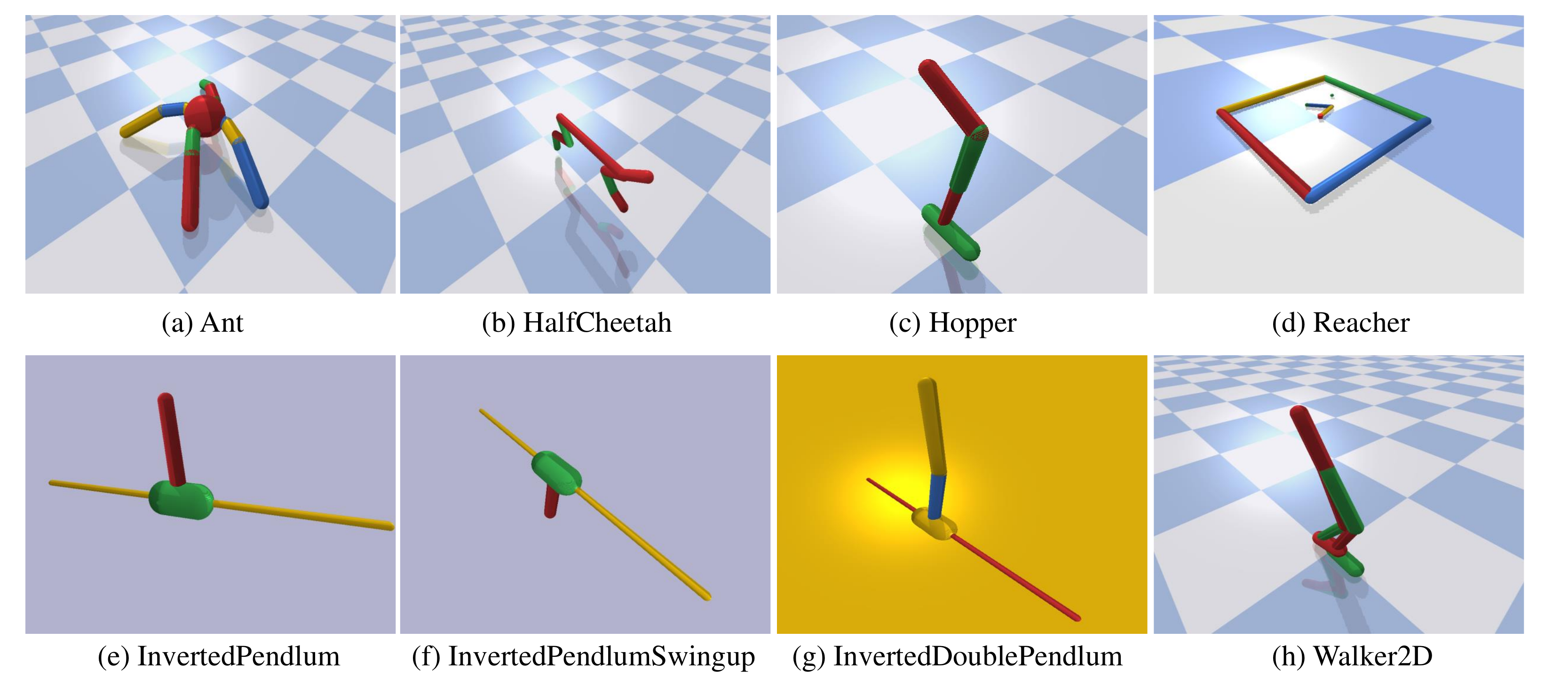}
	\end{center}
	\caption{\label{app fig: bullet suites}Images for PyBullet suite used in our experiments. The states for this suite are vectors.}
\end{figure*}

\begin{figure*}[!htbp]

	\begin{center}
	\includegraphics[width=\linewidth]{./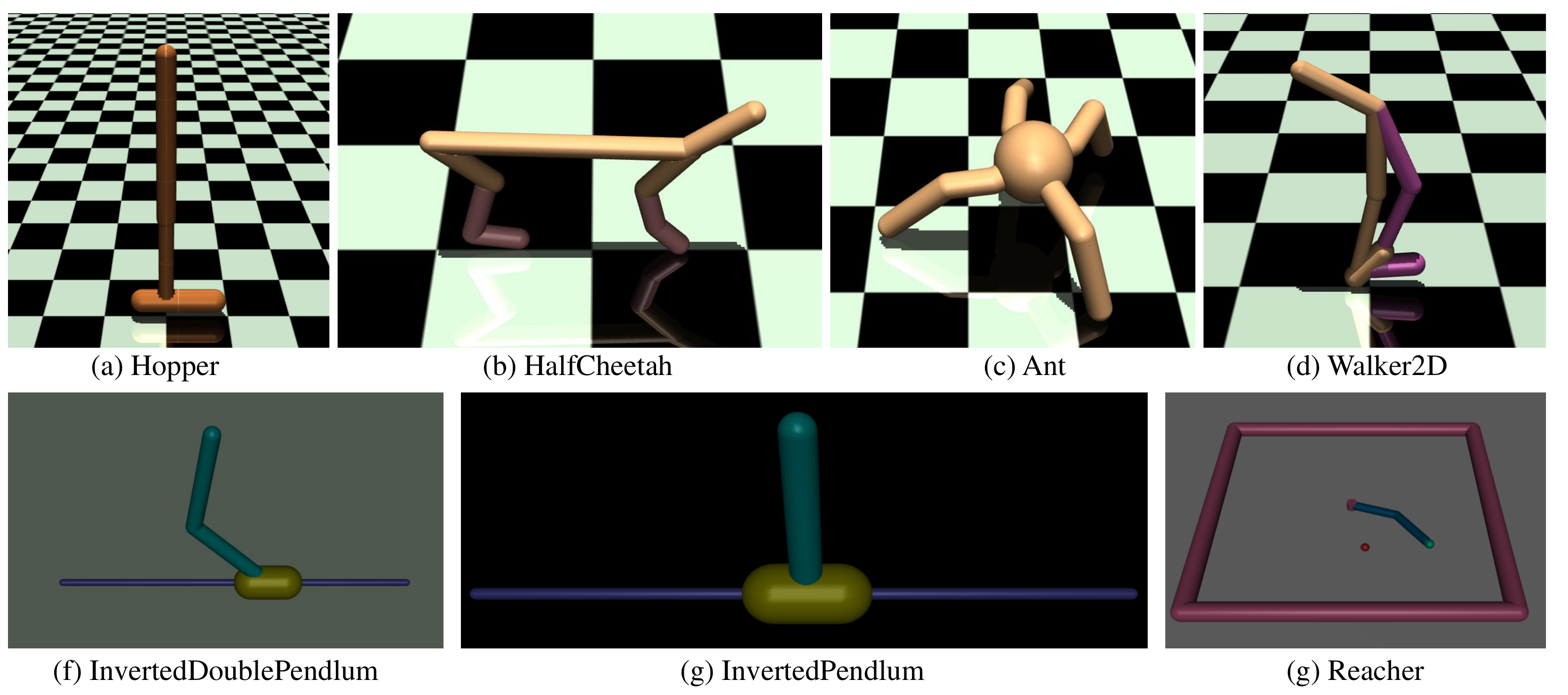}
	\end{center}
	\caption{\label{app fig: mujoco suites}Images for MuJoCo suite used in our experiments. The states for this suite are vectors.}
\end{figure*}

\begin{figure*}
\centering
\begin{subfigure}{0.33\textwidth}
    \includegraphics[width=1\textwidth]{./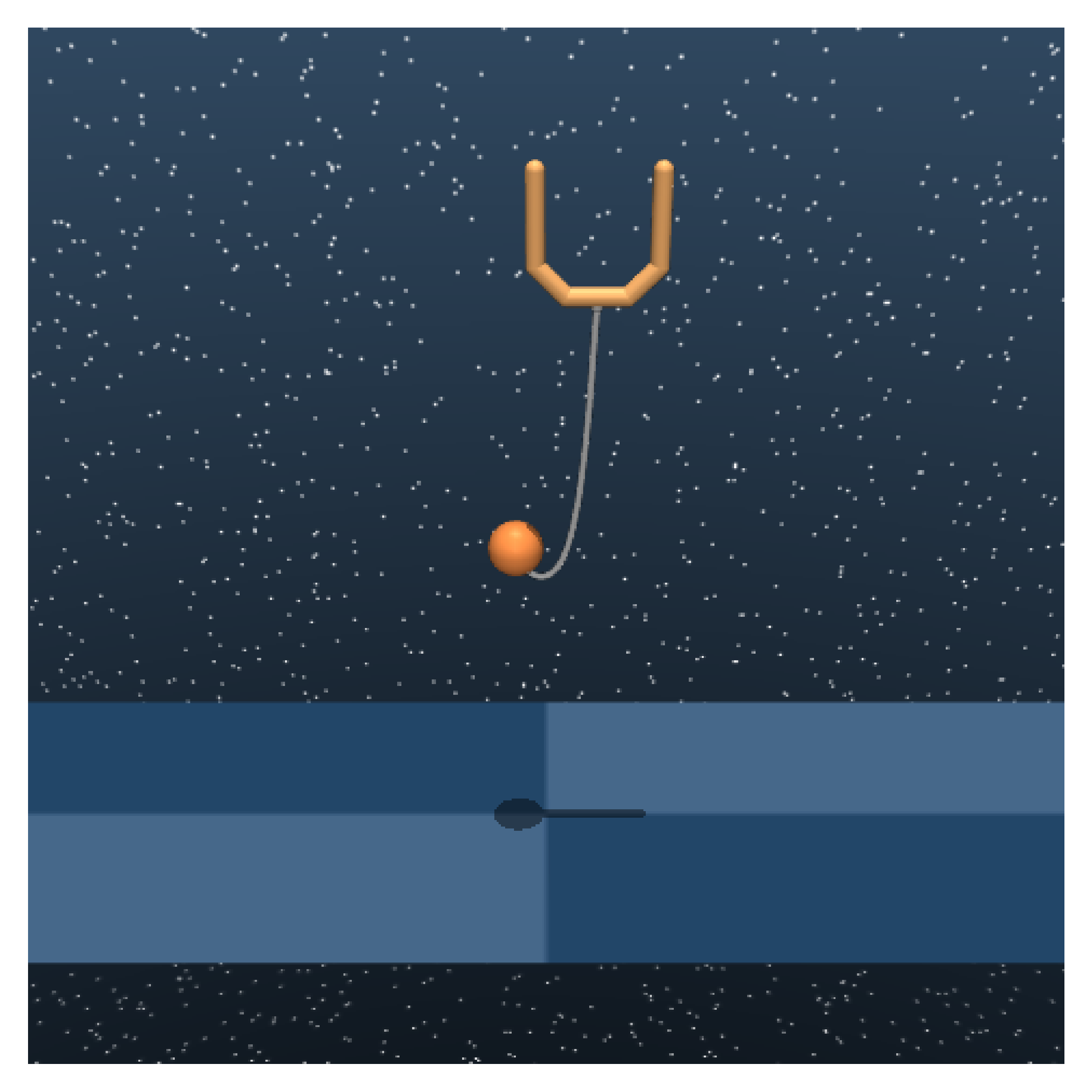}
    \caption{ball\_in\_cup\_catch}
\end{subfigure}
\hspace{-0.1in}
\begin{subfigure}{0.33\textwidth}
    \includegraphics[width=1\textwidth]{./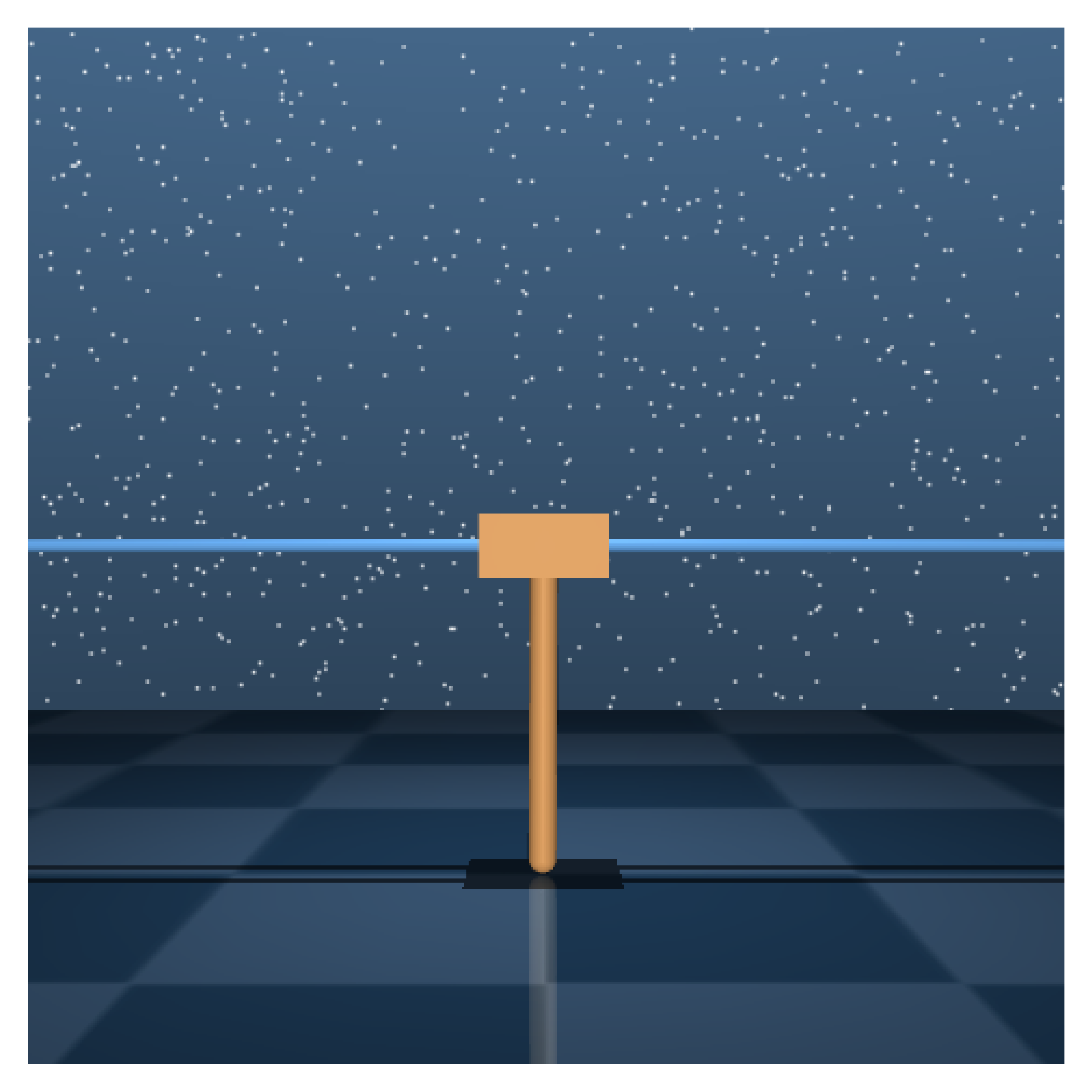}
    \caption{cartpole\_swingup}
\end{subfigure}
\hspace{-0.1in}
\begin{subfigure}{0.33\textwidth}
    \includegraphics[width=1\textwidth]{./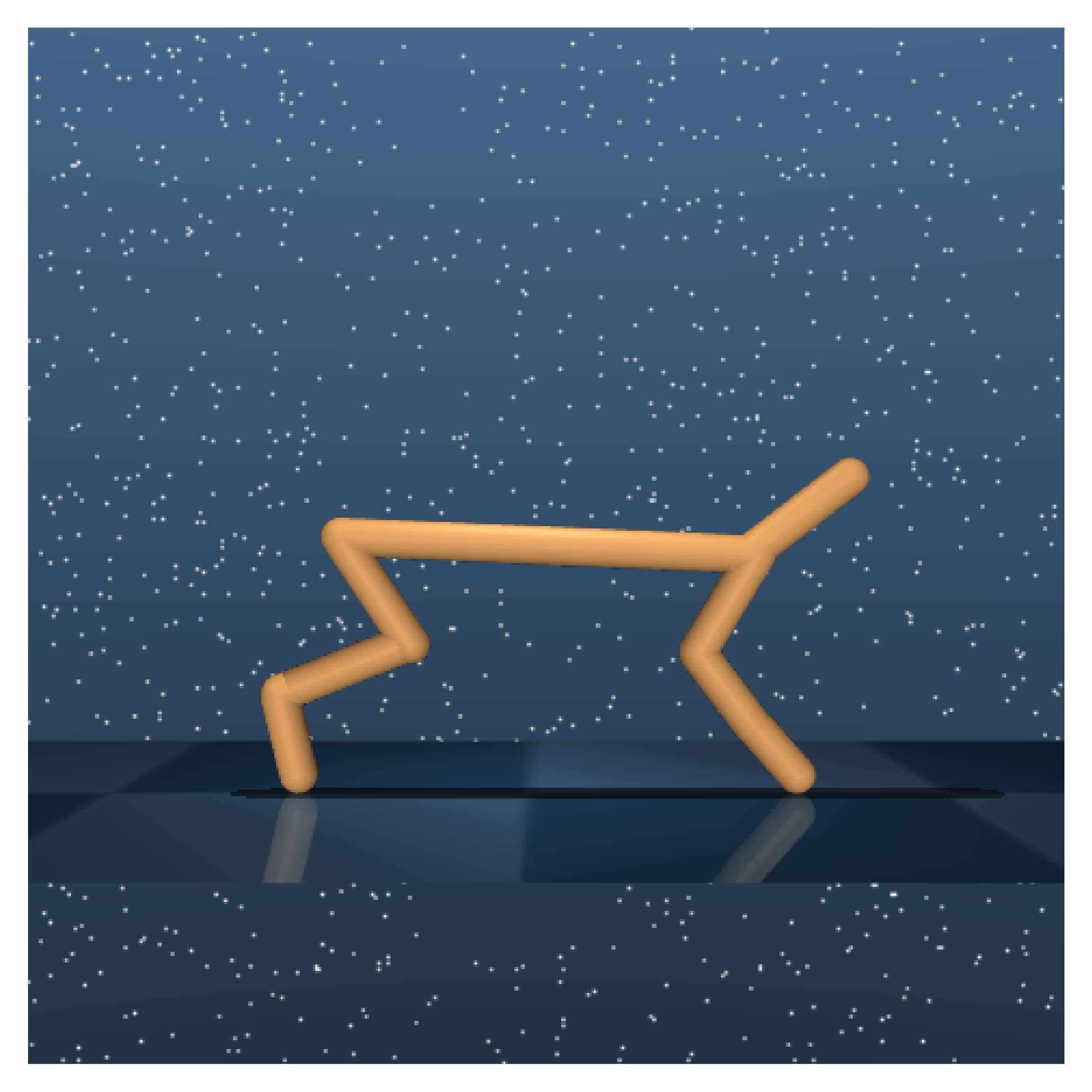}
    \caption{cheetah\_run}
\end{subfigure}
\vspace{-0.1in}
\begin{subfigure}{0.33\textwidth}
    \includegraphics[width=1\textwidth]{./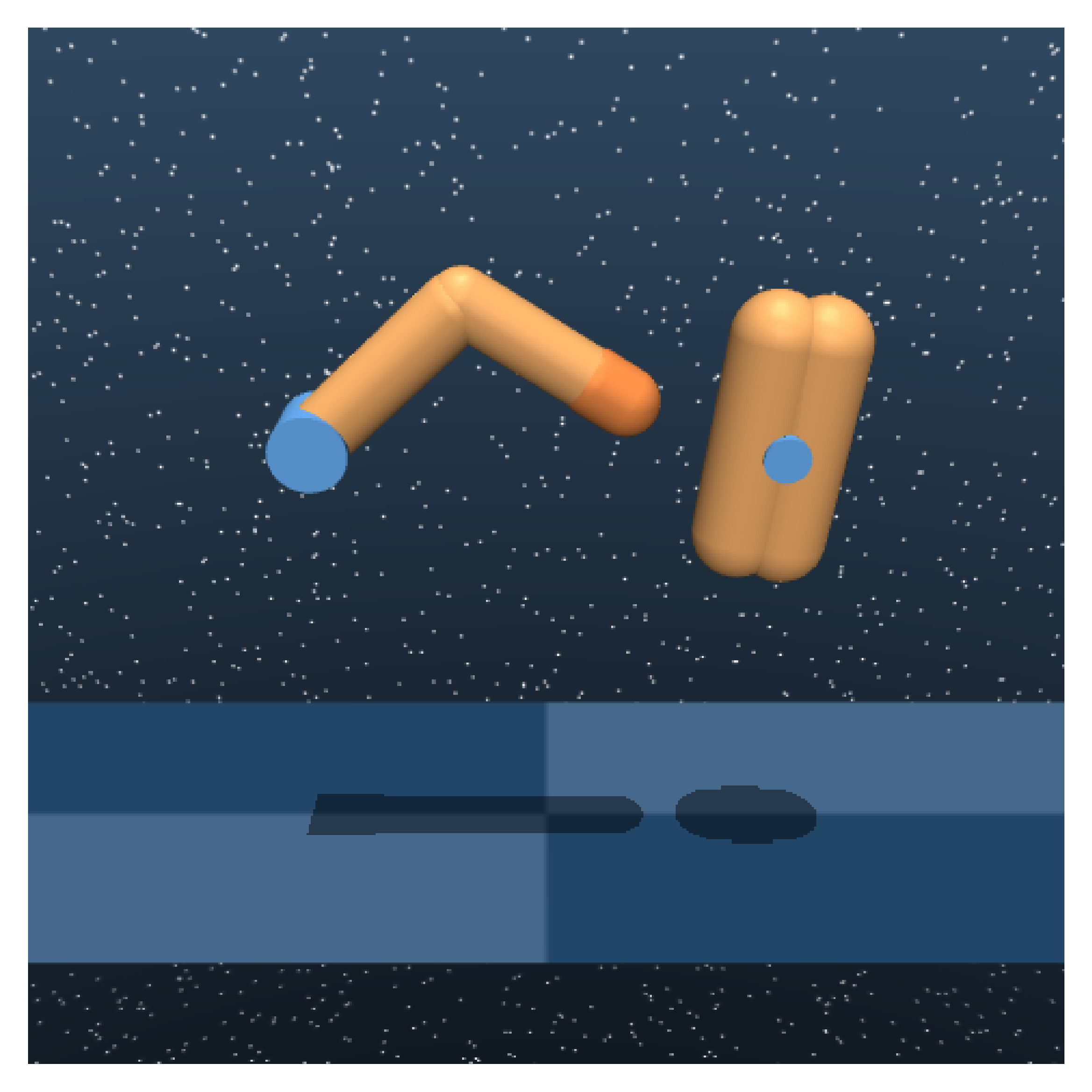}
    \caption{finger\_spin}
\end{subfigure}
\hspace{-0.1in}
\begin{subfigure}{0.33\textwidth}
    \includegraphics[width=1\textwidth]{./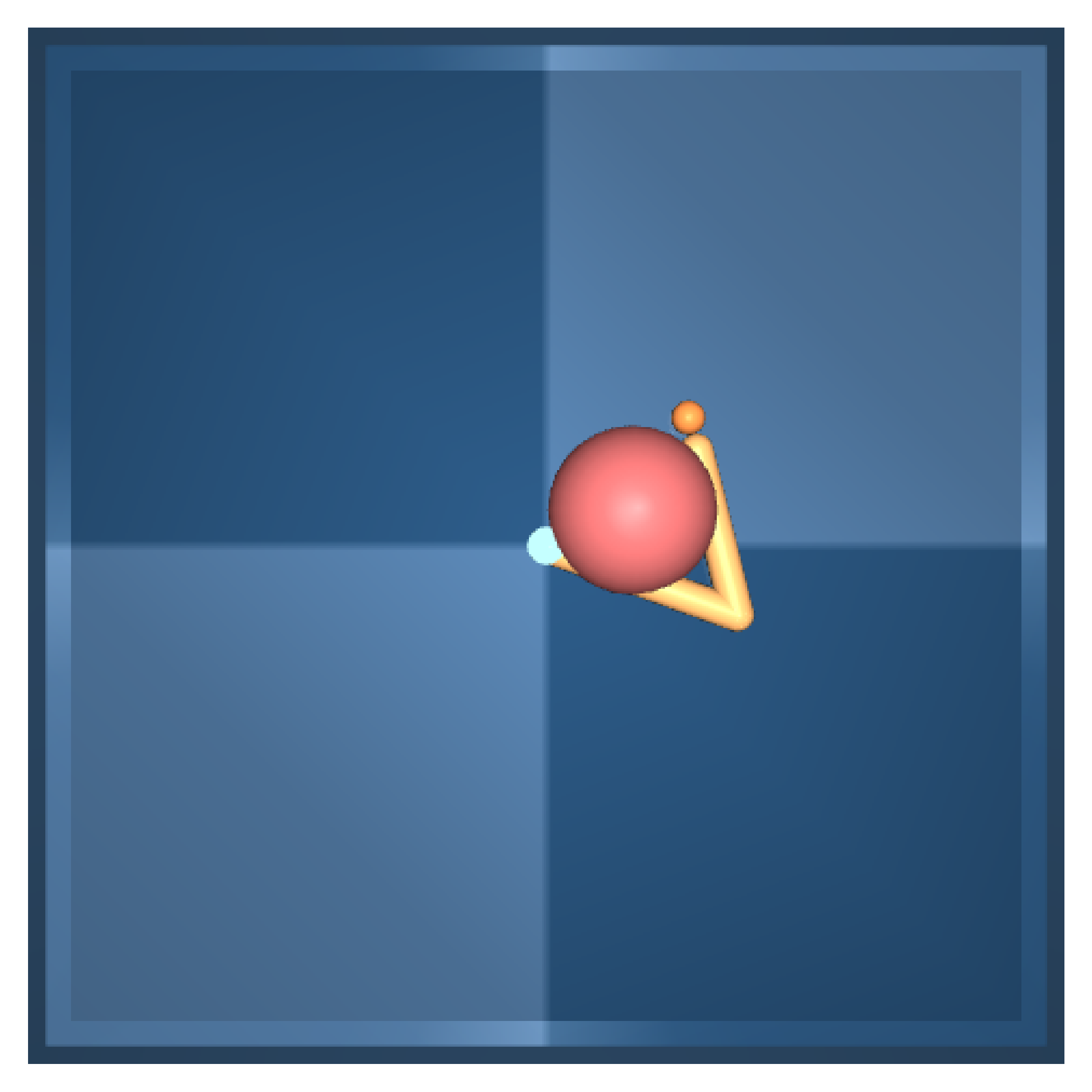}
    \caption{reacher\_easy}
\end{subfigure}
\hspace{-0.1in}
\begin{subfigure}{0.33\textwidth}
    \includegraphics[width=1\textwidth]{./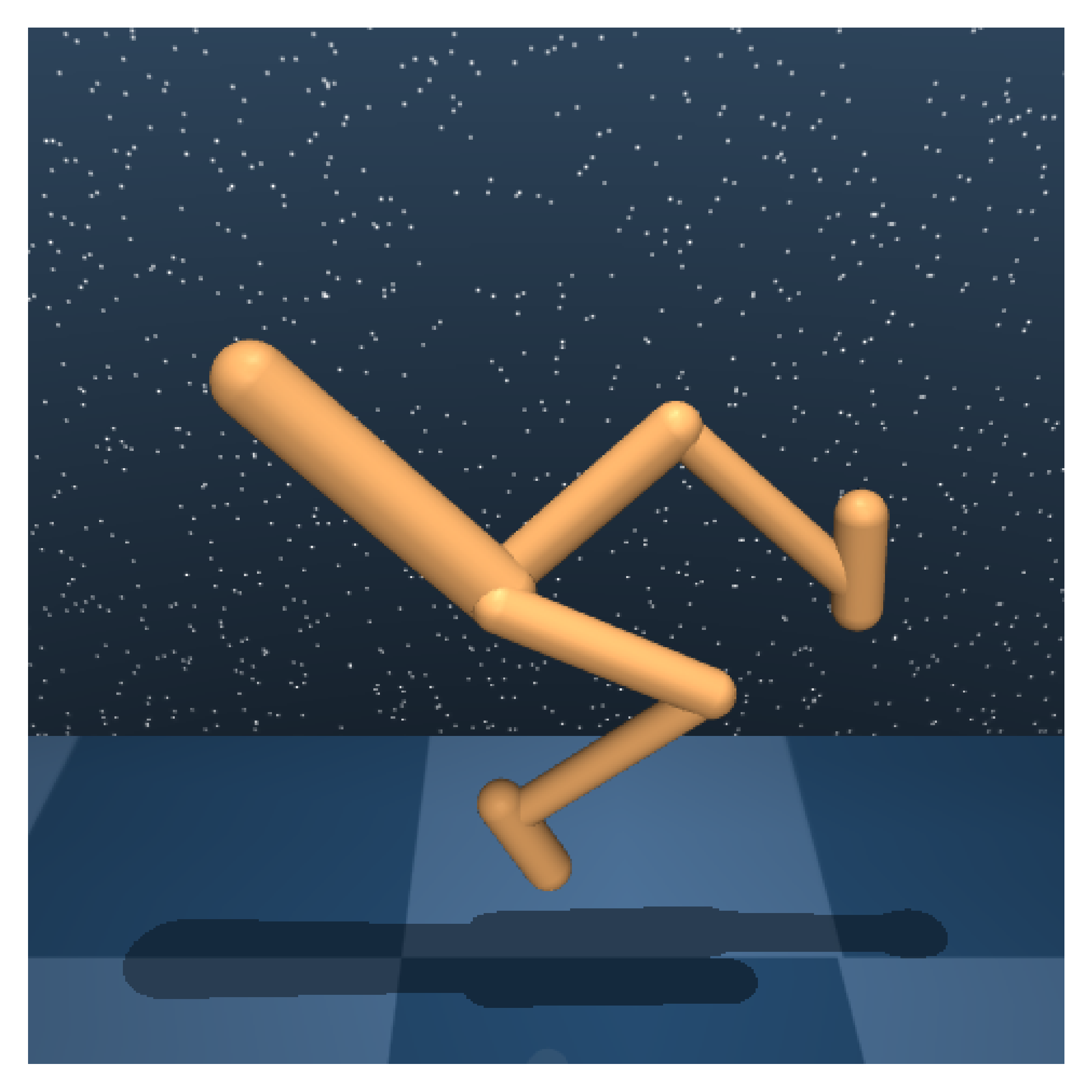}
    \caption{walker\_walk}
\end{subfigure}

\caption{\label{app fig: dmcontrol suite}Images for DMControl suites used in our experiments. Each image is a frame of a specific DMControl suite.}
\end{figure*}

\begin{figure*}[!htbp]
\centering
\begin{subfigure}{1.7in}
    \includegraphics[width=1\textwidth, height=2.0in]{./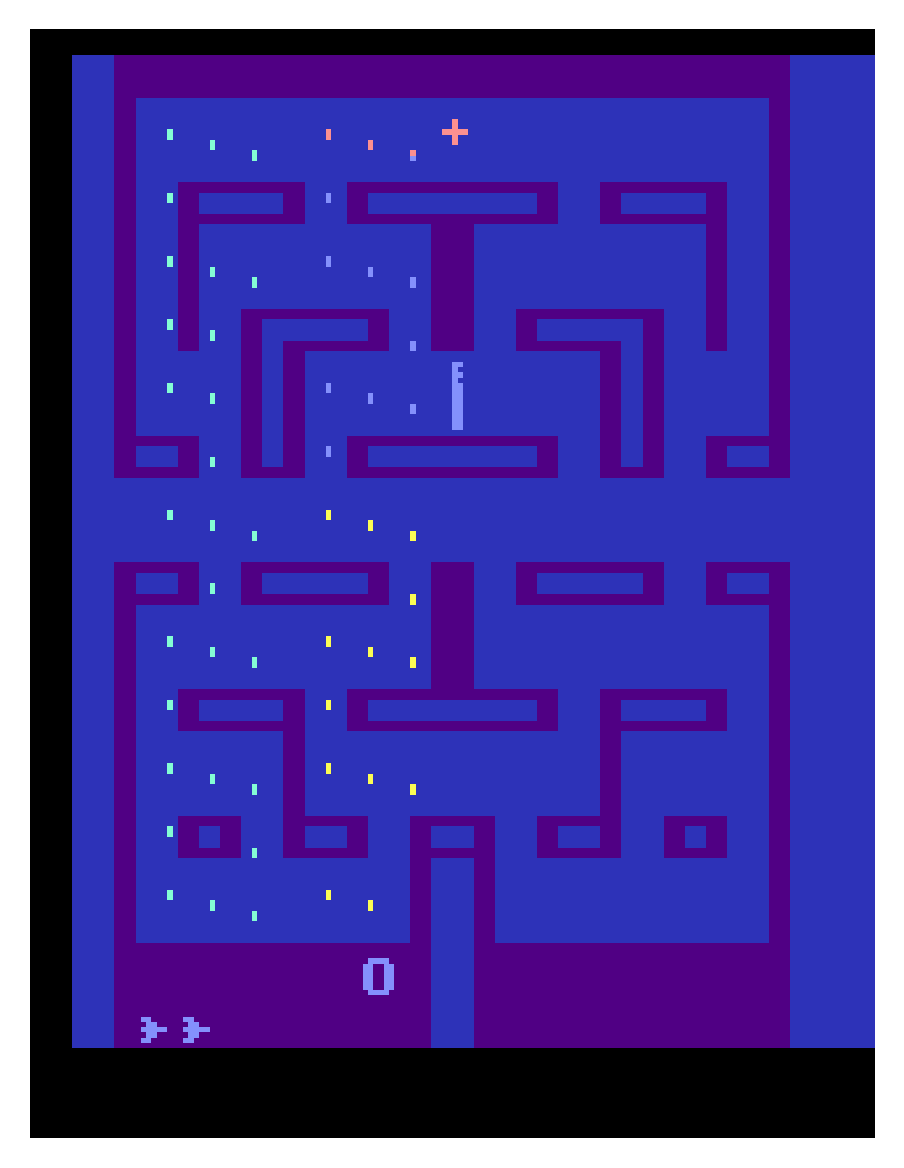}
    \caption{Alien}
\end{subfigure}
\hspace{-0.1in}
\begin{subfigure}{1.7in}
    \includegraphics[width=1\textwidth, height=2.0in]{./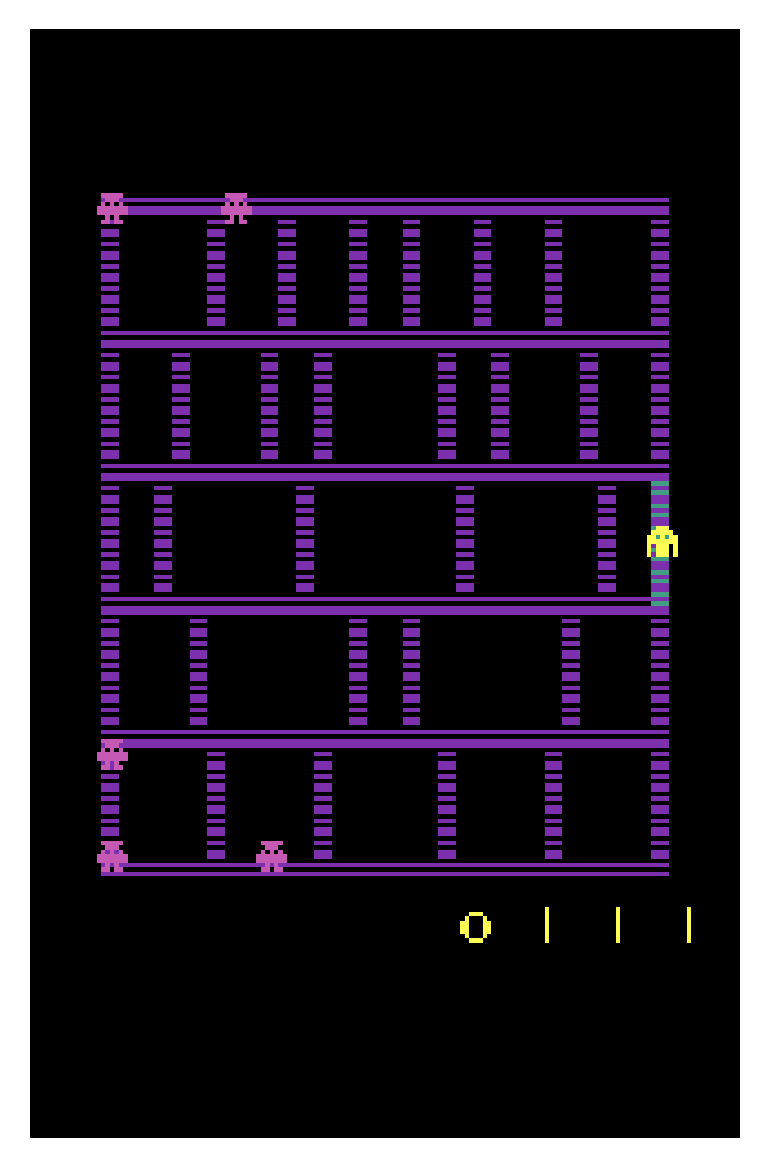}
    \caption{Amidar}
\end{subfigure}
\hspace{-0.1in}
\begin{subfigure}{1.7in}
    \includegraphics[width=1\textwidth, height=2.0in]{./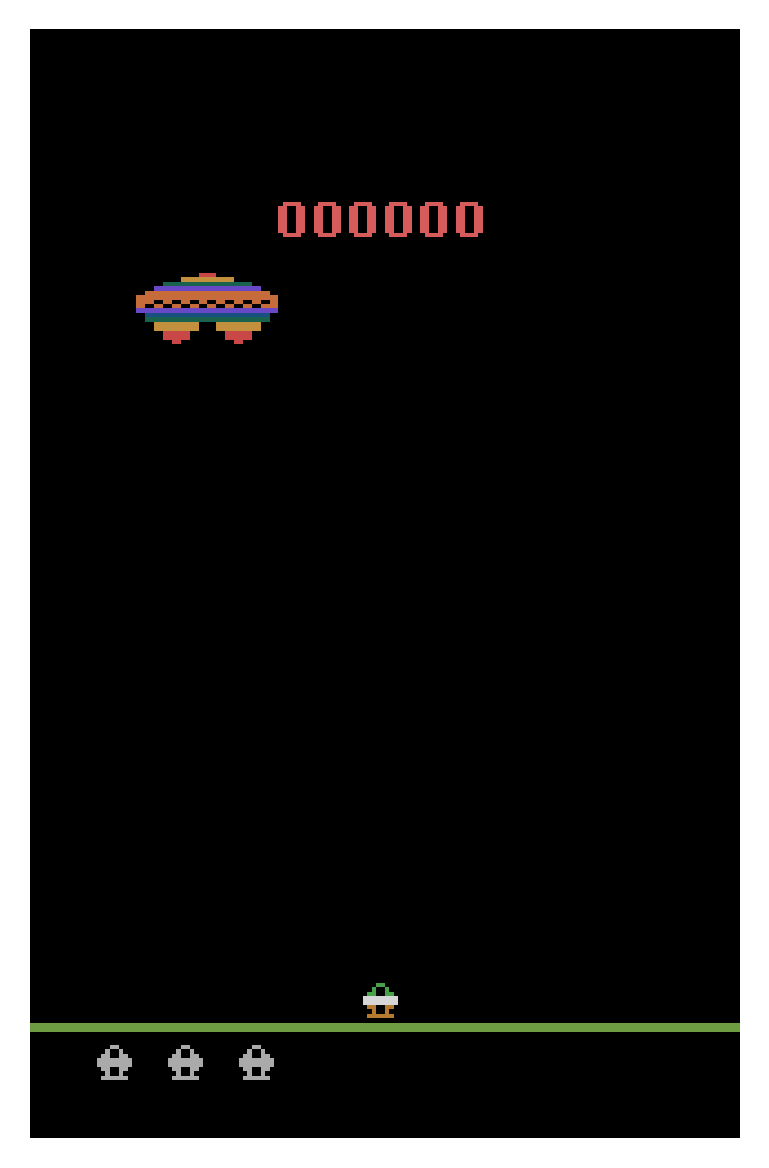}
    \caption{Assault}
\end{subfigure}
\hspace{-0.1in}
\begin{subfigure}{1.7in}
    \includegraphics[width=1\textwidth, height=2.0in]{./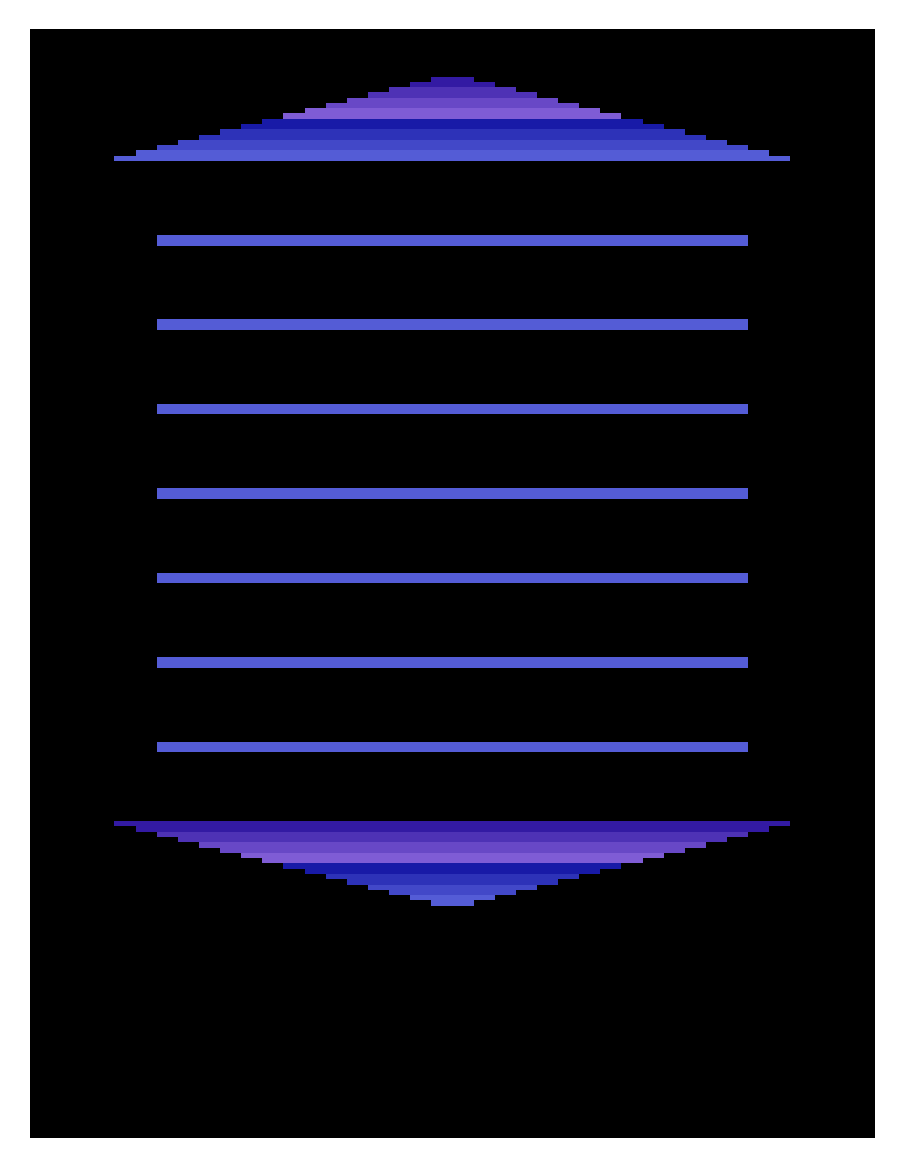}
    \caption{Asterix}
\end{subfigure}
\vspace{-0.0in}
\begin{subfigure}{1.7in}
    \includegraphics[width=1\textwidth, height=2.0in]{./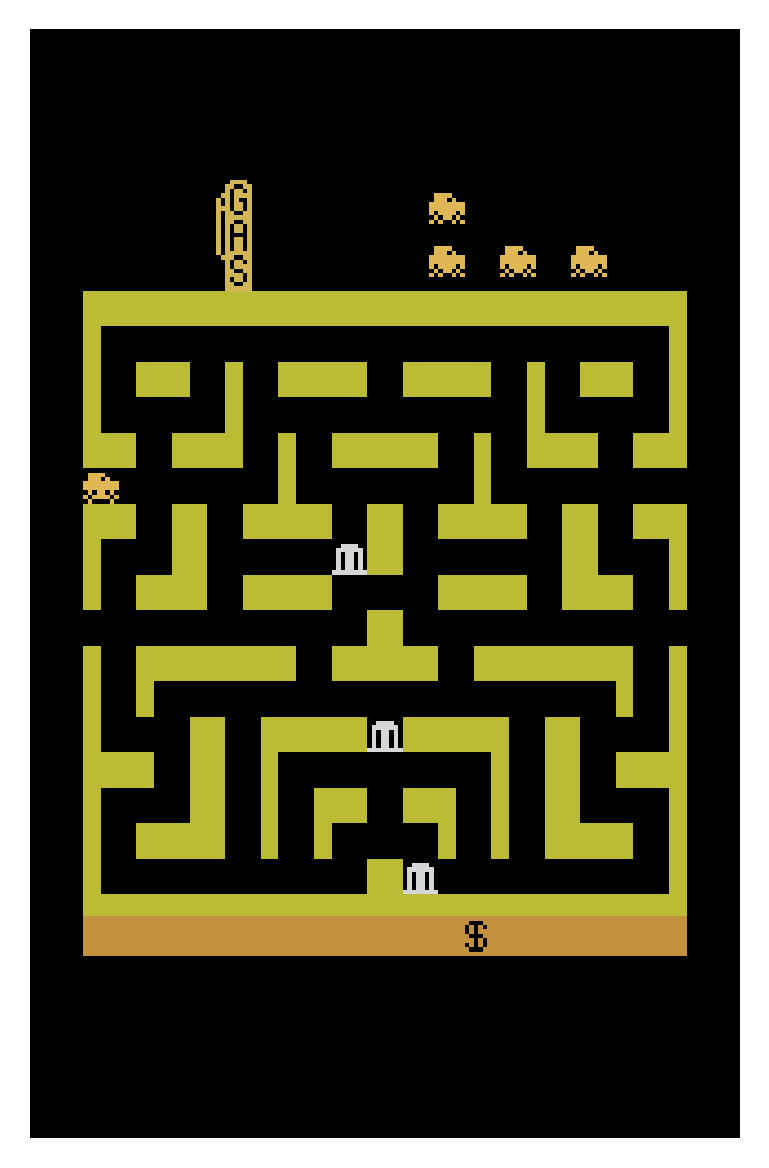}
    \caption{BankHeist}
\end{subfigure}
\hspace{-0.1in}
\begin{subfigure}{1.7in}
    \includegraphics[width=1\textwidth, height=2.0in]{./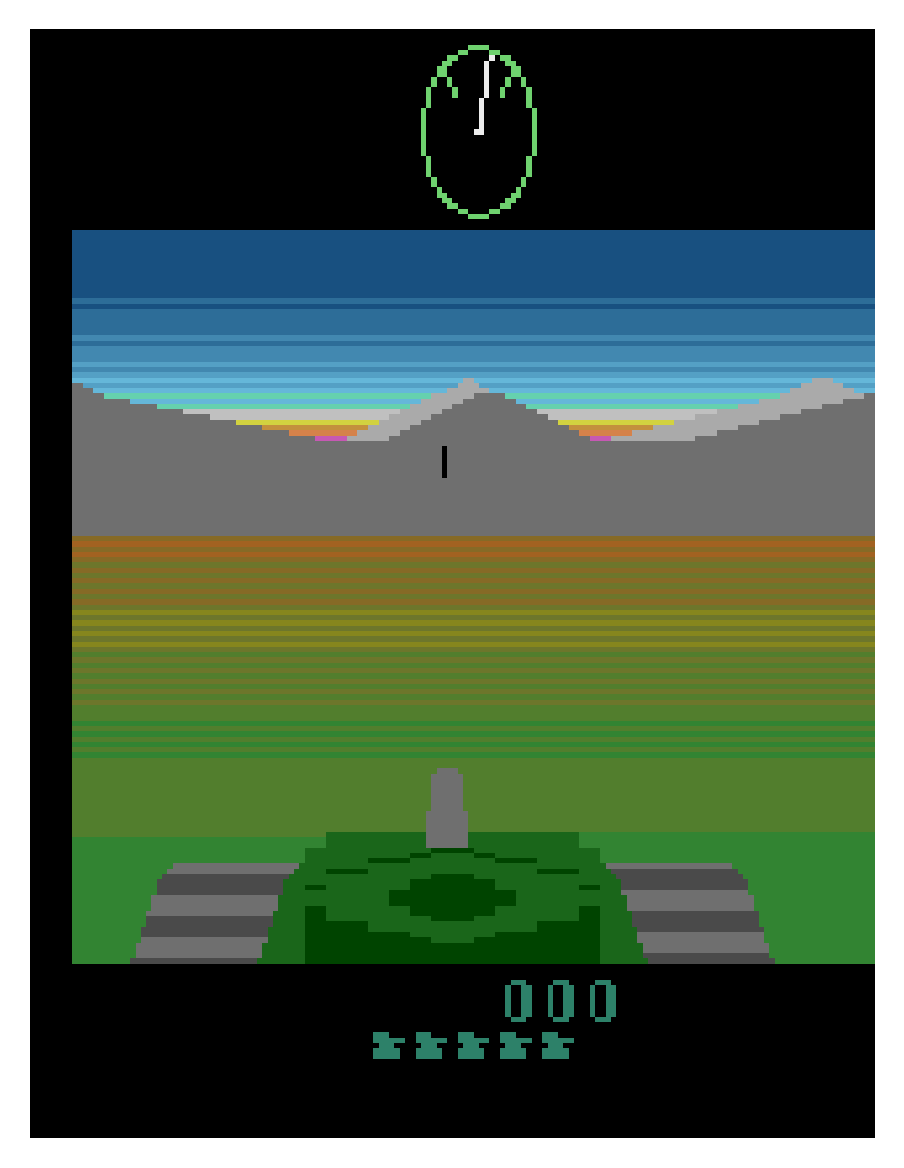}
    \caption{BattleZone}
\end{subfigure}
\hspace{-0.1in}
\begin{subfigure}{1.7in}
    \includegraphics[width=1\textwidth, height=2.0in]{./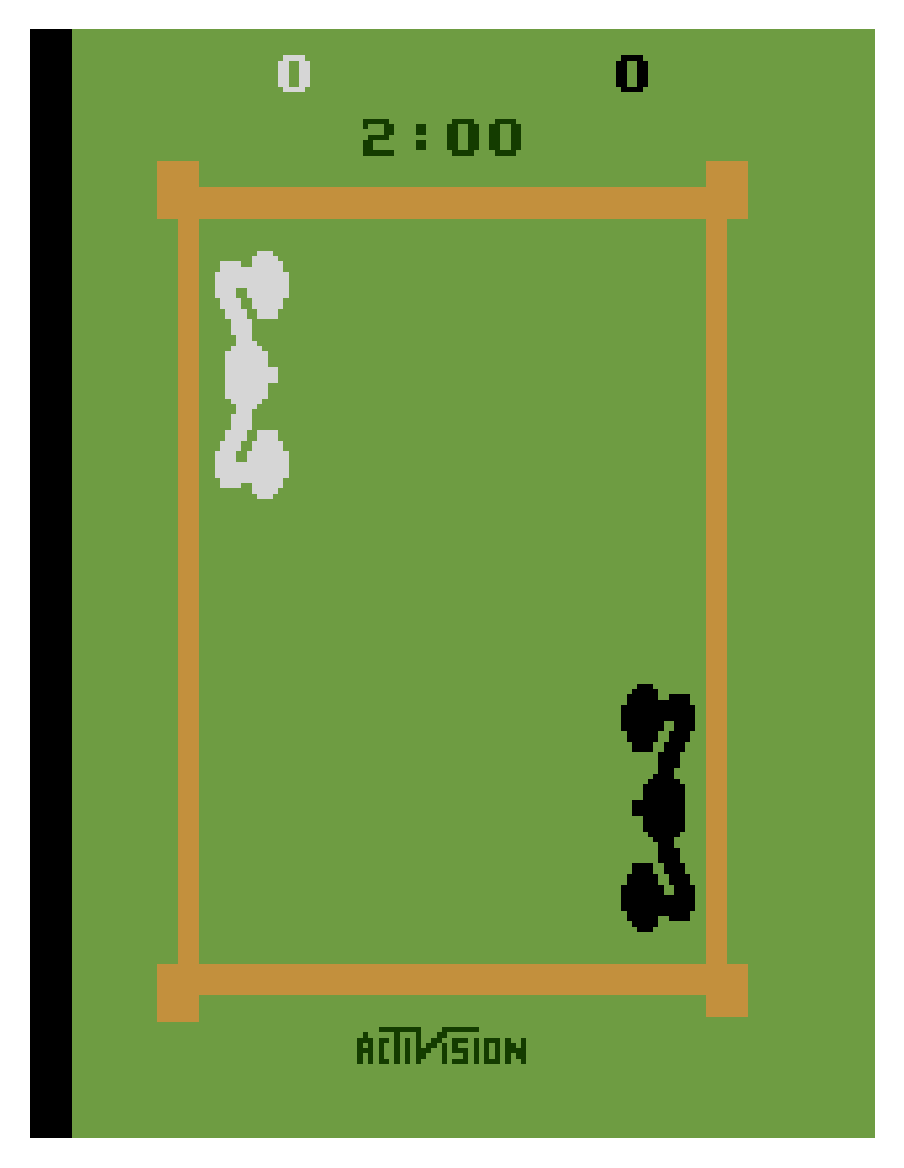}
    \caption{Boxing}
\end{subfigure}
\hspace{-0.1in}
\begin{subfigure}{1.7in}
    \includegraphics[width=1\textwidth, height=2.0in]{./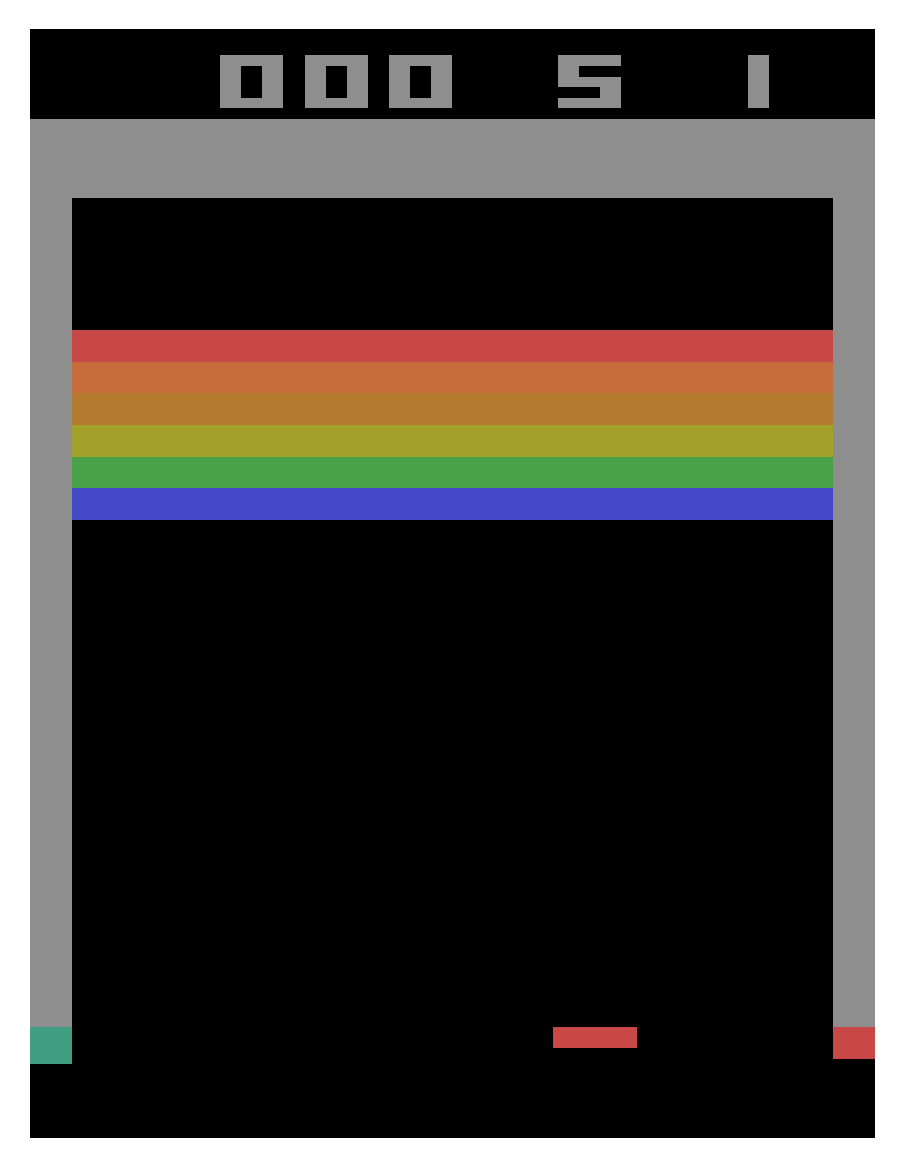}
    \caption{Breakout}
\end{subfigure}
\vspace{-0.0in}
\begin{subfigure}{1.7in}
    \includegraphics[width=1\textwidth, height=2.0in]{./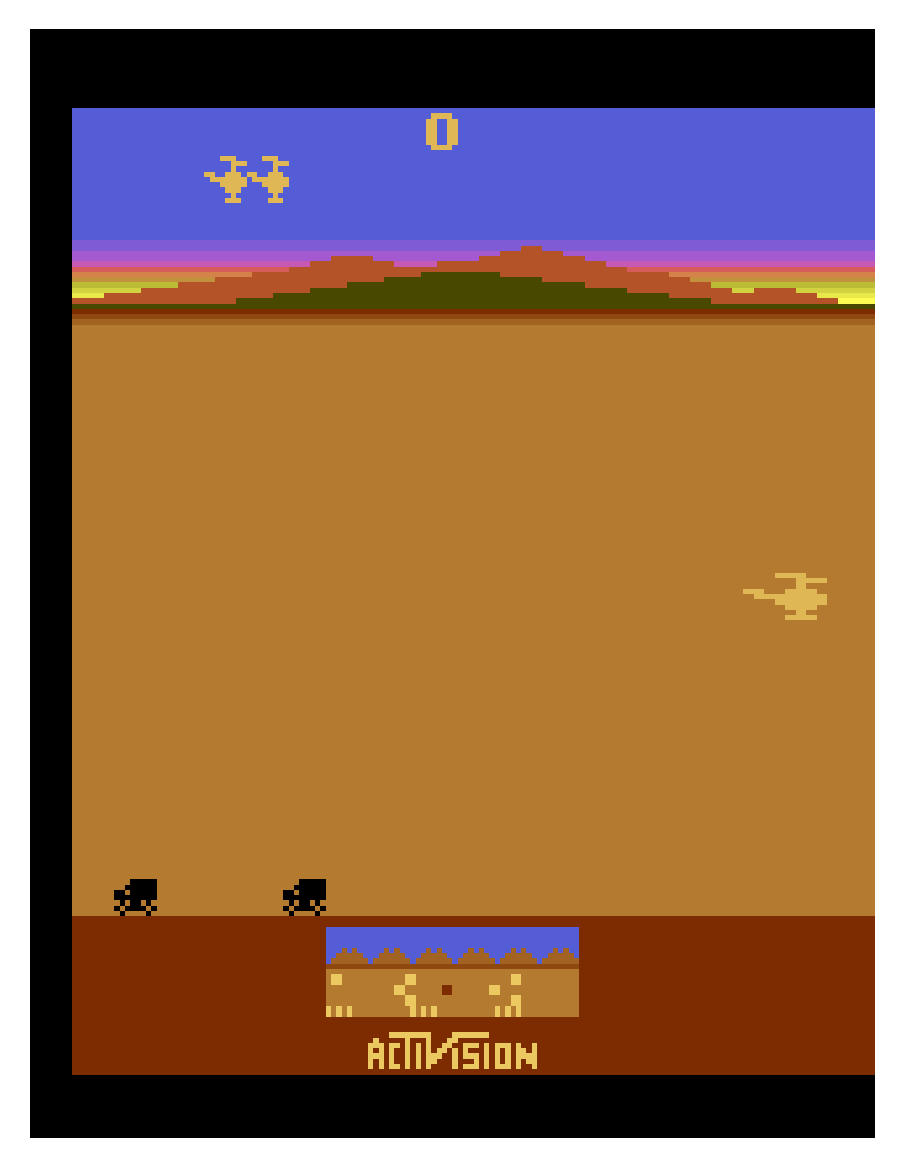}
    \caption{ChopperCommand}
\end{subfigure}
\hspace{-0.1in}
\begin{subfigure}{1.7in}
    \includegraphics[width=1\textwidth, height=2.0in]{./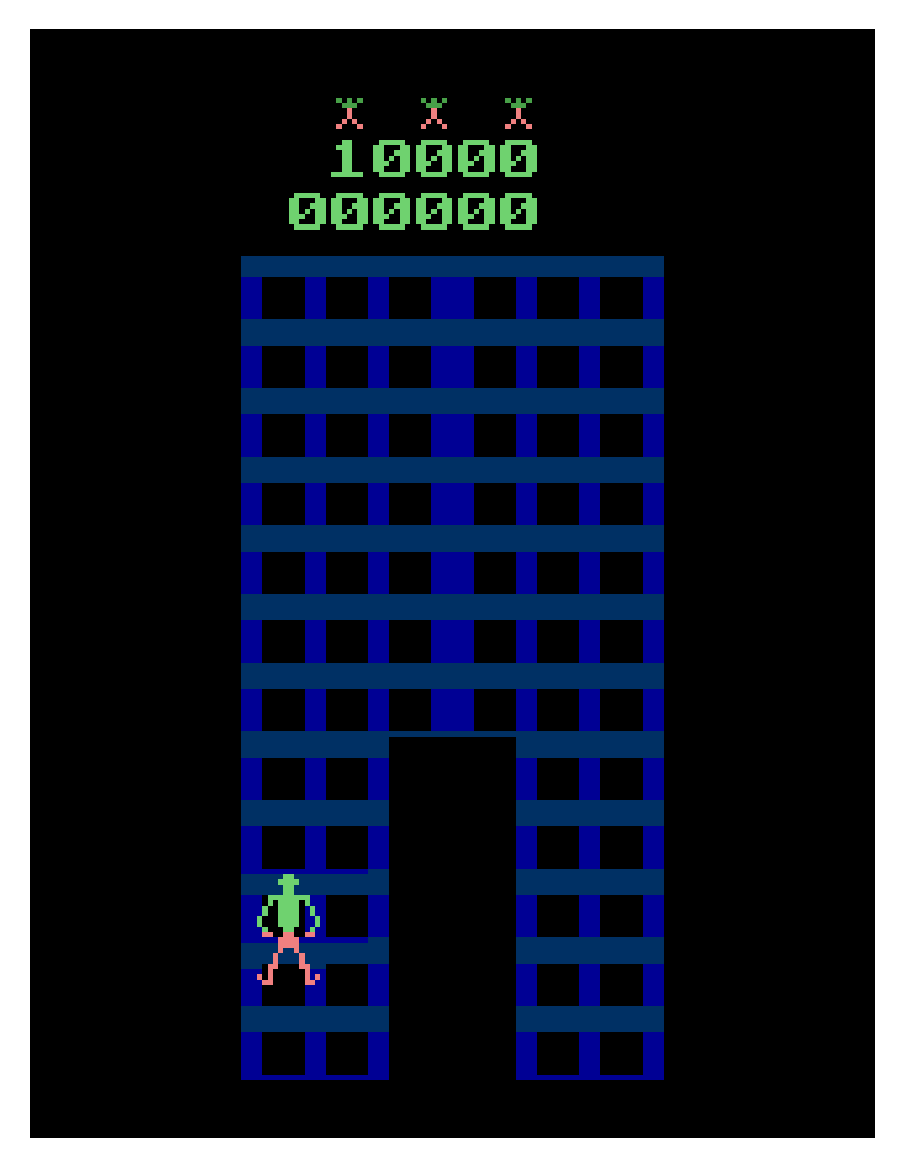}
    \caption{CrazyClimber}
\end{subfigure}
\hspace{-0.1in}
\begin{subfigure}{1.7in}
    \includegraphics[width=1\textwidth, height=2.0in]{./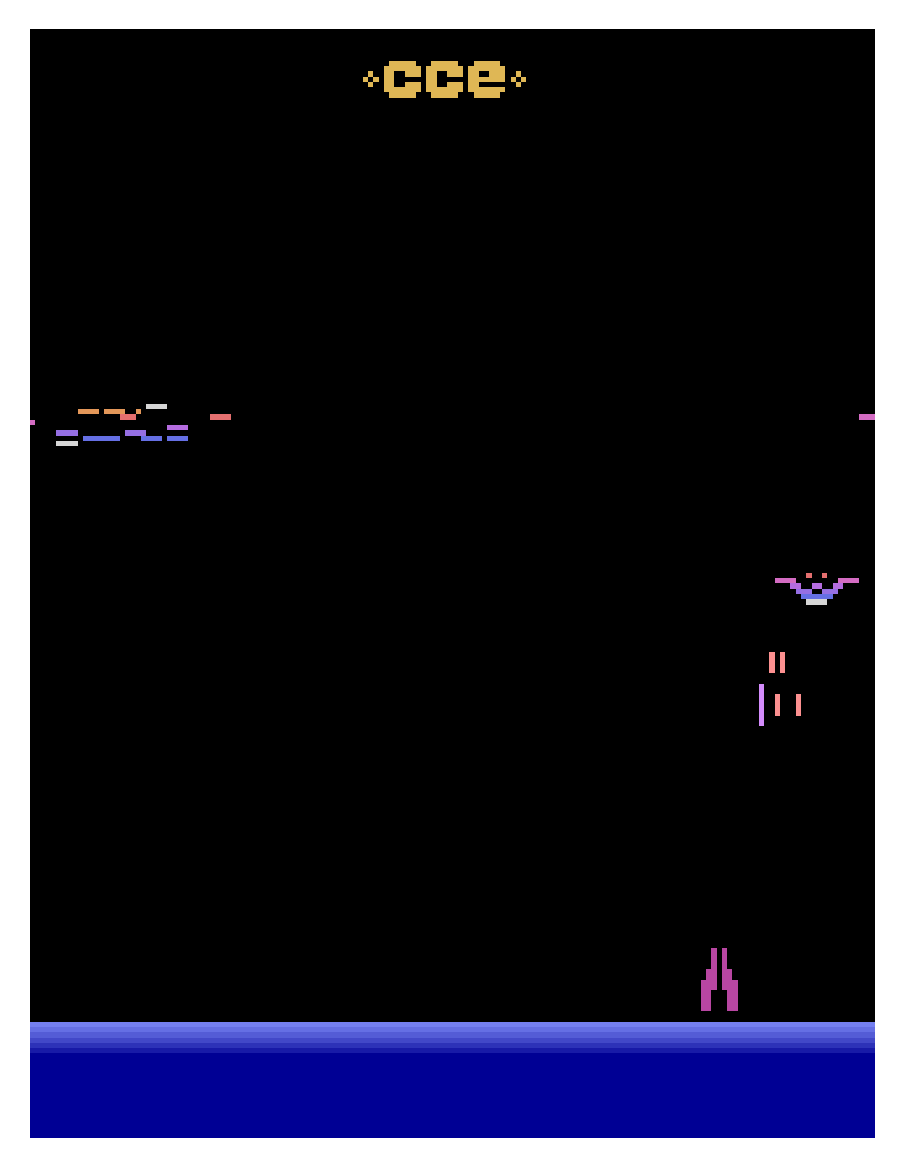}
    \caption{DemonAttack}
\end{subfigure}
\hspace{-0.1in}
\begin{subfigure}{1.7in}
    \includegraphics[width=1\textwidth, height=2.0in]{./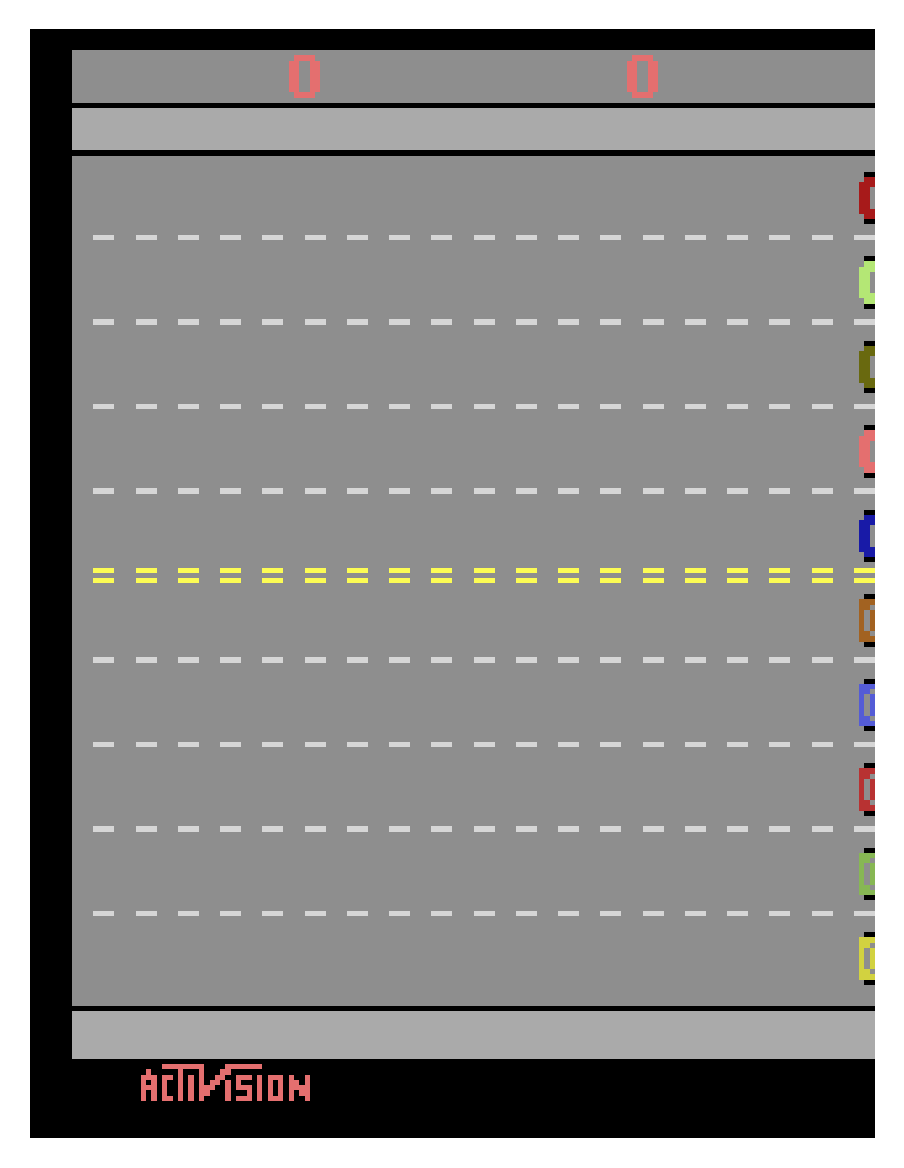}
    \caption{Freeway}
\end{subfigure}
\vspace{-0.0in}
\begin{subfigure}{1.7in}
    \includegraphics[width=1\textwidth, height=2.0in]{./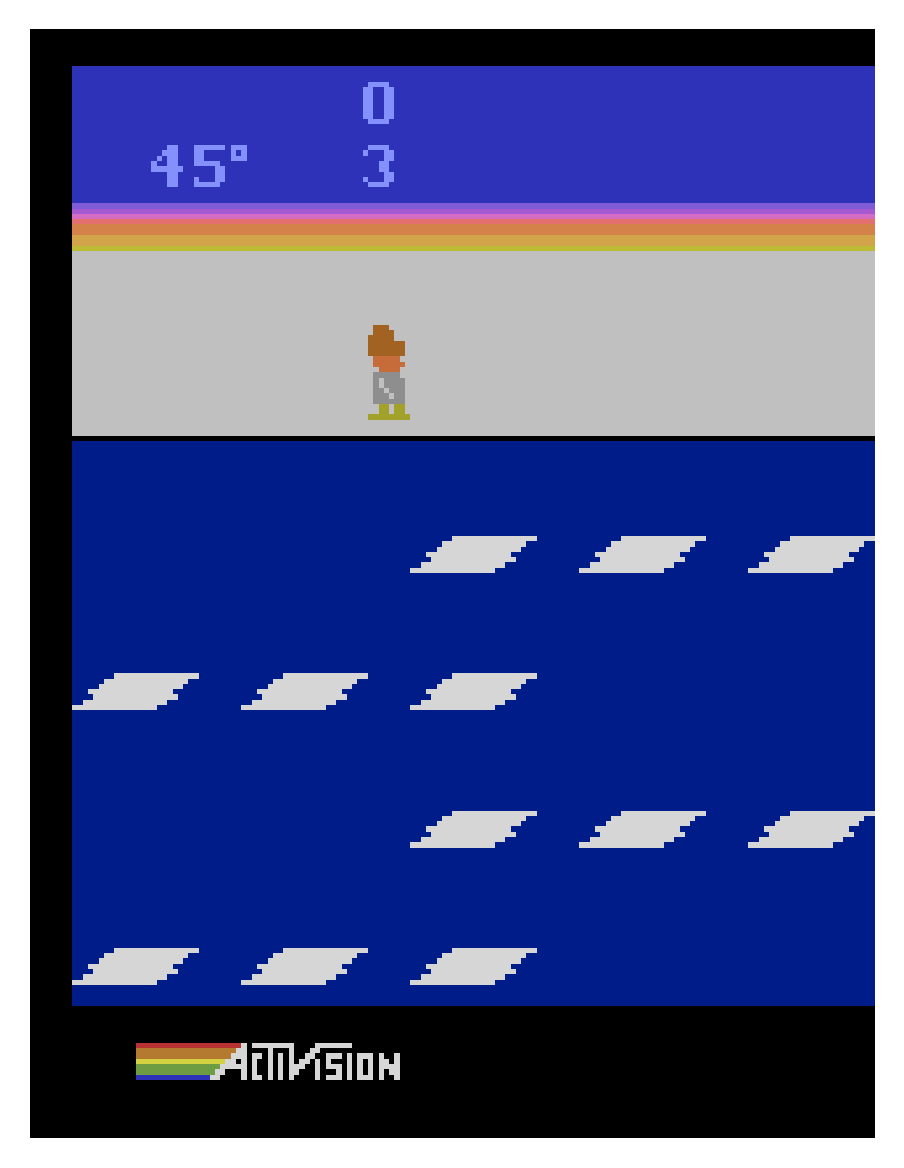}
    \caption{Frostbite}
\end{subfigure}
\hspace{-0.1in}
\begin{subfigure}{1.7in}
    \includegraphics[width=1\textwidth, height=2.0in]{./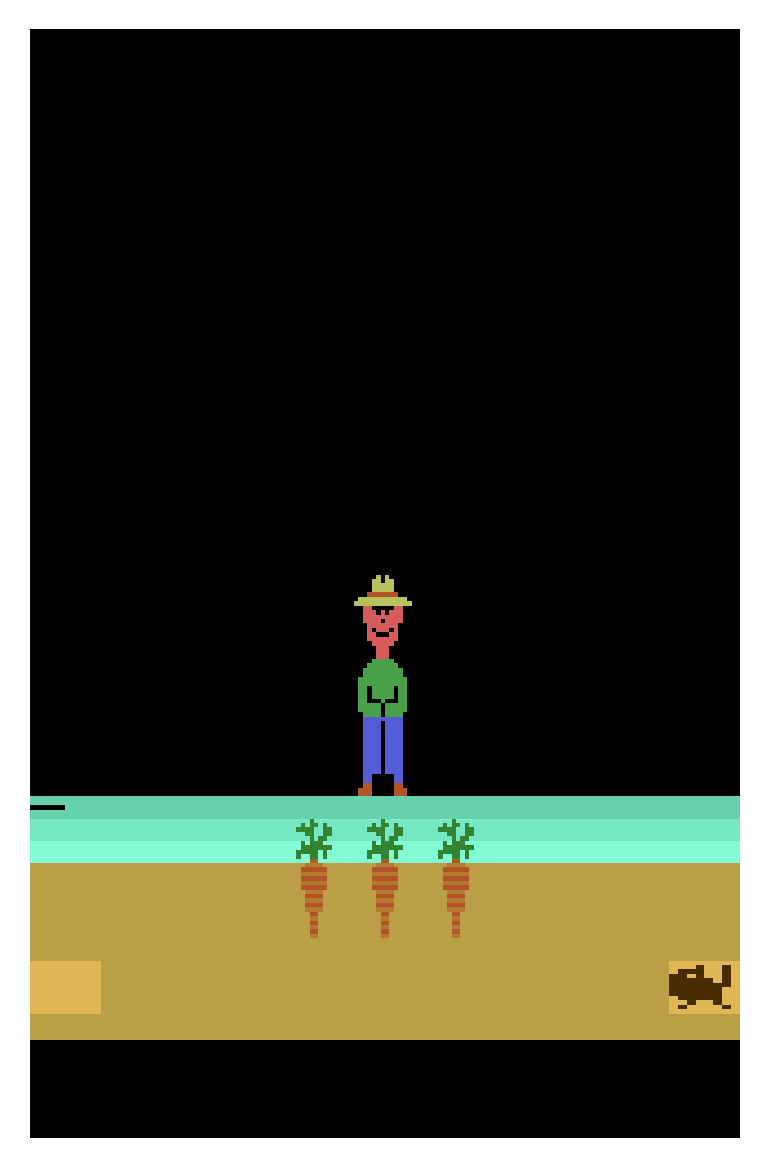}
    \caption{Gopher}
\end{subfigure}
\hspace{-0.1in}
\begin{subfigure}{1.7in}
    \includegraphics[width=1\textwidth, height=2.0in]{./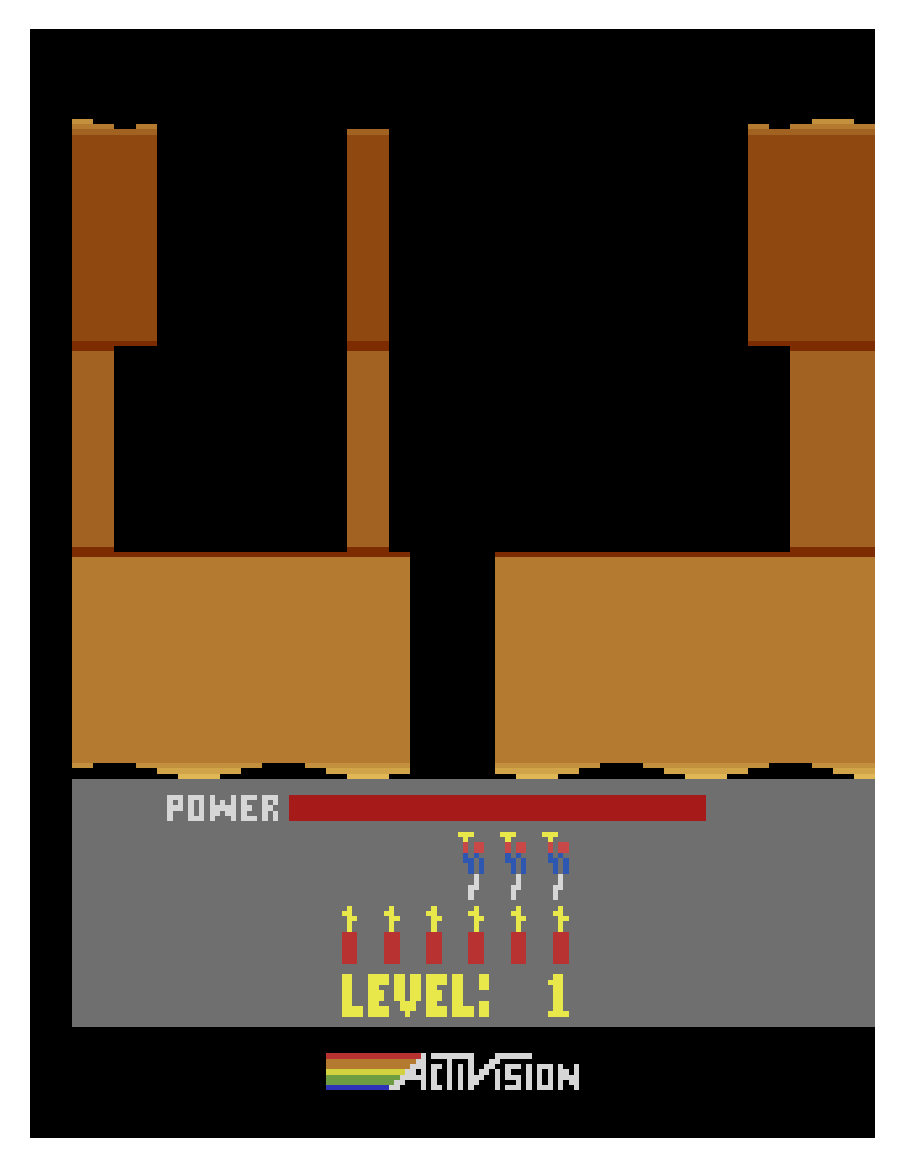}
    \caption{Hero}
\end{subfigure}
\hspace{-0.1in}
\begin{subfigure}{1.7in}
    \includegraphics[width=1\textwidth, height=2.0in]{./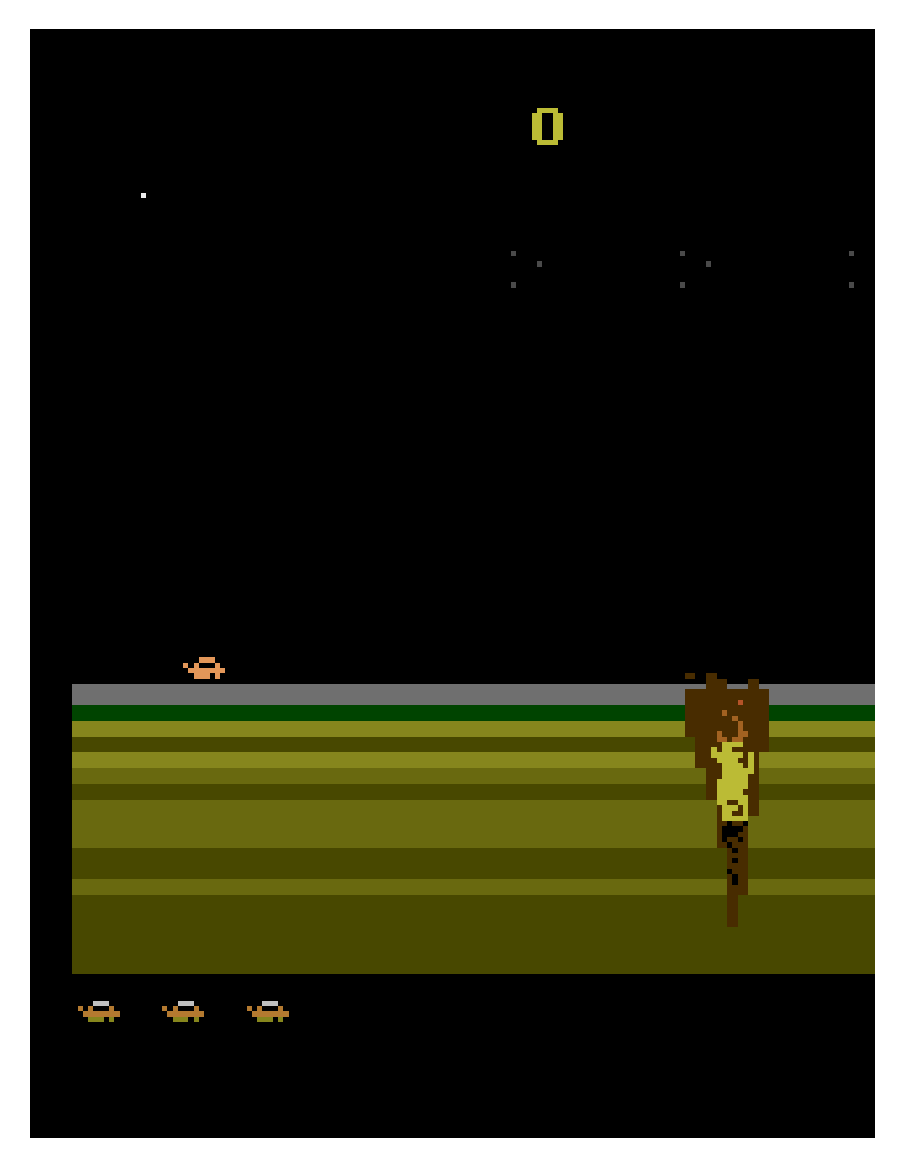}
    \caption{Jamesbond}
\end{subfigure}
\vspace{-0.0in}
\caption{\label{app fig: atari images}Images for Atari100k suites used in our experiments. Each image is a frame of a specific Atari game.}
\end{figure*}
\begin{figure*}[!htbp]
\centering
\begin{subfigure}{1.7in}
    \includegraphics[width=1\textwidth, height=2.0in]{./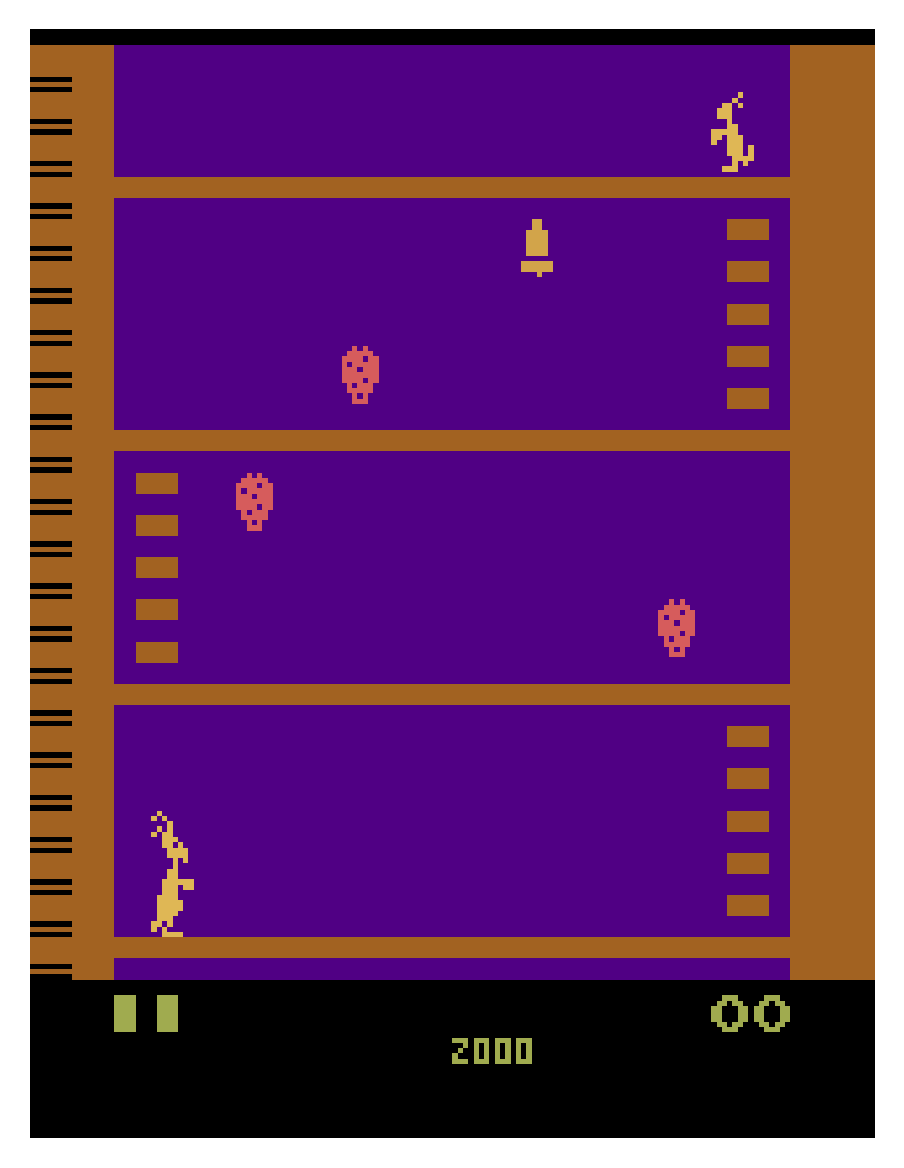}
    \caption{Kangaroo}
\end{subfigure}
\hspace{-0.1in}
\begin{subfigure}{1.7in}
    \includegraphics[width=1\textwidth, height=2.0in]{./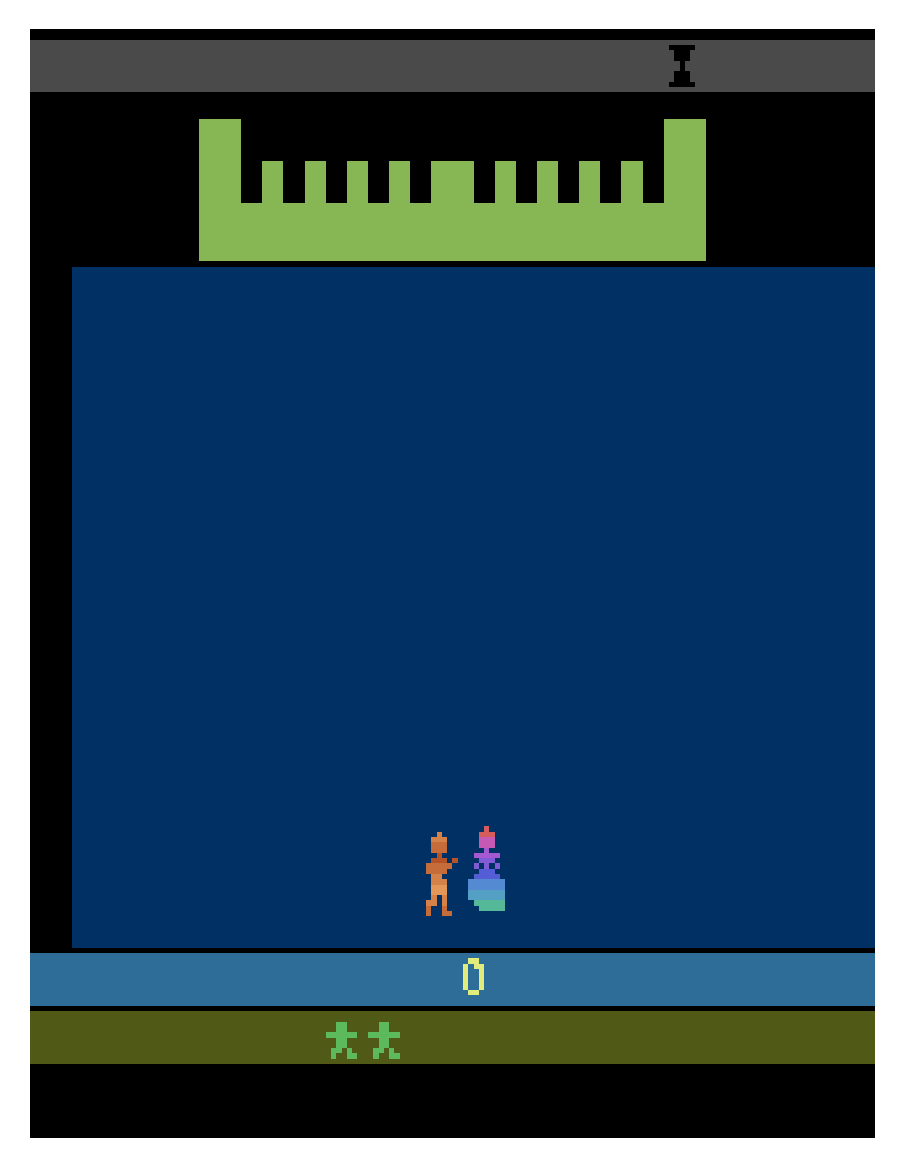}
    \caption{Krull}
\end{subfigure}
\hspace{-0.1in}
\begin{subfigure}{1.7in}
    \includegraphics[width=1\textwidth, height=2.0in]{./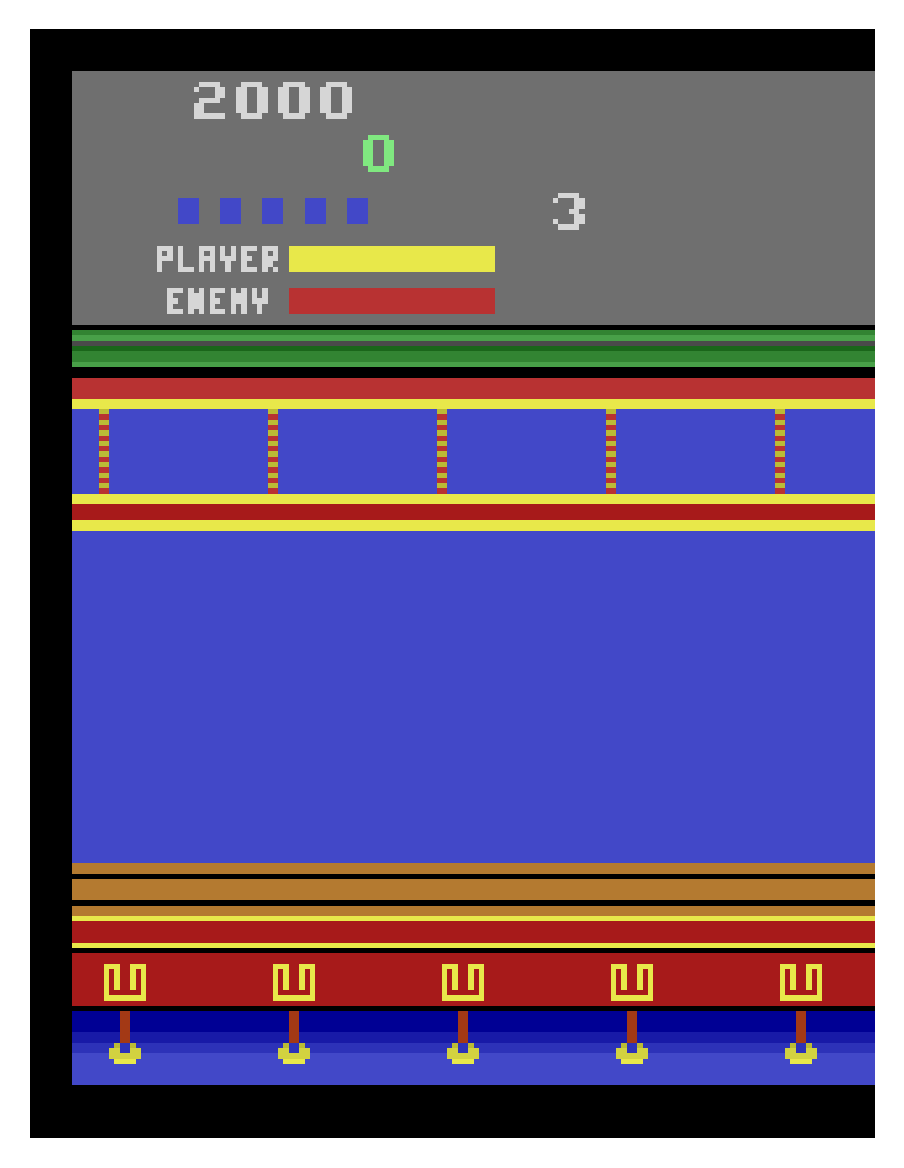}
    \caption{KungFuMaster}
\end{subfigure}
\hspace{-0.1in}
\begin{subfigure}{1.7in}
    \includegraphics[width=1\textwidth, height=2.0in]{./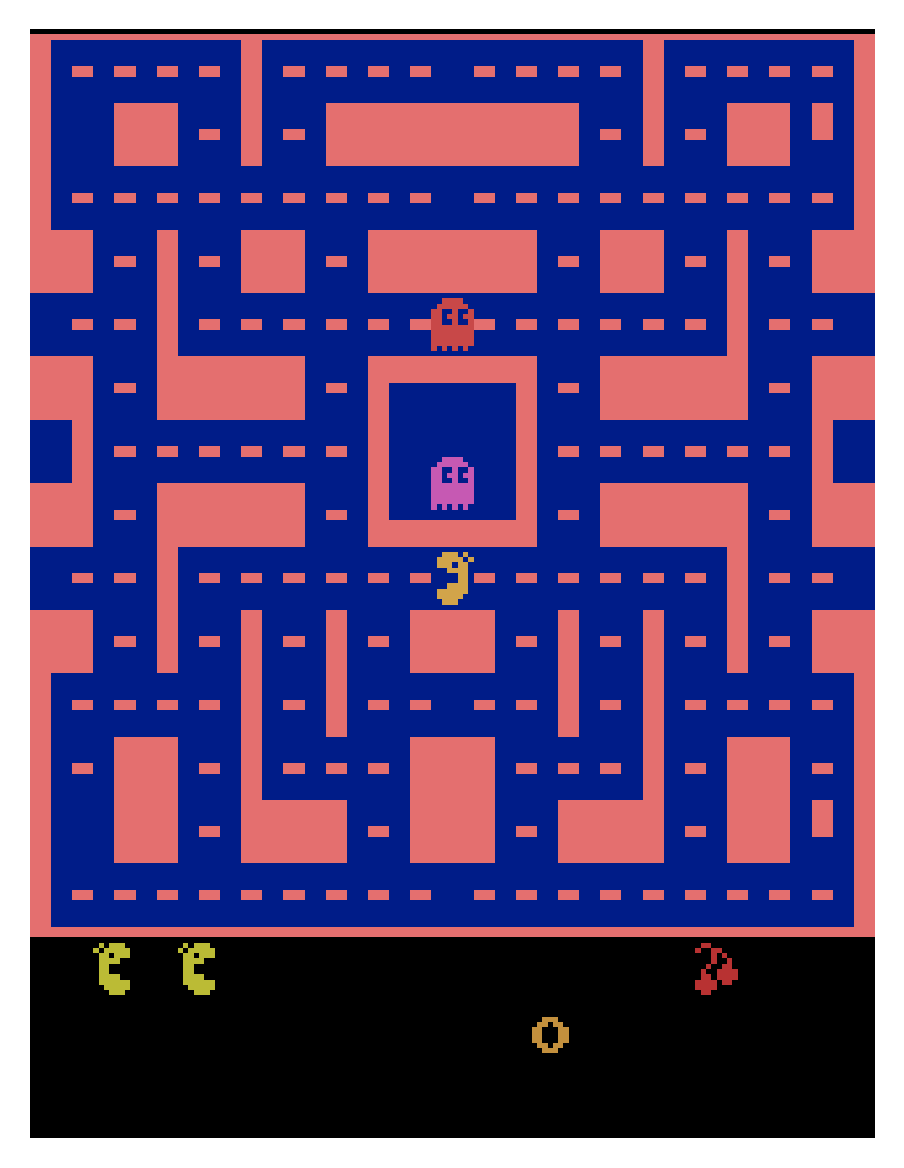}
    \caption{MsPacman}
\end{subfigure}
\begin{subfigure}{1.7in}
    \includegraphics[width=1\textwidth, height=2.0in]{./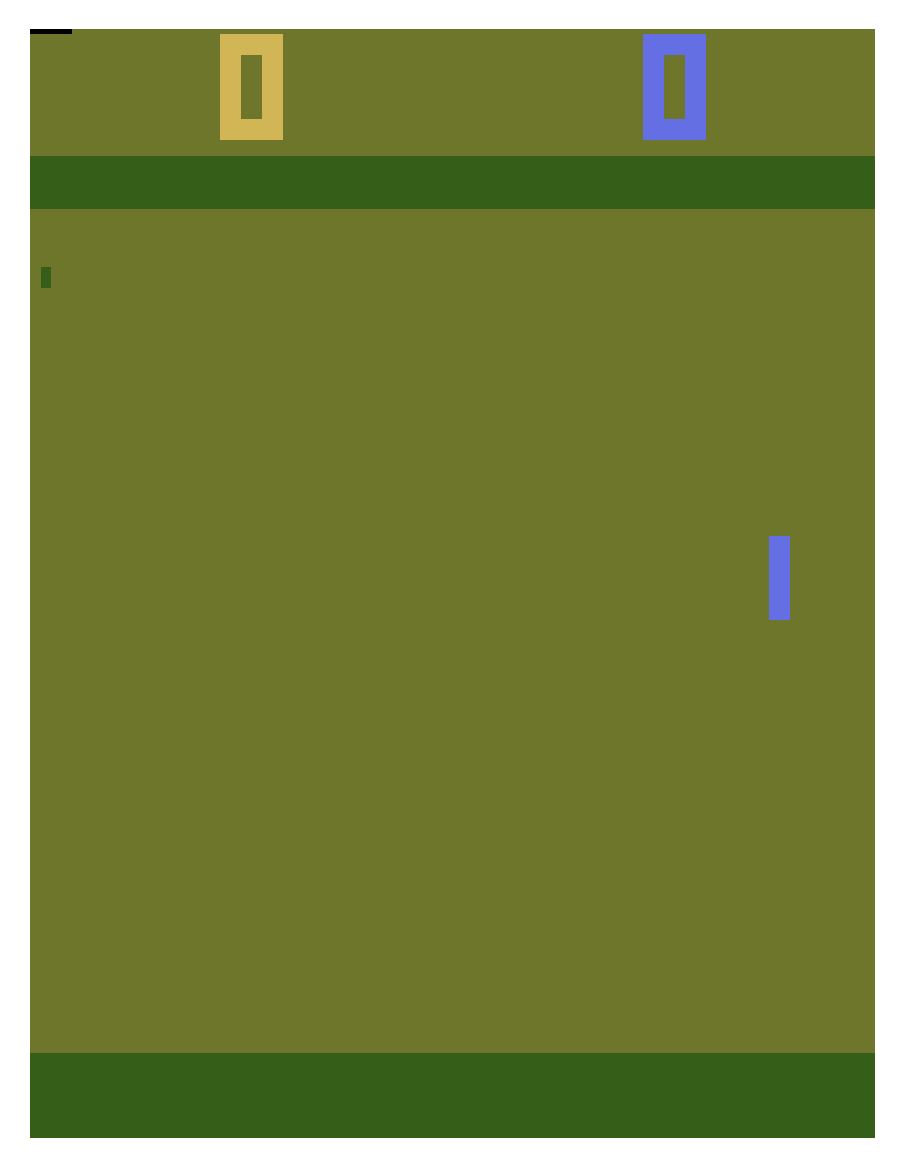}
    \caption{Pong}
\end{subfigure}
\hspace{-0.1in}
\begin{subfigure}{1.7in}
    \includegraphics[width=1\textwidth, height=2.0in]{./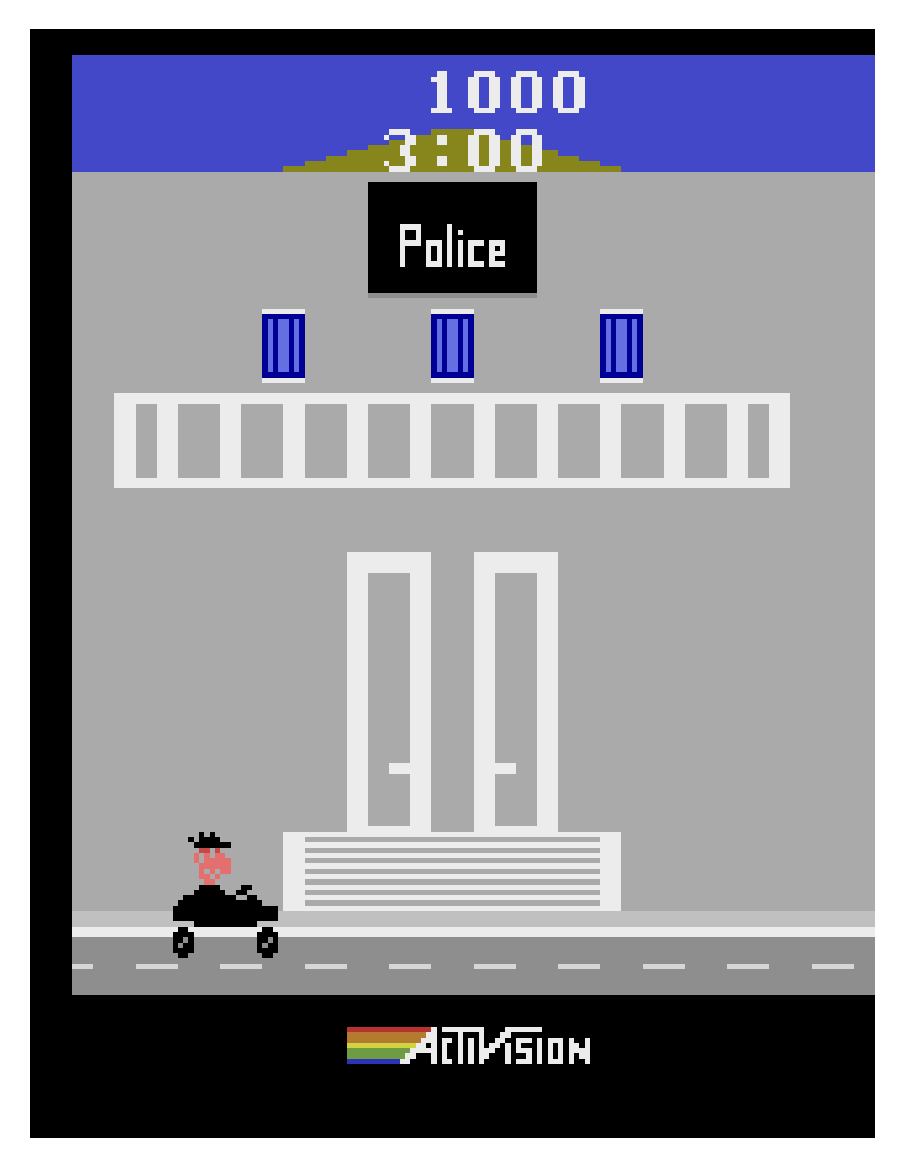}
    \caption{PrivateEye}
\end{subfigure}
\hspace{-0.1in}
\begin{subfigure}{1.7in}
    \includegraphics[width=1\textwidth, height=2.0in]{./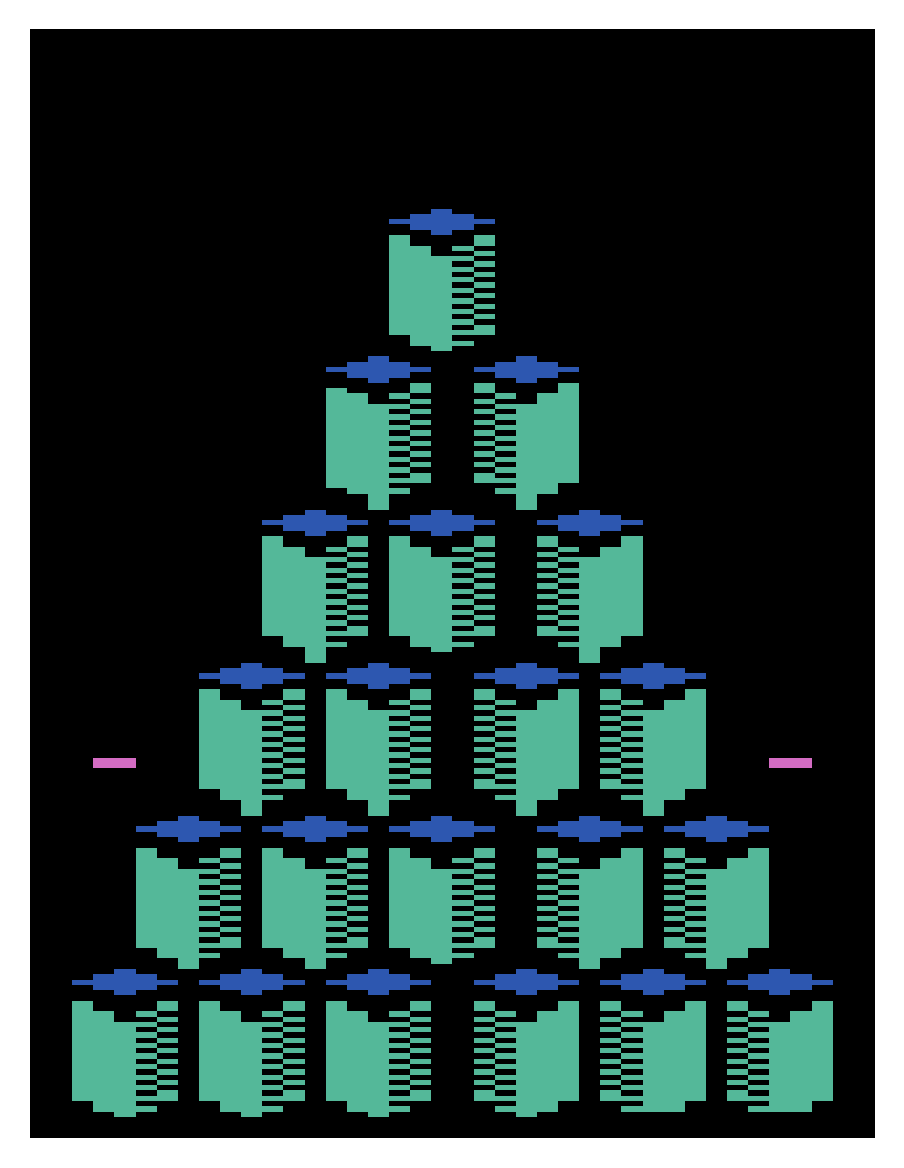}
    \caption{Qbert}
\end{subfigure}
\hspace{-0.1in}
\begin{subfigure}{1.7in}
    \includegraphics[width=1\textwidth, height=2.0in]{./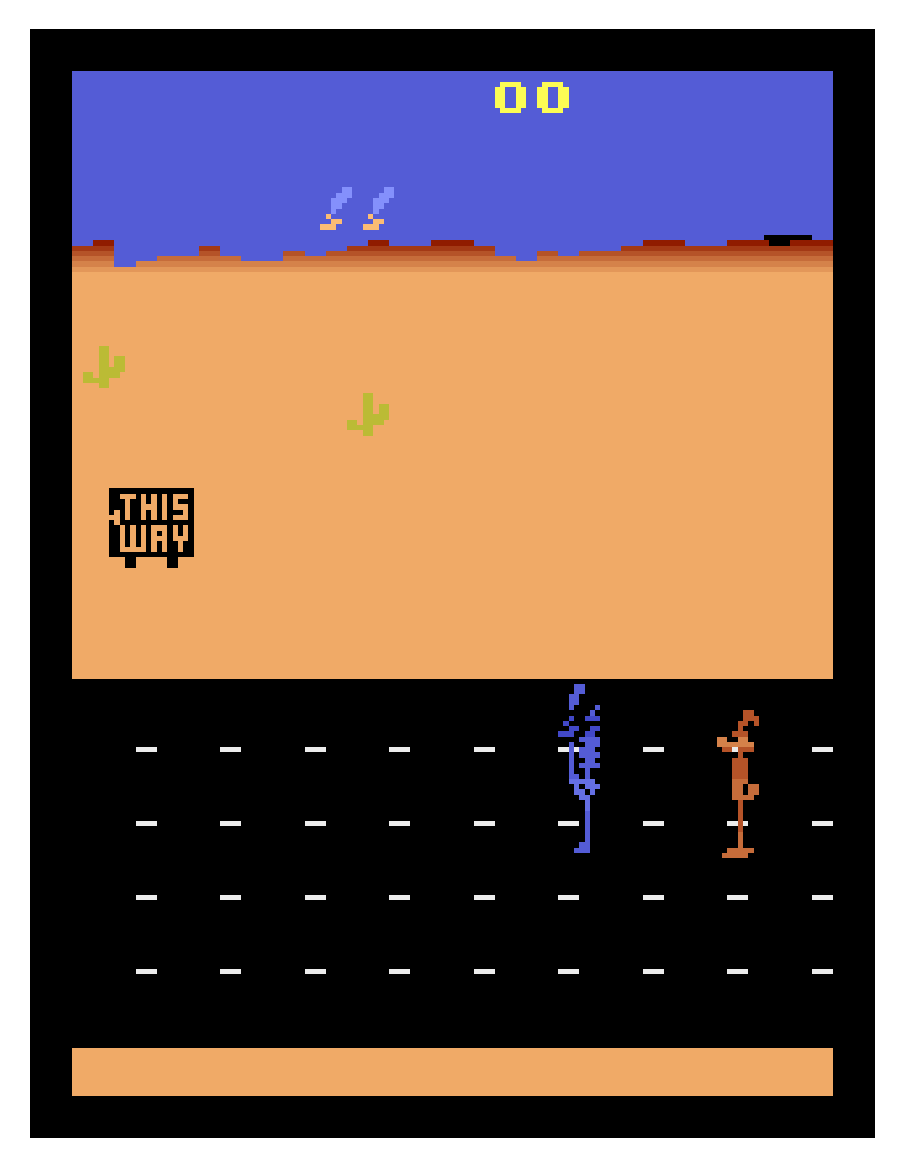}
    \caption{RoadRunner}
\end{subfigure}
\begin{subfigure}{1.7in}
    \includegraphics[width=1\textwidth, height=2.0in]{./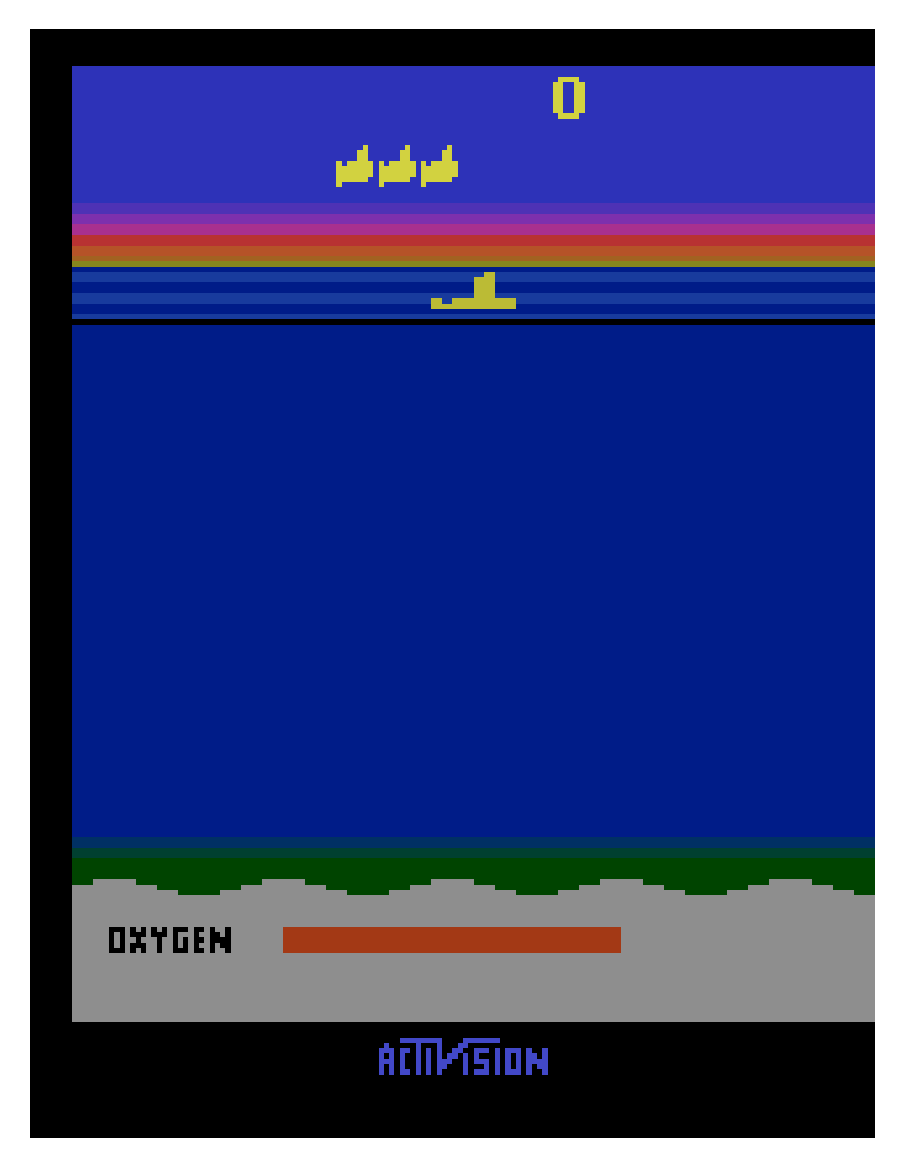}
    \caption{Seaquest}
\end{subfigure}
\hspace{-0.1in}
\begin{subfigure}{1.7in}
    \includegraphics[width=1\textwidth, height=2.0in]{./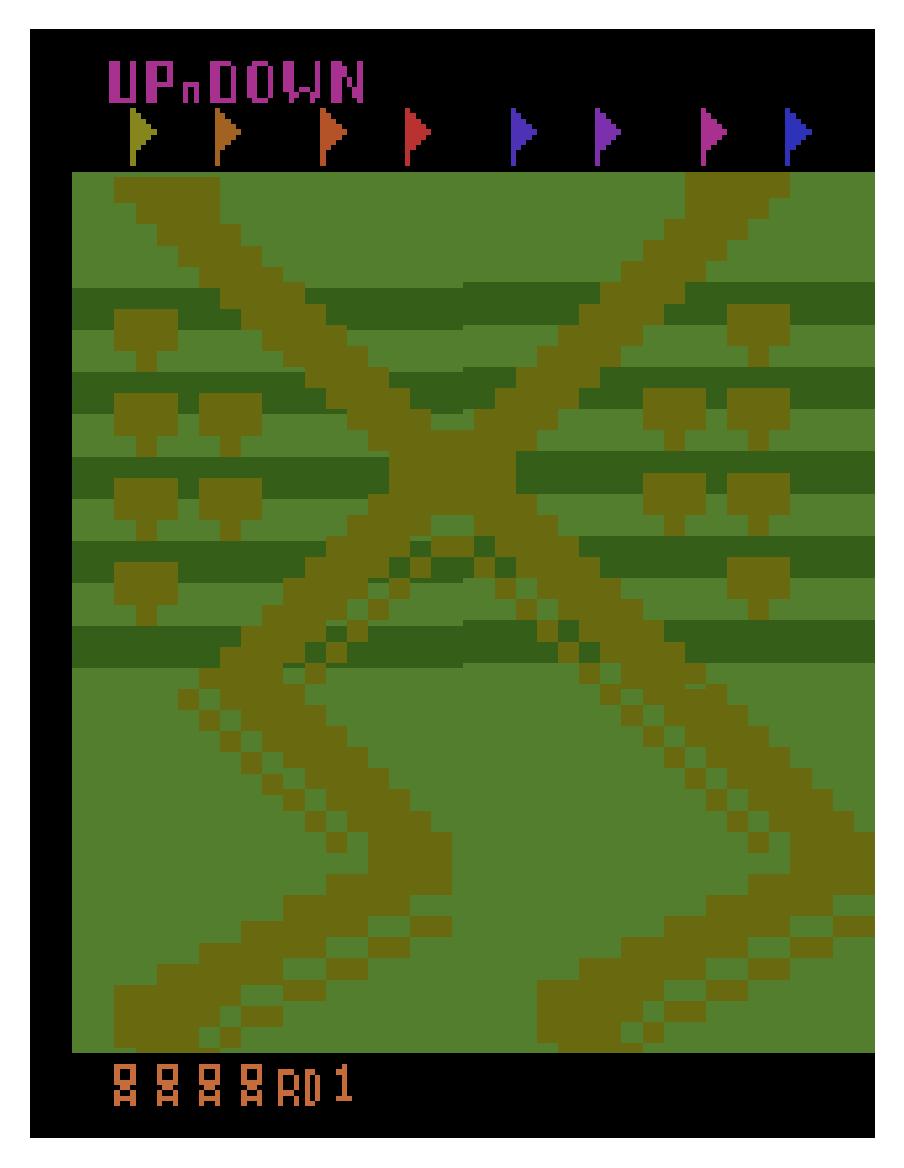}
    \caption{UpNDown}
\end{subfigure}
\caption{\label{app fig: atari images continuation}Images for Atari100k suites used in our experiments (continuation of \cref{app fig: atari images}). Each image is a frame of a specific Atari game.}
\end{figure*}

\begin{table*}[htbp]
\centering
\begin{tabular}{l|cl}
\toprule
Env&State Dimension & Action Dimension\\
\midrule
 InvertedPendulum & 5 & Continuous(1) \\
 InvertedDoublePendulum & 9 & Continuous(1) \\
 InvertedPendulumSwingup & 5 & Continuous(1) \\
 Reacher & 9 & Continuous(2) \\
 Walker2D & 22 & Continuous(6) \\
 HalfCheetah & 26 & Continuous(6) \\
 Ant & 28 & Continuous(8) \\
 Hopper & 15 & Continuous(3) \\
\bottomrule
        \end{tabular}
        \caption{\label{app table: bullet info}State dimension and action space for Bullet suite. Continuous($x$) means the action space is continuous with dimension $x$. }
\end{table*}

\begin{table*}[htbp]
\centering
\begin{tabular}{l|ll}
\toprule
Env&State Dimension & Action Dimension\\
\midrule
 Reacher & 11 & Continuous(2) \\
 Walker2d & 17 & Continuous(6) \\
 HalfCheetah & 17 & Continuous(6) \\
 Swimmer & 8 & Continuous(2) \\
 Ant & 111 & Continuous(8) \\
 Hopper & 11 & Continuous(3) \\
 InvertedPendulum & 4 & Continuous(1) \\
 InvertedDoublePendulum & 11 & Continuous(1) \\
\bottomrule
        \end{tabular}
        \caption{\label{app table: mujoco info}State space and action space for MuJoCo suite. Continuous($x$) means the action space is continuous with dimension $x$.}
\end{table*}

\begin{table*}[htbp]
\centering
\begin{tabular}{ll|ll}
\toprule
Domain&Tasks&State Space & Action Space\\
\midrule

ball\_in\_cup & catch & (3, 100, 100) & Continuous(2) \\
cartpole & balance & (3, 100, 100) & Continuous(1) \\
cartpole & balance\_sparse & (3, 100, 100) & Continuous(1) \\
cartpole & swingup & (3, 100, 100) & Continuous(1) \\
cartpole & swingup\_sparse & (3, 100, 100) & Continuous(1) \\
cheetah & run & (3, 100, 100) & Continuous(6) \\
finger & spin & (3, 100, 100) & Continuous(2) \\
finger & turn\_easy & (3, 100, 100) & Continuous(2) \\
finger & turn\_hard & (3, 100, 100) & Continuous(2) \\
hopper & hop & (3, 100, 100) & Continuous(4) \\
hopper & stand & (3, 100, 100) & Continuous(4) \\
pendulum & swingup & (3, 100, 100) & Continuous(1) \\
reacher & easy & (3, 100, 100) & Continuous(2) \\
reacher & hard & (3, 100, 100) & Continuous(2) \\
walker & stand & (3, 100, 100) & Continuous(6) \\
walker & walk & (3, 100, 100) & Continuous(6) \\

\bottomrule
        \end{tabular}
        \caption{\label{app table: dmc info}State space and action space for DMControl suite. Continuous($x$) means the action space is continuous with dimension $x$.}
\end{table*}

\begin{table*}[htbp]
\centering
\begin{tabular}{l|ll}
\toprule
\textbf{Game}&State Space & Action Space\\
\midrule

Alien & (210, 160, 3) & Discrete(18) \\
Amidar & (210, 160, 3) & Discrete(10) \\
Assault & (210, 160, 3) & Discrete(7) \\
Asterix & (210, 160, 3) & Discrete(9) \\
BankHeist & (210, 160, 3) & Discrete(18) \\
BattleZone & (210, 160, 3) & Discrete(18) \\
Boxing & (210, 160, 3) & Discrete(18) \\
Breakout & (210, 160, 3) & Discrete(4) \\
ChopperCommand & (210, 160, 3) & Discrete(18) \\
CrazyClimber & (210, 160, 3) & Discrete(9) \\
DemonAttack & (210, 160, 3) & Discrete(6) \\
Freeway & (210, 160, 3) & Discrete(3) \\
Frostbite & (210, 160, 3) & Discrete(18) \\
Gopher & (210, 160, 3) & Discrete(8) \\
Hero & (210, 160, 3) & Discrete(18) \\
Jamesbond & (210, 160, 3) & Discrete(18) \\
Kangaroo & (210, 160, 3) & Discrete(18) \\
Krull & (210, 160, 3) & Discrete(18) \\
KungFuMaster & (210, 160, 3) & Discrete(14) \\
MsPacman & (210, 160, 3) & Discrete(9) \\
Pong & (210, 160, 3) & Discrete(6) \\
PrivateEye & (210, 160, 3) & Discrete(18) \\
Qbert & (210, 160, 3) & Discrete(6) \\
RoadRunner & (210, 160, 3) & Discrete(18) \\
Seaquest & (210, 160, 3) & Discrete(18) \\
UpNDown & (210, 160, 3) & Discrete(6) \\
\bottomrule
        \end{tabular}
        \caption{\label{app table: atari info}State space and action space for Atari suite. Discrete($x$) means the action space is discrete with $x$ actions.}
\end{table*}
 \clearpage
 \clearpage
\section{Appendix: Additional Experimental Results}

In this section, we provide additional experimental results. PEER works by adding a regularization term to backbone DRL algorithms. Thus, the comparison with the backbone algorithm of PEER naturally becomes an ablation experiment. We provide more experiments to demonstrate the effectiveness of PEER.

\subsection{Experiments on MuJoCo Suite}

We present the performance of PEER on the MuJoCo suite in \cref{app table: exp mujoco}. The results show that our proposed PEER outperforms or matches the compared algorithms in \textbf{5} out \textbf{7} MuJoCo environments. Compared with its backbone algorithm TD3, PEER surpasses it in \textbf{6} out of \textbf{7} environments.

\begin{table}[!htbp]
\setlength\tabcolsep{3pt}
    \centering

\begin{tabular}{llllllll}
\toprule
\textbf{Algorithm}&\textbf{Ant}&\textbf{HalfCheetah}&\textbf{Hopper}&\textbf{InvDouPen}&\textbf{InvPen}&\textbf{Reacher}&\textbf{Walker}\\
\midrule

PEER &\colorbox{mine}{\!\!5386} $\pm${\footnotesize493} &10832 $\pm$ {\footnotesize501} &\colorbox{mine}{\!\!3424} $\pm${\footnotesize180} &7470 $\pm$ {\footnotesize3721} &\colorbox{mine}{\!\!1000} $\pm${\footnotesize0}
 &\colorbox{mine}{\!\!-4} $\pm${\footnotesize1} &\colorbox{mine}{\!\!4223} $\pm${\footnotesize655}\\

TD3 &5102 $\pm$ {\footnotesize787} &\colorbox{mine}{\!\!10858} $\pm${\footnotesize637} &3163 $\pm$ {\footnotesize367} &7312 $\pm$ {\footnotesize3653} &1000 $\pm$ {\footnotesize0} &-4 $\pm$ {\footnotesize1} &3762 $\pm$ {\footnotesize956}\\
METD3 &2256 $\pm$ {\footnotesize431} &5696 $\pm$ {\footnotesize1740} &804 $\pm$ {\footnotesize71} &7815 $\pm$ {\footnotesize0} &912 $\pm$ {\footnotesize71} &-8 $\pm$ {\footnotesize3} &2079 $\pm$ {\footnotesize1096}\\

SAC &4233 $\pm$ {\footnotesize806} &10482 $\pm$ {\footnotesize959} &2666 $\pm$ {\footnotesize320} &\colorbox{mine}{\!\!9358} $\pm${\footnotesize0} &1000 $\pm$ {\footnotesize0} &-4 $\pm$ {\footnotesize0} &4187 $\pm$ {\footnotesize304}\\

\bottomrule
\end{tabular}
   \caption{\label{app table: exp mujoco}The average return of the last ten evaluations over ten random seeds. The maximum average returns are bolded. PEER outperforms or matches the other tested algorithms in 5 out of 7 environments.}
\end{table}

In \cref{app table: redq mujoco}, we show comparisons with model-free algorithm REDQ~\cite{redq} on pybullet suite.

\begin{table}[!h]
    \centering
\begin{tabular}{llll}
\toprule
Algo&Ant&Hopper&Walker \\
\midrule
PEER  & \colorbox{mine}{\!\!5386} $\pm$ {\footnotesize 493}  & \colorbox{mine}{\!\!3424} $\pm$ {\footnotesize 180} & 
 \colorbox{mine}{\!\!4223} $\pm$ {\footnotesize 655} \\
REDQ & 3900 $\pm$ {\footnotesize 890} & 2656 $\pm$ {\footnotesize 759} & 4211 $\pm$ {\footnotesize 524} \\
\bottomrule
\end{tabular}
\caption{\label{app table: redq mujoco}Average return for PEER and REDQ. PEER surpasses REDQ on all tested tasks. The REDQ results are obtained using the authors' implementation and are reported over 20 trials.}
\end{table}

\subsection{Combination with Model-based Algorithm}

In \cref{app table: dmc TDMPC}, we show comparisons with model-based methods algorithms TDMPC and Dreamer-v2 on DMControl suites. In \cref{app table atari26 performance}, we show comparisons with model-based algorithms Dreamer-v2. Note that the data we take directly from the authors' dreamer-v2 codebase (\href{https://github.com/danijar/dreamerv2/tree/main/scores}{https://github.com/danijar/dreamerv2/tree/main/scores}), the amount of data they use is 1000k, which is \textbf{10} times more than our PEER. The PEER scores in \cref{app table atari26 performance} are taken as the largest of PEER+DrQ and PEER+CURL.

\begin{table}[h]
    \centering
    \scalebox{1}{
    \begin{tabular}{l|llllll}
    \toprule 
        500K Step Scores & Finger, Spin
 & Cartpole, Swingup
 & Reacher, Easy
 & Cheetah, run
 & Walker, Walk
 & Ball\_in\_cup, Catch
 \\ \midrule
        PEER + CURL &\colorbox{mine}{\!\!864} $\pm$ 160 &\colorbox{mine}{\!\!866} $\pm$ 17&\colorbox{mine}{\!\!980} $\pm$ 3 &\colorbox{mine}{\!\!732} $\pm$ 41 &\colorbox{mine}{\!\!946} $\pm$ 17 &\colorbox{mine}{\!\!971} $\pm$ 5 \\
Dreamer-V2 &386 $\pm$ 83 &{853} $\pm$ 15 &876 $\pm$ 60 &610 $\pm$ 117 &934 $\pm$ 16 &792 $\pm$300  \\ 
\midrule 
100K Step Scores \\
\midrule 
PRER {+TDMPC} &772 $\pm$ 107 &\colorbox{mine}{\!\!848}  $\pm$ 25&\colorbox{mine}{\!\!841} $\pm$ 115 &\colorbox{mine}{\!\!636}  $\pm$ 35&\colorbox{mine}{\!\!876}  $\pm$ 41 &\colorbox{mine}{\!\!937}  $\pm$ 96\\
TDMPC &\colorbox{mine}{\!\!943}  $\pm$ 59&770 $\pm$ 70 &628 $\pm$ 105 &222 $\pm$ 88 &577 $\pm$ 208 &933  $\pm$ 24  \\
Dreamer-V2 &414 $\pm$ 93 &697 $\pm$ 176 &633 $\pm$ 248  &501 $\pm$ 146 &705 $\pm$ 232 & 693 $\pm$335  \\
\bottomrule
\end{tabular}}
 \caption{\label{app table: dmc TDMPC} Comparison with model-based methods. PEER outperforms Dreamer-v2 on 12 out of 12 tasks. PEER (combined with TDMPC~\cite{tdmpc}) outperforms TDMPC by on 5 out of 6 tasks.}
\end{table}

\begin{table*}
    \centering
    \scalebox{1.0}{
    \begin{tabular}{llll}
    \toprule
Game & PEER & Dreamer-V2 & MuZero \\
\midrule
Alien & \colorbox{mine}{\!\!1218.9} & 384.1 & 530.0  \\
Amidar & \colorbox{mine}{\!\!185.2} & 29.8 & 38.8  \\
Assault & \colorbox{mine}{\!\!721.0} & 433.4 & 500.1  \\
Asterix & 918.2 & 330.6 & \colorbox{mine}{\!\!1734.0} \\
BHeist & 78.6 & 127.1 & \colorbox{mine}{\!\!192.5} \\
BZone & \colorbox{mine}{\!\!15727.3} & 4200.0 & 7687.5  \\
Boxing & 14.5 & \colorbox{mine}{\!\!37.7} & 15.1  \\
Breakout & 8.5 & 1.5 & \colorbox{mine}{\!\!48.0} \\
ChpCmd & \colorbox{mine}{\!\!1451.8} & 687.5 & 1350.0  \\
CzClmr & 18922.7 & 25232.5 & \colorbox{mine}{\!\!56937.0} \\
DmAttack & 1236.7 & 182.9 & \colorbox{mine}{\!\!3527.0} \\
Freeway & \colorbox{mine}{\!\!30.4} & 11.6 & 21.8  \\
Frostbite & \colorbox{mine}{\!\!2151.0} & 302.5 & 255.0  \\
Gopher & 681.8 & 820.2 & \colorbox{mine}{\!\!1256.0} \\
Hero & \colorbox{mine}{\!\!7499.9} & 2185.0 & 3095.0  \\
Jbond & \colorbox{mine}{\!\!414.1} & 81.2 & 87.5  \\
Kangaroo & \colorbox{mine}{\!\!1148.2} & 150.0 & 62.5  \\
Krull & \colorbox{mine}{\!\!5444.7} & 3853.8 & 4890.8  \\
KFMaster & 15439.1 & 12420.3 & \colorbox{mine}{\!\!18813.0} \\
MsPacman & \colorbox{mine}{\!\!1768.4} & 647.9 & 1265.6  \\
Pong & -9.5 & -18.3 & \colorbox{mine}{\!\!-6.7} \\
PriEye & \colorbox{mine}{\!\!3207.7} & 188.8 & 56.3  \\
Qbert & 2197.7 & 318.6 & \colorbox{mine}{\!\!3952.0} \\
RdRunner & \colorbox{mine}{\!\!10697.3} & 3622.5 & 2500.0  \\
Squest & \colorbox{mine}{\!\!538.5} & 356.0 & 208.0  \\
UpNDown & 7680.9 & \colorbox{mine}{\!\!8025.1} & 2896.9  \\
\bottomrule
 \end{tabular}
 }
\caption{\label{app table atari26 performance}PEER outperforms Dreamer-v2 and Muzero on 21 and 16 games of Atari26 where Dreamer-v2 even uses \textit{10} times the data of PEER. Note that the data we take directly from the authors' dreamer-v2 codebase, the amount of data for Dreamer-V2 they use is 1000k, which is \textbf{10} times more than our PEER.}
\end{table*} 

\newpage
\subsection{Various $\beta$ for Performance Improvement}
Fine-tuning for hyper-parameters probably improves the performance of PEER. To see this, we select 7 Atari environments to investigate the effect of fine-tuning $\beta$, where PEER (coupled with CURL) achieves SOTA performance. We present the results in \cref{fig: various beta}. There is no one value taken that is significantly better than the other. We see that large $\beta$ (=1e-2) may result in the failure of learning (on Freeway game) but may also bring the best performance improvements (on Kangaroo game). Overall, fine-tuning the hyper-parameter $\beta$ may improve the empirical performance by a large margin.

\begin{figure*}[!htbp]
	\begin{center}
		\includegraphics[width=\linewidth]{./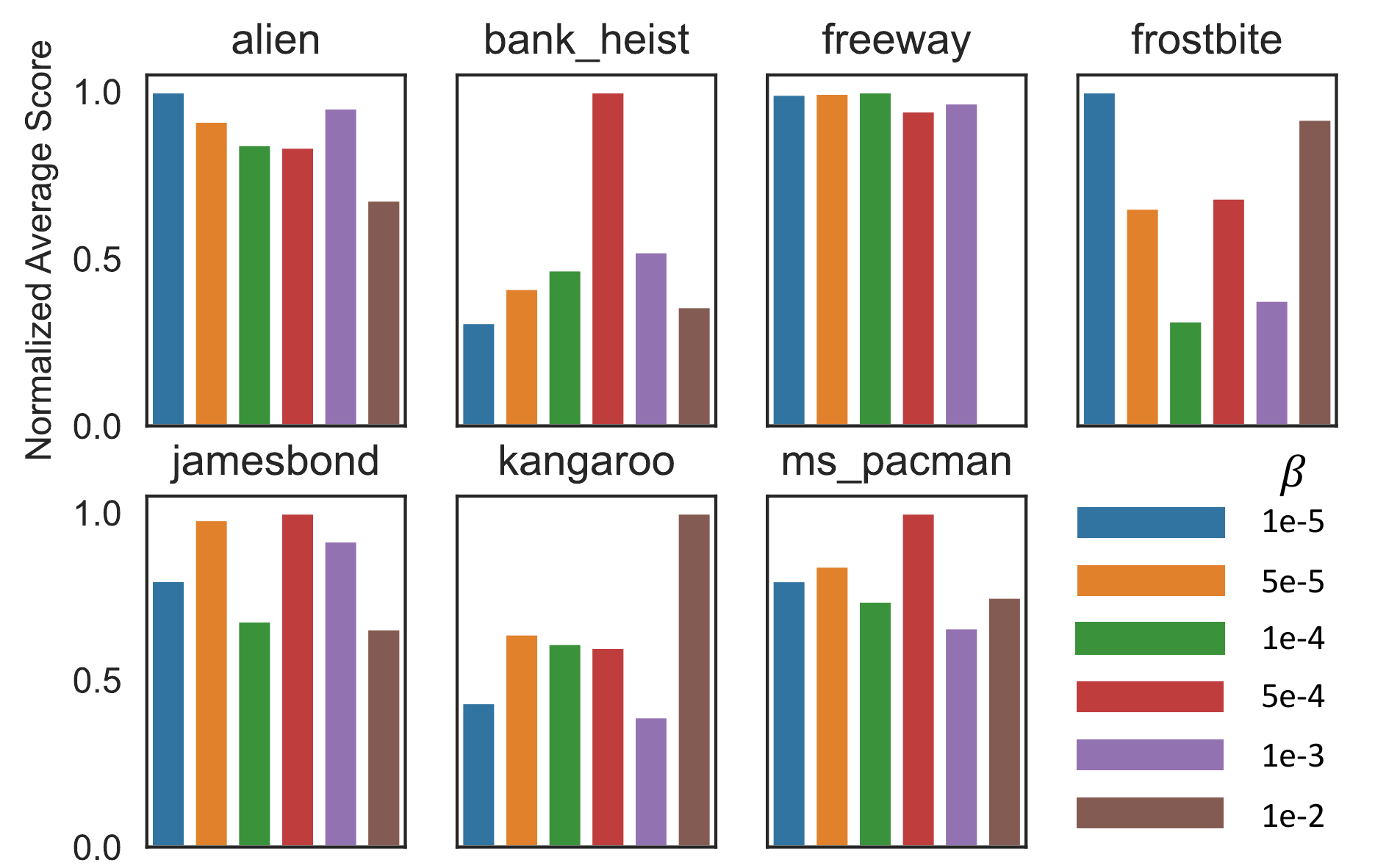} 
	\end{center}
	\caption{\label{fig: various beta}The average scores normalized by the max average score on the 7 Atari games for selected 6 hyper-parameter $\beta$. From the experiments, we can see that fine-tuning the $\beta$ may result in performance improvements.} 
\end{figure*}

\subsection{Performance curves on DMControl Tasks}

We present the performance curves of PEER on a total of 16 DMControl environments in \cref{app fig: additional results} and \cref{app fig: additional results drq}. We run 10 seeds in each environment.

\begin{figure*}[!htbp]
\centering
\includegraphics[width=\linewidth]{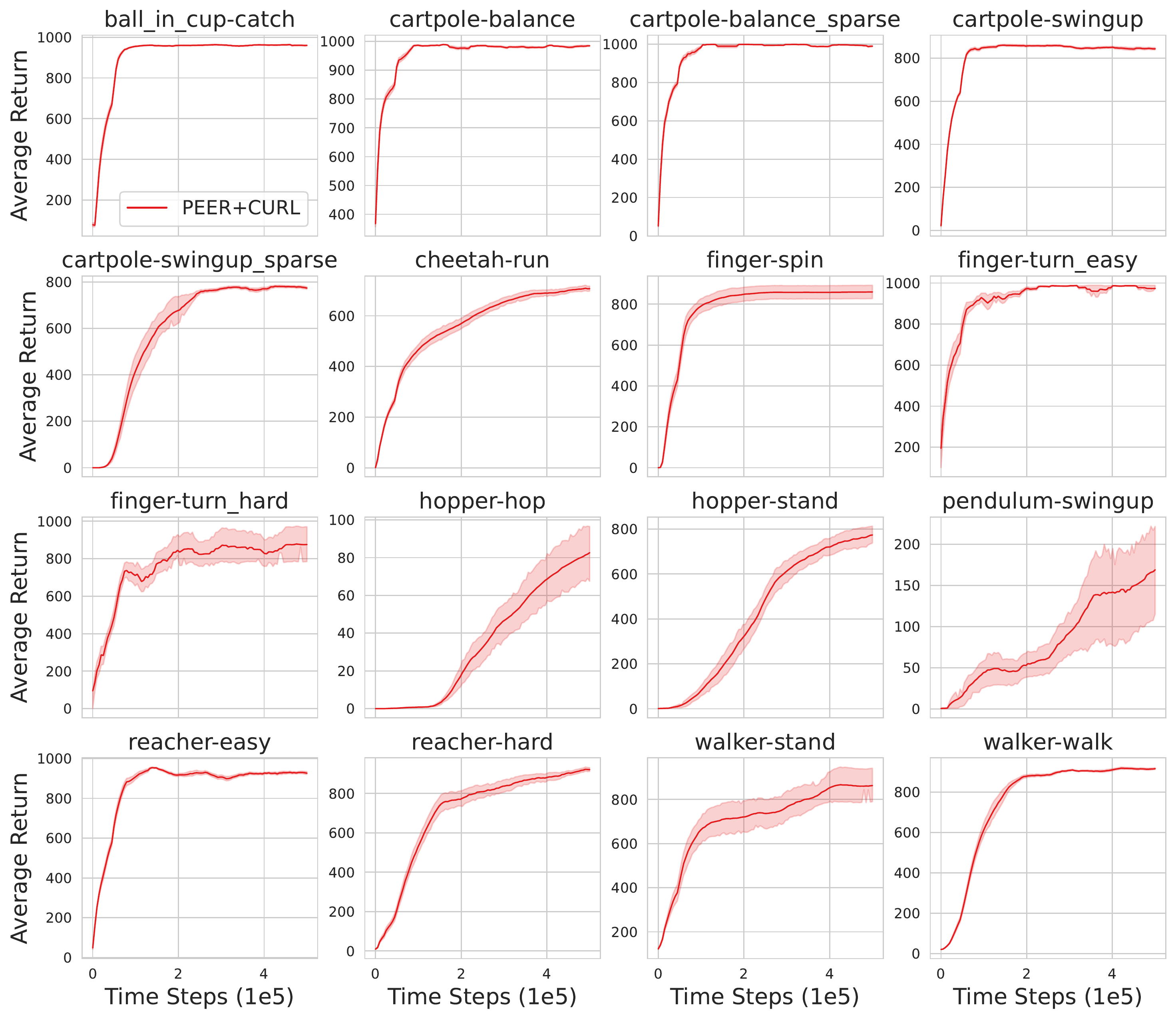}
	\caption{\label{app fig: additional results}Performance curves for PEER (coupled with CURL) on DMControl suite. The shaded region represents half the standard deviation of the average evaluation over 10 seeds. The curves are smoothed by moving average.} 
\end{figure*}

\begin{figure*}[!htbp]
\centering
\includegraphics[width=\linewidth]{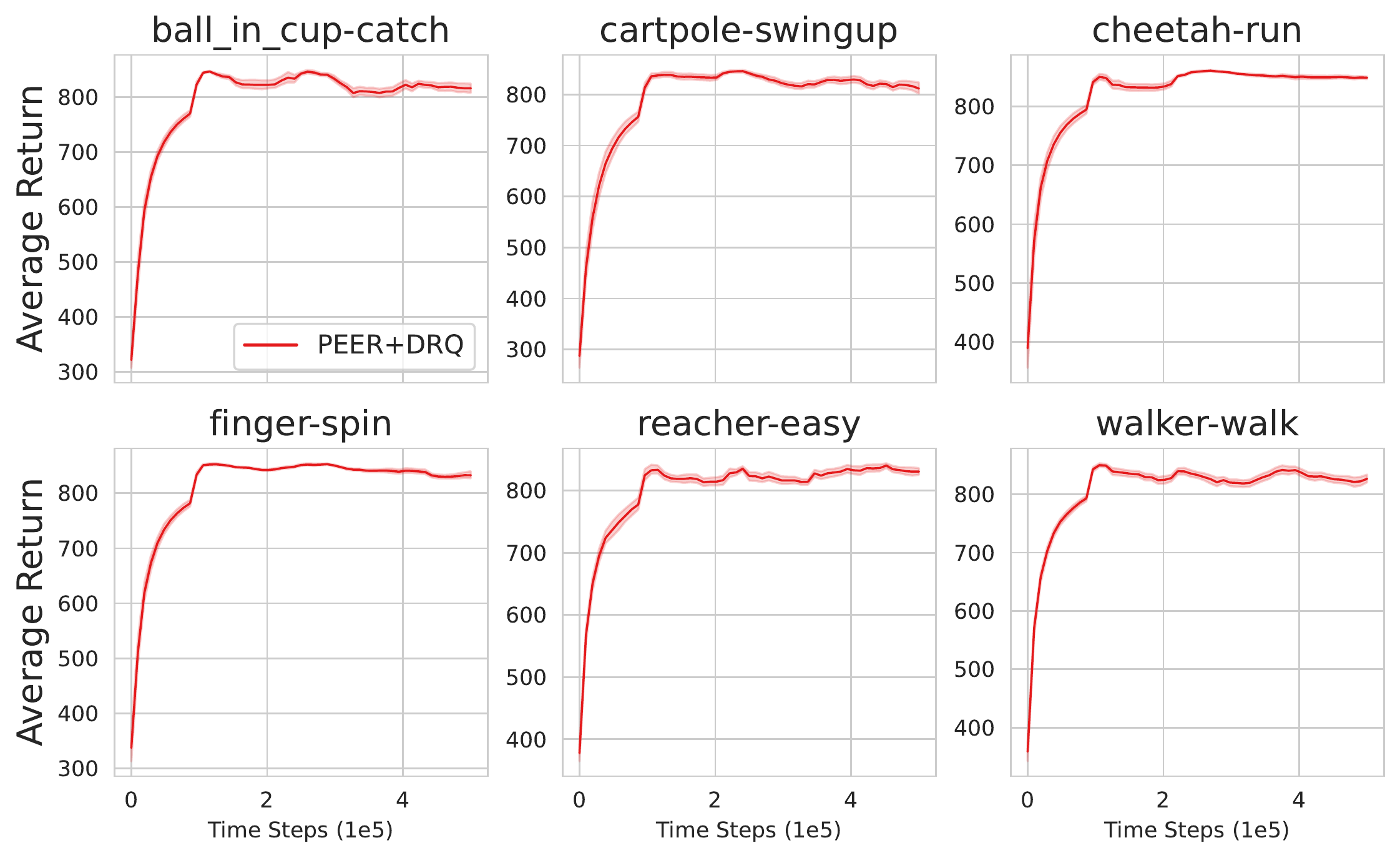}
	\caption{\label{app fig: additional results drq}Performance curves for PEER (coupled with DrQ) on DMControl suite. The shaded region represents half the standard deviation of the average evaluation over 10 seeds. The curves are smoothed by moving average.} 
\end{figure*}
\end{document}